\documentclass{article}
\usepackage{color}
\usepackage{xcolor}
\usepackage{colortbl}
\usepackage{textcomp}
\usepackage{float}

\usepackage{subcaption}
\usepackage{microtype}
\usepackage{graphicx}
\usepackage{subcaption}

\usepackage{booktabs} 
\usepackage{multirow} 
\usepackage{titlesec}
\usepackage{hyperref}

\usepackage[accepted]{icml2026}

\usepackage{amsmath}
\usepackage{amssymb}
\usepackage{mathtools}
\usepackage{amsthm}

\usepackage[capitalize,noabbrev]{cleveref}

\theoremstyle{plain}
\newtheorem{problem}{Problem}
\newtheorem{theorem}{Theorem}[section]

\newtheorem{corollary}[theorem]{Corollary}
\theoremstyle{definition}

\theoremstyle{remark}

\usepackage[textsize=tiny]{todonotes}

\usepackage{longtable}
\usepackage{booktabs}
\usepackage{pdflscape}
\usepackage{array}

\icmltitlerunning{\texttt{IAPO}: Information-aware Policy Optimization for Token-Efficient Reasoning}

\definecolor{paleYellow}{HTML}{F4EEC3}
\definecolor{palePink}{HTML}{EFE0DE}
\definecolor{palePurple}{HTML}{CDCDE2}
\begin{document}

\twocolumn[
\icmltitle{\texttt{IAPO}: Information-Aware 
Policy Optimization for Token-Efficient 
Reasoning}



\icmlsetsymbol{equal}{*}

\begin{icmlauthorlist}
\icmlauthor{Yinhan He}{equal,uva}
\icmlauthor{Yaochen Zhu}{equal,uva}
\icmlauthor{Mingjia Shi}{uva}
\icmlauthor{Wendy Zheng}{uva}
\icmlauthor{Lin Su}{li}
\icmlauthor{Xiaoqing Wang}{li}
\icmlauthor{Qi Guo}{li}
\icmlauthor{Jundong Li}{uva}
\end{icmlauthorlist}

\icmlaffiliation{uva}{University of Virginia, Charlottesville, VA, USA}
\icmlaffiliation{li}{LinkedIn Inc., Sunnyvale, CA, USA}

\icmlcorrespondingauthor{Jundong Li}{jl6qk@virginia.edu}


\vskip 0.3in
]



\printAffiliationsAndNotice{\icmlEqualContribution} 

\begin{abstract}
Large language models increasingly rely on long chains of thought to improve accuracy, yet such gains come with substantial inference-time costs. We revisit token-efficient post-training and argue that existing sequence-level reward-shaping methods offer limited 
control over how reasoning effort is allocated across tokens. To bridge the gap, we propose \texttt{IAPO}, an information-theoretic post-training framework that assigns token-wise advantages based on each token's conditional mutual information (MI) with the final answer. This yields an explicit, principled mechanism for identifying informative reasoning steps and suppressing low-utility exploration. We provide a theoretical analysis showing that our \texttt{IAPO} can induce monotonic reductions in reasoning verbosity without harming correctness.
Empirically, \texttt{IAPO} consistently improves reasoning accuracy while reducing reasoning length by up to 47\%, outperforming existing token-efficient RL methods across various reasoning datasets. Our results demonstrate that information-aware advantage shaping is a powerful and general direction for token-efficient post-training. 
The code is available at \url{https://github.com/YinhanHe123/IAPO}. 
\end{abstract}
\section{Introduction}

\begin{figure}
    \centering
    \includegraphics[width=\linewidth]{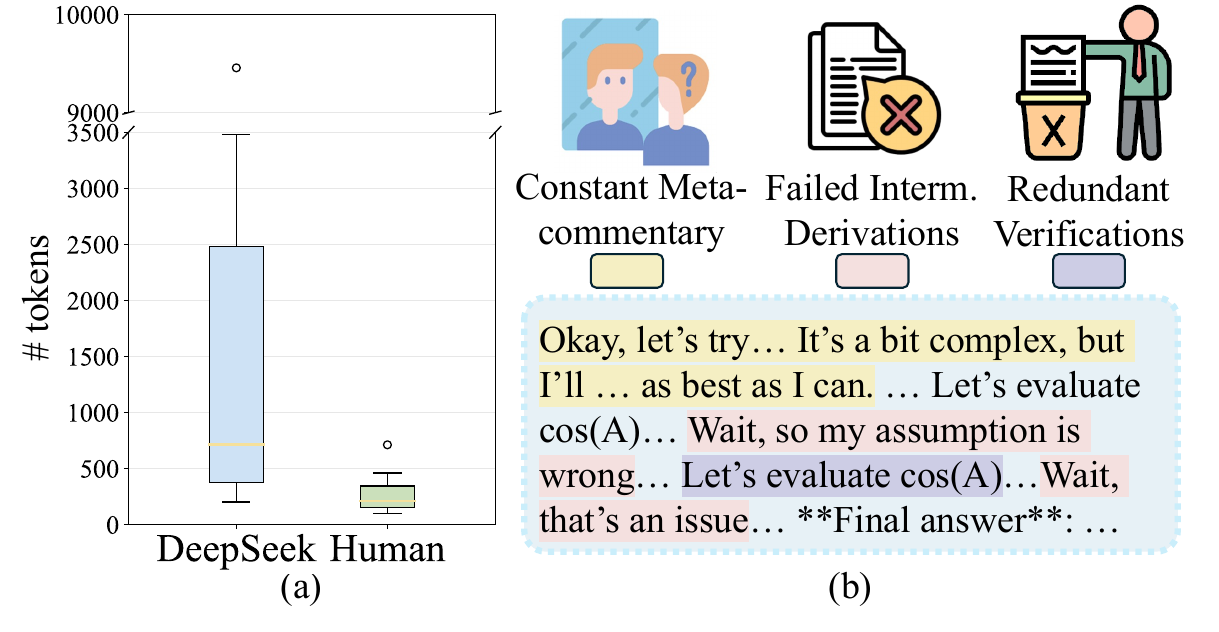}
    \caption{Reasoning verbosity of RL post-trained LLMs. (a) Comparison of reasoning length between LLM (DeepSeekR1-Distilled-Qwen-1.5B~\cite{deepseekai2025deepseekr1incentivizingreasoningcapability}) and a human volunteer on math problems~\cite{lightman2023lets}.
(b) Illustration of why the reasoning generated by the LLM are unnecessarily verbose. 
} 
    \label{fig:1}
    \vspace{-0.1in}
\end{figure}



Reinforcement learning (RL)-based post-training has become one of the most effective tools for strengthening the reasoning capabilities of large language models (LLMs)~\cite{kumar2025llm}. 
Recently, group relative policy optimization (GRPO)~\cite{shao2024deepseekmath} and its variants have proven especially powerful: by contrasting multiple sampled solutions and explicitly rewarding reasoning chains that lead to better answers, these methods incentivize deeper, more structured multi-step reasoning, thus delivering substantial gains across diverse reasoning benchmarks.

However, GRPO also comes with an often-overlooked cost. 
Post-trained LLMs tend to produce \emph{more} reasoning than necessary.
While additional reasoning steps enable more computational power of the base LLM, the generated content could be redundant, circular, or uninformative.
As shown in Fig.~\ref{fig:1}, DeepSeekR1-Distilled-Qwen-1.5B~\cite{deepseekai2025deepseekr1incentivizingreasoningcapability} and a Ph.D.-level volunteer both achieve perfect accuracy on 20 MATH-500 problems~\cite{lightman2023lets}, yet the model generates substantially longer reasoning (1,658 vs. 264 tokens on average). 
Such verbosity inflates inference latency and computational cost, which scale quadratically with sequence length, while offering diminishing returns in correctness.
This mismatch motivates the central question: \emph{Can we retain the reasoning strength achieved by RL post-training while reducing unnecessary reasoning tokens?}

Several recent approaches have attempted to answer this question by shaping advantages in GRPO~\cite{shao2024deepseekmath} to favor brevity~\cite{tan2025gtpo, shrivastava2025sample, lee2025token, yu2025dapo}. However, the semantics of the generated reasoning are overall overlooked. 
Generally, existing methods fall into two broad categories.
Length-based methods~\cite{shrivastava2025sample, yu2025dapo} favor shorter outputs by assigning uniformly higher advantages to all tokens in concise outputs, regardless of whether they are actually informative.
Position-based methods~\cite{dai2025s, lee2025token, yu2025dapo} penalize later tokens via position-dependent advantage decay, even if some later tokens can be crucial for producing correct answers.
Both paradigms share a fundamental limitation: \emph{they are content-agnostic}, i.e., they determine a token's advantage by completion length or token position, instead of the information it contributes to the correctness of the final answer. Consequently, current approaches struggle to 
distinguish essential reasoning from redundant tokens.

To overcome this limitation, we propose \textbf{\underline{I}nformation-\underline{A}ware \underline{P}olicy \underline{O}ptimization} (\texttt{IAPO}), a novel post-training framework that brings information-theoretic awareness into intra-completion token-level advantages for policy optimization. Specifically, \texttt{IAPO} consists of two modules. The \textbf{information-aware advantage shaping module} assigns token-level advantages by quantifying each token's contribution to answer correctness through its conditional mutual information (MI) with the final answer, conditioned on the preceding partial completion. This module provides a principled mechanism for identifying informative reasoning steps and reducing low-utility generation. The \textbf{efficient conditional MI estimation module} efficiently estimates token-level conditional MI values utilizing an early-exit–based conditional MI estimator, together with KV-cache preloading and chunk-wise forwarding techniques, to significantly reduce the computational overhead of MI estimation, making our proposed \texttt{IAPO} computationally tractable at scale.

Empirically, \texttt{IAPO} delivers substantial gains in token efficiency while preserving strong reasoning performance.
Applied to Qwen2.5-7B-Instruct~\cite{qwen2, qwen2.5}, \texttt{IAPO} achieves a \textbf{47\%} reduction in reasoning length relative to the base model and a 43\% improvement over the best existing token-efficient RL baselines on GSM8K~\cite{cobbe2021gsm8k}, without sacrificing accuracy.

Our contributions are as follows. (1) \textbf{Problem Formulation.} We formalize token-efficient post-training as maximizing the ratio of expected task accuracy to completion length, capturing the objective of achieving high reasoning performance with minimal token consumption. (2) \textbf{Theoretical Framework.} We propose \texttt{IAPO}, a token-level advantage-shaping framework that assigns advantages based on conditional MI between tokens and final answers. We provide a theoretical analysis demonstrating how \texttt{IAPO} reduces the expected completion length while maintaining model accuracy. (3) \textbf{Efficient Module Design.} We introduce an early-exit conditional MI estimator along with KV-cache preloading and chunk-wise forwarding techniques, making token-level MI computation tractable for modern LLMs at scale. (4) \textbf{Extensive Empirical Validation.} Across multiple reasoning datasets and model scales, \texttt{IAPO} consistently achieves state-of-the-art token efficiency compared with existing token-efficient RL post-training baselines for LLMs.

\paragraph{Conflict of Interest Disclosure.}
Lin Su, Xiaoqing Wang, and Qi Guo are employed by LinkedIn Inc.; the remaining authors are affiliated with the University of Virginia. This work does not evaluate or rely on any product, model, or service developed by LinkedIn, and the authors declare no financial conflicts of interest.

\section{Preliminaries and Problem Definition}\label{sec: preliminaries}
\noindent\textbf{Preliminaries.}
Let $q \sim Q$ denote an input query (e.g., a math problem), and let $o$ 
be a completion sampled from an LLM policy $\pi_\theta$. Post-training aims to optimize a $\pi_{\theta}$ which maximize the expected rewards $r$ (a value quantifying the quality of $o$), i.e., $\max_{\theta}{\mathbb{E}_{q\sim Q,o\sim \pi_{\theta}(o|q)}[r(o)]}$. 

For each query $q$, GRPO~\cite{shao2024deepseekmath} samples a group of $G$ completions $\{o_i\}_{i=1}^G$
from a frozen policy $\pi_{\theta_{\mathrm{old}}}$. Each completion $o_i$ receives a reward $r_i \in \mathbb{R}$ from a reward model.
 GRPO computes advantages as group-wise normalized rewards
$
\tilde{A}_i = \frac{r_i - \mathrm{mean}(\mathbf{r})}{\mathrm{std}(\mathbf{r})},
$
where $\mathbf{r} = \{r_i\}_{i=1}^G$. The normalized advantage is then uniformly assigned to all token positions $t$s in $o_i$ as token-wise advantages 
$
\tilde{A}_{i,t} = \tilde{A}_i , \forall t.
$
The GRPO objective maximizes a clipped policy-gradient surrogate~\cite{grondman2012survey} with KL regularization:
$$
\begin{aligned}
J_{\mathrm{GRPO}}(\theta)
= \mathbb{E}_{q, \{o_i\}} \Bigg[
\frac{1}{G} \sum_{i=1}^G \frac{1}{|o_i|}
\sum_{t=1}^{|o_i|}
\min\Big(
\rho_{i,t}(\theta)\,\tilde{A}_{i,t}, \\
\mathrm{clip}(\rho_{i,t}(\theta), 1-\varepsilon, 1+\varepsilon)\,\tilde{A}_{i,t}
\Big)
- \beta\, D_{\mathrm{KL}}(\pi_\theta \,\|\, \pi_{\mathrm{ref}})
\Bigg],
\end{aligned}
$$
where
$
\rho_{i,t}(\theta)
= \frac{\pi_\theta(o_{i,t} \mid q, o_{i,<t})}
{\pi_{\theta_{\mathrm{old}}}(o_{i,t} \mid q, o_{i,<t})}.
$ While GRPO~\cite{shao2024deepseekmath} effectively improves reasoning accuracy, its uniform token-level advantage fails to distinguish informative tokens from redundant ones, often leading to verbose completions. 

\begin{problem}[Token-Efficient Post-Training]\label{thm: problem}
Given a query–answer distribution $(Q,Y)$, we aim to learn a policy
$\pi_\theta$ that maximizes reasoning accuracy per generated token for
samples $(q,y)$ drawn from $(Q,Y)$:
\begin{equation}
\begin{aligned}
\max_{\theta} \quad
& \frac{\mathbb{E}_{q\sim Q,\,o \sim \pi_\theta}
\left[\mathbb{I}\{o \mathrm{~is~ correct}\}\right]}
{\mathbb{E}_{q\sim Q,\,o \sim \pi_\theta}\left[|o|\right]} \\
\text{s.t.} \quad
& \mathbb{E}_{q\sim Q,\,o \sim \pi_\theta}
\left[\mathbb{I}\{o \mathrm{~is~ correct}\}\right] \ge \tau ,
\end{aligned}
\end{equation}
\end{problem}
Here, $\tau$ denotes a minimum effectiveness threshold that prevents
degenerated policies, e.g., trivially short but incorrect completions, from
being mistakenly regarded as token-efficient. This formulation captures the central objective of token-efficient reasoning: preserving task-relevant
information while eliminating redundant token generation.
\section{Proposed Methodology}\label{sec:method}
\begin{figure}
    \centering
    \includegraphics[width=\linewidth]{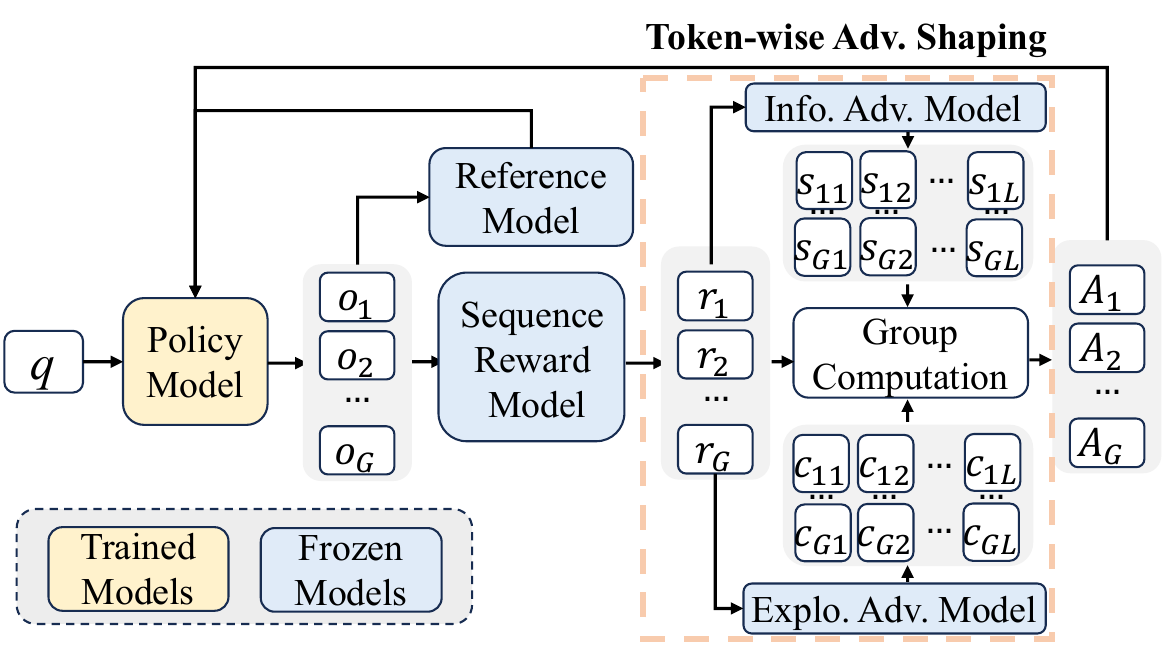}
    \caption{Illustration of the information-aware advantage shaping module, where $s_{i,j}$ and $c_{i,j}$ are token-wise advantages of the informativeness level and exploration adjustment of the $j$th token in the $i$th completion $o_i$ of the completions group $\{o_i\}_{i=1}^G$.}
    \label{fig:overview}
    \vspace{-0.2in}
\end{figure}
\label{sec:method} 
We first present an overview of the \texttt{IAPO} framework, followed by an illustration of its two critical modules: the information-aware advantage shaping module and the efficient conditional MI estimation module. 
\subsection{Overview}
We address the token-efficient post-training problem defined in Problem~\ref{thm: problem} by explicitly encouraging models to generate \emph{informative} tokens while suppressing redundant ones. We propose \texttt{IAPO} framework (Fig.~\ref{fig:overview}), composed of two modules: The \textbf{information-aware advantage shaping module} assigns token-level advantages based on each token's informativeness (measured via conditional MI w.r.t. the final answer). To prevent training collapse and encourage beneficial exploration, this module incorporates an exploration adjustment metric. However, computing token informativeness faces two fundamental challenges: (1) \textit{inaccessible conditional MI} and (2) \textit{significant computational overhead} for MI estimation. We address these challenges with the proposed \textbf{efficient conditional MI estimation module}, which tackles the first challenge via an \textit{early-exit–based conditional MI estimator} and mitigates the second through our \textit{KV-cache preloading} and \textit{chunk-wise forwarding} techniques.
\subsection{Information-Aware Advantage Shaping Module}
\subsubsection{Informativeness Level of a Token}
\label{sec:token_informativeness}
To quantify how much information a token $o_{i,t}$ from a completion $o_i$ contributes toward producing the final answer $y_i$, we measure its informativeness using the conditional mutual information (MI) $I(y_i; o_{i, t} \mid q, o_{i, <t})$. We adopt this metric for two reasons: (1) \textbf{Semantic alignment:} From an information-theoretic perspective~\cite{ash2012information, brillouin2013science}, conditional MI measures the reduction in uncertainty of the final answer after observing $o_{i, t}$, conditioned on the preceding context (i.e., $o_{i, <t}$). This directly reflects how semantically informative $o_{i, t}$ is for determining $y_i$. (2) \textbf{Selective decomposition:} If an entire completion $o_i$ has MI $I(y_i; o_i \mid q)$ with the answer, the chain rule of MI gives
$I(y_i; o_i \mid q) = \sum_{t} I(y; o_{i,t} \mid q, o_{i,<t})$. 
Thus, token-level conditional MI provides a principled decomposition of the total information contribution across the sequence. For any token budget $k$, selecting the $k$ tokens with the highest conditional MI yields the subsequence that best preserves the mutual-information signal of the original completion. Formally,
$\hat{o}_i^k = \arg\max_{\tilde{o}_i^k \subset o_i, \, |\tilde{o}_i^k| = k} I(y_i; \tilde{o}_i^k \mid q)$.
We therefore incorporate $s_{i,t}$ as the token-level informativeness score in the advantage shaping of \texttt{IAPO}, where 
\begin{equation}
    s_{i,t} = I(y_i; o_t \mid q_i, o_{<t}).
\end{equation}
\subsubsection{Token Exploration Adjustments}
While a token's informativeness level quantifies how much it contributes to producing the correct answer, relying solely on this metric for RL-based LLM post-training raises two potential concerns. First, aggressively optimizing for informativeness alone may lead to premature trajectory collapse, in which the model converges to overly concise reasoning patterns that sacrifice its reasoning accuracy. Second, the token-wise informativeness metric solely evaluates tokens' contributions within the current reasoning trajectory, neglecting that the model also needs to explore alternative, potentially more effective reasoning paths.

To address the challenges, we introduce an exploration adjustment term that introduce token-level advantages to explore other potentially effective reasoning paths, thus further maintaining the reasoning accuracy. Specifically, we define the token-level exploration advantage as:
\begin{equation}\label{equ: 4}
c_{i,t} = \begin{cases}
\pi_{\theta_{\text{old}}}(o_{i, t} | q, o_{i, <t}), & \text{if } o_i \text{ is correct}, \\
-\pi_{\theta_{\text{old}}}(o_{i, t} | q, o_{i, <t}), & \text{if } o_i \text{ is incorrect},
\end{cases}
\end{equation}
 where $\pi_{\theta_{\text{old}}}(o_{i, t} | q, o_{i, <t})$ denotes the probability assigned by the frozen policy $\pi_{\theta_{\text{old}}}$ of the LLM to token $o_{i, t}$ 
 before the current policy update step.  
 Intuitively, $\pi_{\theta_{\mathrm{old}}}(o_{i, t} \mid q, o_{i, <t})$ reflects how expected or uncertain the model is about generating token $o_t$. Assigning a positive advantage of $\pi_{\theta_{\mathrm{old}}}(o_{i, t} \mid q, o_{i, <t})$ for correct completions amplifies tokens that the model is already confident about, which \emph{reduces the entropy} of the policy around high-confidence states and therefore \emph{supresses exploration}. Conversely, assigning a negative advantage of $-\pi_{\theta_{\mathrm{old}}}(o_{i, t} \mid q, o_{i, <t})$ for incorrect completions inverts this signal: tokens the model is confident about now reduce the advantage, pushing the model toward less probable alternatives. This \emph{increases the entropy} of the policy and therefore \emph{encourages exploration}. Please see the formal theoretical analysis in Section~\ref{sec: 4.2}.
\subsubsection{Token-wise Advantage Assignment}
\label{sec:token_advantage}
In contrast to the vanilla GRPO~\cite{shao2024deepseekmath} that assigns a uniform advantage to all tokens within a completion, \texttt{IAPO} assigns the advantage of a token using three components: 
\emph{sequence-level reward}, \emph{token-level informativeness level}, and \emph{token-level exploration adjustment}. Specifically, for completion $o_i$ with reward $r_i$, \texttt{IAPO} assigns advantage  
\begin{equation}\label{equ: adv_assign}
\tilde{A}_{i,t} = 
\underbrace{
\mathrm{norm}(r_i, \mathbf{r})
}_{\text{seq-level reward}}
+\alpha
\underbrace{
\mathrm{norm}(s_{i,t}, \mathbf{s}_i)
}_{\text{token-level info.}}
+\beta
\underbrace{
\mathrm{norm}(c_{i,t}, \mathbf{c}_i)
}_{\text{token-level explo.}}
\end{equation}
to token $o_{i,t}$, where $\mathrm{norm}(x, \mathbf{v}) = [x - \mathrm{mean}(\mathbf{v})]/\mathrm{std}(\mathbf{v})$, $\mathbf{r} = \{r_i\}_{i=1}^G$, $\mathbf{s}_i = \{s_{i,t}\}_{t=1}^{|o_i|}$, and $\mathbf{c}_i = \{c_{i,t}\}_{t=1}^{|o_i|}$. Substituting $\tilde{A}_{i,t}$ into the GRPO objective yields the \texttt{IAPO} update. This design provably reduces expected completion length (see Section~\ref{sec: theory}) while achieving satisfying accuracy. Next, we analyze the fundamental challenges of conditional MI computation and design a tailored module to tackle them.
\subsection{Efficient Conditional MI Estimation Module}
\subsubsection{Early-exit-based Conditional MI Estimator}
\begin{figure}
    \centering
    \includegraphics[width=0.8\linewidth]{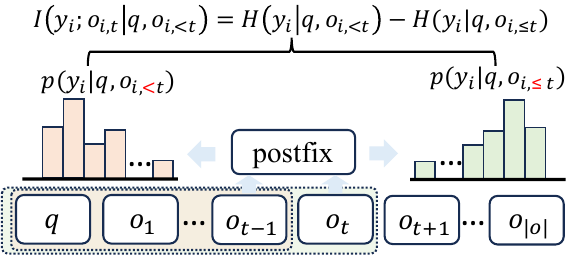}
    \caption{Illustration of early-exit-based conditional MI estimator. We highlight ``$o_{\textcolor{red}{<}t}$'' and ``$o_{\textcolor{red}{\leq}t}$'' to emphasis the inclusion and exclusion of the current examined token $o_t$ in the partial completions.}
    \label{fig:early_exit_est}
\end{figure}
The informativeness metric proposed in Section~\ref{sec:token_informativeness}, defined as the conditional MI $I(y_i; o_{i, t} \mid q, o_{i, <t})$, is not directly observable during training. This is because, computing this quantity requires access to the actual posterior distribution $p(y_i \mid q, o_{i, \leq t})$, which cannot be obtained directly from the model's autoregressive generation process. To address this challenge, we leverage the information-theoretic relationship between conditional MI and entropy:
$$
I(y_i; o_{i,t} \mid q, o_{i, <t}) = H(y_i \mid q, o_{i, <t}) - H(y_i \mid q, o_{i, \leq t}).
$$
This formulation expresses the informativeness of the token $o_{i,t}$ as the reduction in uncertainty about the answer $y_i$ after observing $o_{i,t}$, given the preceding context. To estimate this value efficiently, we propose an early-exit--based approach that leverages the model's ability to generate answers from intermediate reasoning states. For each prefix $(q, o_{i, \leq t})$, we append a \textit{lightweight answer-generation postfix prompt} (e.g., ``\texttt{</think><answer>}'') that instructs the model to immediately produce the final answer without further reasoning. Specifically, we obtain two answer distributions. First, before token $o_{i,t}$, we feed the prefix $(q, o_{i, <t})$ concatenated with the postfix prompt to the LLM and extract the probability distribution over answers from the logits at the final token position, yielding $H(y_i \mid q, o_{i, <t})$. Second, after generating the token $o_t$, we similarly feed $(q, o_{i, \leq t})$ with the postfix prompt to obtain $H(y_i \mid q, o_{i, \leq t})$. The conditional MI is then approximated as $
 H(y_i \mid q, o_{i, <t}) - H(y_i \mid q, o_{i, \leq t}).
$

This early-exit estimator provides a principled approximation by measuring how much the model's uncertainty about the final answer decreases after generating token $o_{i,t}$. Tokens that substantially reduce answer entropy receive higher informativeness level scores, while redundant tokens that provide little new information receive lower scores. 
\subsubsection{Training Acceleration Techniques}
\label{sec:implementation}
Naively estimating the conditional MI for all tokens in a completion is computationally prohibitive: Each token requires two separate forward passes through the LLM, i.e., one before and one after generating the token, to compute the entropy difference $H(y_i \mid q, o_{i, <t}) - H(y_i \mid q, o_{i, \leq t})$. For a completion of length $L$, the cumulative time complexity becomes $O(\sum_{l=1}^{L} l^2 d) = O(L^3 d)$, where $d$ is the model's embedding dimension. This cubic scaling makes the approach computationally intractable for long reasoning.

\noindent\textbf{KV-Cache Preloading.}
\begin{figure}
    \centering
    \includegraphics[width=0.9\linewidth]{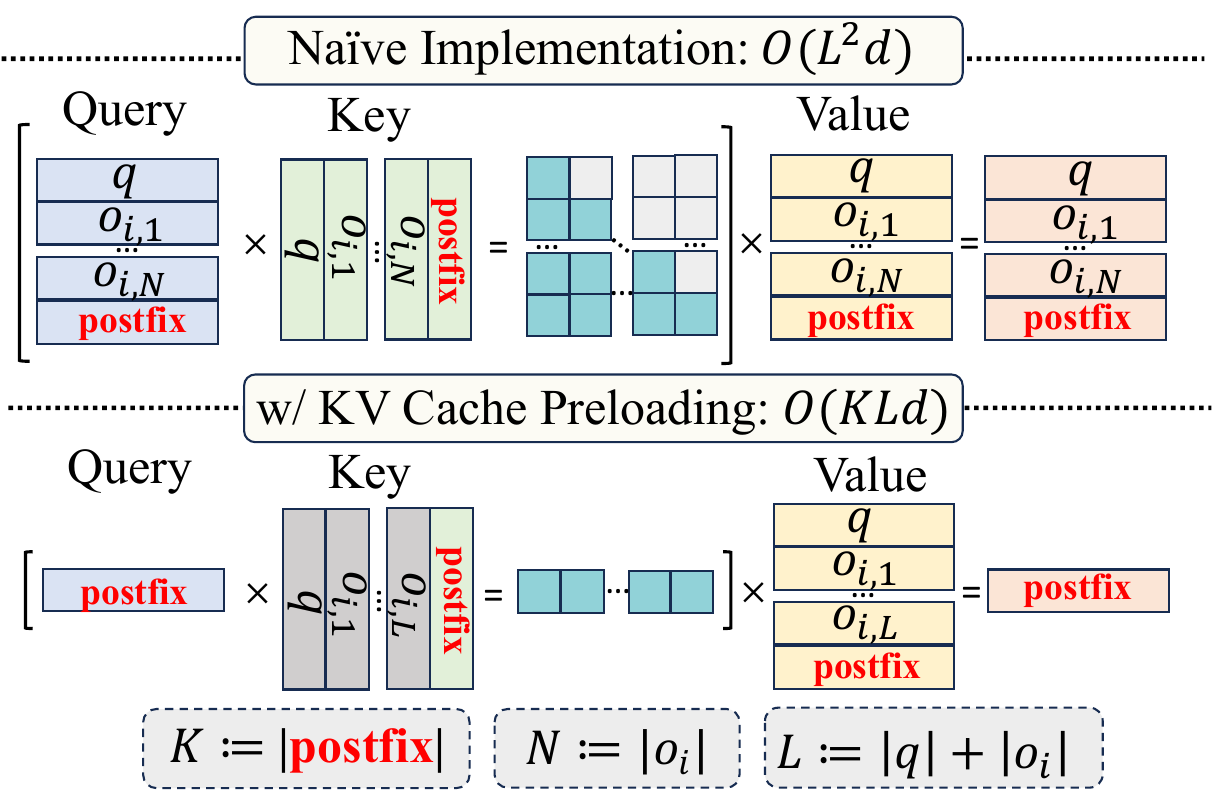}
    \caption{Illustration of the naive implementation and KV cache preloading technique in conditional MI estimation. We highlight the key, query, and values of the prompt postfix in \textcolor{red}{red}. We show the time complexities in the right, where $K$ is the length of the prompt postfix, $N$ is $|o_i|$, $L$ is $|q|+|o_i|$, and $d$ is the embedding dimension of the LLM. The technique is significantly faster than the naive implementation since the condition $K\ll L$ holds.}
    \label{fig:kv_preloading}
    \vspace{-0.15in}
\end{figure}
We propose \emph{KV-cache preloading} for fast conditional MI estimation. Specifically, we first perform \textit{a single forward pass} over the full completion
$(q, o_i)$ and store the resulting key--value (KV) caches at all transformer layers.
These cached states fully encode the autoregressive prefixes $(q, o_{i,\leq t})$
for every token position $t$.
Crucially, when estimating the entropy at an intermediate position, the model
is \emph{not} re-invoked on the textual prefix $(q, o_{i,<t})$.
Instead, the forward pass takes as input only the preloaded KV cache
corresponding to the prefix and the short postfix prompt (e.g., ``\texttt{</think><answer>}'') used to elicit an answer distribution.
Therefore, all attention computations over the shared prefix tokens are reused exactly, and the cost of each entropy evaluation depends \textit{only} on the postfix length,
rather than the full prefix length. This design 
eliminates redundant prefix processing entirely of conditional MI estimation from requiring
full forward passes per token to lightweight cache continuations.

\noindent\textbf{Chunk-wise LLM Forwarding.}
While KV-cache preloading removes redundant prefix computation, invoking the LLM sequentially for each token position would still incur nontrivial overhead. 
To further improve efficiency, we introduce chunk-wise LLM forwarding. This technique partitions each completion into contiguous token chunks and estimates conditional MIs for all tokens within a chunk simultaneously using a single batched forward pass. For all token positions within each chunk, we reuse the same KV cache states for the shared prefix and batch the corresponding postfix continuations. Chunk-wise forwarding amortizes the costs of model invocations and memory accesses across multiple tokens, substantially reducing per-token overhead.

\textbf{\textit{Remark on Time Complexity Improvements}.} The proposed KV-cache preloading technique substantially reduces computational overhead. By caching intermediate states, we eliminate redundant prefix computations and reduce the complexity to $O((K^3 + L^2)d)$, where $K$ denotes the length of the postfix prompt (``$\texttt{</think><answer>}$'') and $L$ is $|o|+|q|$. Since $K \ll L$ in practice, this represents a substantial improvement over the naive $O(L^3 d)$ approach.

Our chunk-wise LLM forwarding further amortizes costs by batching token evaluations. By partitioning the sequence into $C$ chunks and processing multiple positions simultaneously, we achieve an effective complexity of $O(K^3 d \cdot L/C + K^2 L^2 d / (2C) + L^2 d)$. This design reduces the per-token overhead by a factor of $C$ while maintaining exact computation of informativeness level scores, making \texttt{IAPO} practical for training with long reasoning sequences.

\section{Theoretical Analysis}\label{sec: theory} 
Here, we show that our proposed \texttt{IAPO} reduces completion lengths and enables well-adjusted RL exploration, thereby maintaining the reasoning accuracy of the LLMs. 
\subsection{Completion Lengths Reduction}
We show that, under a small policy step, the difference in expected completion length between \texttt{IAPO} and vanilla GRPO~\cite{shao2024deepseekmath} depends on the covariance between the completion length $L(o)$ and a key quantity $S(o)$. 
Here, $S(o)$ captures the total informativeness-weighted policy gradient signal accumulated across all tokens in $o$. 
\begin{theorem}
\label{thm:length_covariance}
Given an LLM $\pi_0$, let $L_{\text{GRPO}}$ and $L_{\text{\texttt{IAPO}}}$ denote the expected completion lengths under $\pi_{\text{GRPO}}$ and $\pi_{\text{\texttt{IAPO}}}$, which are one-step updated policy models given by GRPO and \texttt{IAPO} upon the original policy $\pi_0$, respectively. For sufficiently small step size $\eta$, we have
\begin{equation}
L_{\text{\texttt{IAPO}}} - L_{\text{GRPO}}
\;\propto\;
\mathrm{Cov}_{o \sim p_{\text{GRPO}}}\bigl(L(o),\, S(o)\bigr),
\end{equation}
where $S(o)$ is the informativeness-weighted accumulated token-level gradient induced by the \texttt{IAPO} advantage.
\end{theorem}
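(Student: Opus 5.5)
We would prove Theorem~\ref{thm:length_covariance} with a first-order (small-$\eta$) perturbation argument. We treat both updates as a single policy-gradient step from the common initialization $\pi_0$, evaluated at $\rho_{i,t}=1$ where the clip is inactive, and we drop the KL term since it is identical for both methods. Write $\theta_{\mathrm{GRPO}}=\theta_0+\eta g_{\mathrm{GRPO}}$ and $\theta_{\texttt{IAPO}}=\theta_0+\eta g_{\texttt{IAPO}}$, where
\[
g=\mathbb{E}_{o\sim\pi_0}\Big[\tfrac{1}{|o|}\textstyle\sum_t \tilde A_t\nabla_\theta\log\pi_0(o_t\mid q,o_{<t})\Big].
\]
By \eqref{equ: adv_assign}, the two advantages differ only by the additive token-level term $\Delta_t=\alpha\,\mathrm{norm}(s_t,\mathbf s)+\beta\,\mathrm{norm}(c_t,\mathbf c)$. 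Hence $g_{\texttt{IAPO}}=g_{\mathrm{GRPO}}+\delta$, with
\[
\delta=\mathbb{E}_{o\sim\pi_0}\Big[\tfrac{1}{|o|}\textstyle\sum_t\Delta_t\nabla\log\pi_0(o_t\mid q,o_{<t})\Big].
\]

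\textbf{Step 1: express the length gap as a directional derivative.} The expected length under $\pi_\theta$ is $\mathcal L(\theta)=\mathbb E_{o\sim\pi_\theta}[L(o)]$. By the score-function identity,
\[
\nabla\mathcal L(\theta)=\mathbb E_{o\sim\pi_\theta}\big[L(o)\nabla\log\pi_\theta(o)\big]=\mathrm{Cov}_{o\sim\pi_\theta}\big(L(o),\nabla\log\pi_\theta(o)\big),
\]
where the second equality holds because $\mathbb E[\nabla\log\pi_\theta(o)]=0$. Taylor-expand $\mathcal L$ around $\theta_{\mathrm{GRPO}}$ rather than around $\theta_0$, so that the expectation naturally lives under $p_{\mathrm{GRPO}}$:
\[
L_{\texttt{IAPO}}-L_{\mathrm{GRPO}}=\mathcal L(\theta_{\mathrm{GRPO}}+\eta\delta)-\mathcal L(\theta_{\mathrm{GRPO}})=\eta\,\big\langle\nabla\mathcal L(\theta_{\mathrm{GRPO}}),\delta\big\rangle+O(\eta^2).
\]

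\textbf{Step 2: define $S$ and pull the inner product inside.} Set
\[
S(o):=\big\langle\nabla\log p_{\mathrm{GRPO}}(o),\delta\big\rangle=\textstyle\sum_t\big\langle\nabla\log\pi(o_t\mid q,o_{<t}),\delta\big\rangle .
\]
This is the accumulated token-level gradient projected on the informativeness-weighted IAPO direction $\delta$, which matches the statement's description of $S(o)$. Since the inner product is linear, Step~1 gives
\[
L_{\texttt{IAPO}}-L_{\mathrm{GRPO}}=\eta\,\mathrm{Cov}_{o\sim p_{\mathrm{GRPO}}}\big(L(o),S(o)\big)+O(\eta^2).
\]
For sufficiently small $\eta$ this yields the claimed proportionality, with positive constant $\eta$.

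\textbf{Main obstacle.} The delicate point is the bookkeeping around which distribution the covariance is taken under and what counts as "small step." Two issues need care.

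First, the normalizations $\mathrm{norm}(\cdot,\mathbf s_i)$ and the $1/|o|$ factors depend on the sampled group. We will treat them as fixed, detached weights inside $\delta$, which is standard because the advantages are stop-gradient quantities.

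Second, $\delta$ is itself defined via $\pi_0$, while $\nabla\log p_{\mathrm{GRPO}}=\nabla\log\pi_0+O(\eta)$. We will absorb that discrepancy into the $O(\eta^2)$ remainder. We also need uniform boundedness of $L(o)$ (finite max length) and of the score, so that the Taylor remainder is controlled. We will state these as standing assumptions: finite vocabulary, bounded length, and smooth logits.

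The interpretation then follows from the sign of the covariance. Because $\mathrm{norm}(s_t,\mathbf s)$ is negative on low-MI (redundant) tokens, the direction $\delta$ lowers their likelihood, so long completions, which carry more such tokens, tend to have negative $S$. That gives a negative covariance and hence a shorter expected length for \texttt{IAPO}.
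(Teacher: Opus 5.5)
Your proposal is correct and is essentially the paper's argument: the paper also writes $\theta_{\texttt{IAPO}}=\theta_{\mathrm{GRPO}}+\eta\,\Delta\theta_s$ and expands to first order around $\theta_{\mathrm{GRPO}}$. It obtains $p_{\texttt{IAPO}}(o)\approx p_{\mathrm{GRPO}}(o)\bigl(1+\eta S(o)\bigr)$ by linearizing and multiplying token probabilities and then renormalizing, which is a hand-expanded version of your score-function identity (your use of $\mathbb{E}_{p_{\mathrm{GRPO}}}[S]=0$ shows the renormalization step is vacuous). The one real difference is that the paper drops the exploration term and builds $\Delta\theta_s$ from $\mathrm{norm}(s_{i,t},\mathbf{s}_i)$ alone, so to match the statement's purely informativeness-weighted $S(o)$ you should take $\beta=0$, or split $\delta$ into its $s$- and $c$-parts using linearity of the covariance in $\delta$.
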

Significantly, $S(o)$ is directly related to the average token-wise informativeness (see Appendix~\ref{app: theory} for details): when a completion contains a high proportion of informative tokens, $S(o)$ is large; conversely, if it includes many low-informativeness tokens (e.g., redundant verifications), $S(o)$ is small. Thus, $S(o)$ can be characterized by the average token-wise informativeness.
Therefore, we can justify that the completion length $L(o)$ and the term $S(o)$ are negatively correlated, as evidenced by the negative correlation between the average token-wise informativeness and $L(o)$. 
\begin{corollary}
\label{cor:shorter}
Given a query $q$ and an LLM, if the average informativeness per token decreases monotonically as a function of completion length, then for sufficiently small policy update step size, we have
$L_{\text{\texttt{IAPO}}} < L_{\text{GRPO}}.$
\end{corollary}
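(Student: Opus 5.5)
We obtain Corollary~\ref{cor:shorter} directly from Theorem~\ref{thm:length_covariance}. The theorem reduces the sign of $L_{\texttt{IAPO}} - L_{\text{GRPO}}$ to the sign of $\mathrm{Cov}_{o\sim p_{\text{GRPO}}}(L(o), S(o))$, so it suffices to show this covariance is strictly negative. First we fix the sign of the proportionality constant in the theorem. It should come from a first-order Taylor expansion in $\eta$ of the log-policy around $\pi_0$, giving $p_{\texttt{IAPO}}(o) \approx p_{\text{GRPO}}(o)\bigl(1 + \eta\alpha (S(o) - \mathbb{E}[S])\bigr)$. We will check that the constant is $\eta\alpha > 0$, so that a negative covariance yields a strict decrease once the $O(\eta^2)$ remainder is dominated, which is possible for sufficiently small $\eta$.

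Second, we express $S(o)$ through the average token-wise informativeness. Following the appendix characterization, we write $S(o) = \frac{1}{|o|}\sum_t \mathrm{norm}(s_t,\mathbf{s})\,g_t$, where $g_t$ is the score-function term. Under the paper's identification, $S(o)$ is a monotone increasing function $\phi$ of $\bar s(o) = \frac{1}{L(o)}\sum_t s_t$, possibly up to a term independent of $L$. The hypothesis says $\bar s$ is a strictly decreasing function $h$ of $L$ for the fixed query $q$. Hence $S(o) = \phi(h(L(o)))$ is a non-increasing function of $L(o)$, and strictly decreasing wherever it is nonconstant.

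Third, we apply the Chebyshev association inequality. If $f$ is non-decreasing and $g$ is non-increasing, then $\mathrm{Cov}(f(X), g(X)) \le 0$, with strict inequality when both are strictly monotone and $X$ is nondegenerate. Taking $X = L(o)$ under $p_{\text{GRPO}}$, $f$ the identity, and $g = \phi\circ h$, we get a negative covariance. The proof is the standard symmetrization $\mathrm{Cov} = \tfrac12\mathbb{E}[(f(X)-f(X'))(g(X)-g(X'))]$ with $X'$ an independent copy. Strictness requires that the completion length is not almost surely constant, which we add as a mild nondegeneracy assumption. Combined with the first step, this gives $L_{\texttt{IAPO}} < L_{\text{GRPO}}$.

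The main obstacle is the second step. $S(o)$ involves gradient terms $g_t$, not just the $s_t$, so we must justify that $S$ is monotone in $\bar s$ alone. The first approach we would try is to assume, as the appendix appears to, that the per-token gradient contributions have comparable, length-independent magnitude. Then $S(o)$ reduces to a positive multiple of $\bar s(o)$ plus a constant. If that fails, we would fall back to requiring only that $\mathbb{E}[S(o)\mid L(o)=\ell]$ is non-increasing in $\ell$. The covariance argument still works after conditioning, since $\mathrm{Cov}(L,S) = \mathrm{Cov}(L, \mathbb{E}[S\mid L])$.
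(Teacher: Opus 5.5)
Your skeleton is the paper's. Theorem~\ref{thm:length_covariance} gives $L_{\texttt{IAPO}}=L_{\mathrm{GRPO}}+\eta\,\mathrm{Cov}_{p_{\mathrm{GRPO}}}(L,S)+o(\eta)$; you then show the covariance is strictly negative and let the first-order term dominate for small $\eta$. Your Chebyshev symmetrization and the identity $\mathrm{Cov}(L,S)=\mathrm{Cov}(L,\mathbb{E}[S\mid L])$ are correct, and they make explicit what the paper merely asserts.

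The genuine gap is the step you flagged, and your first patch fails for a concrete reason. The informativeness term of the \texttt{IAPO} advantage, $\mathrm{norm}(s_{i,t},\mathbf{s}_i)$, is normalized \emph{within the completion}. It is therefore invariant under $s_{i,t}\mapsto s_{i,t}+b_i$, and blind to exactly the quantity $\bar s(o_i)=\mathrm{mean}(\mathbf{s}_i)$ that your hypothesis is about. With your own expression and comparable $g_t\approx\bar g$, you get $S(o)\approx\bar g\cdot\frac{1}{|o|}\sum_t\mathrm{norm}(s_t,\mathbf{s})=0$, not a positive multiple of $\bar s(o)$ plus a constant. That identification would require normalizing across the group, or not at all.

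The theorem's actual definition does no better. There $S(o)=\sum_t g_t=\langle\nabla_\theta\log p_\theta(o),\Delta\theta_s\rangle$, and the scores enter only through the within-completion z-scores inside the direction $\Delta\theta_s$; scores of unsampled trajectories do not enter at all. So shifting each sampled completion's scores by a constant changes how $\bar s$ varies with length while leaving $\Delta\theta_s$, $S$, $\mathrm{Cov}(L,S)$ and $L_{\texttt{IAPO}}$ untouched. A hypothesis that constrains only $\bar s$ therefore cannot, by itself, control the sign of the covariance.

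Your fallback assumes $\mathbb{E}[S\mid L=\ell]$ is non-increasing. To survive the $o(\eta)$ term you actually need it strictly decreasing, with $L$ non-degenerate. Even then it is logically fine but is the conclusion restated as an assumption, and it never uses the stated hypothesis.

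The paper's proof does not close this hole either. From its appendix hypotheses (informative tokens score higher; longer trajectories contain more non-informative tokens) it infers that longer trajectories have strictly smaller ``total score mass'' $\sum_t\beta_{i,t}$, hence smaller $S(o)$. But $\sum_t\beta_{i,t}=0$ identically for every completion, so that mechanism is vacuous.

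A correct version needs an assumption placed directly on $S$, or on how the within-completion z-scores align with the per-token score-function directions. An example is $\mathbb{E}[S\mid L=\ell]$ strictly decreasing under $p_{\mathrm{GRPO}}$. Finally, like the paper, you implicitly set the exploration coefficient to zero. The term $\beta\,\mathrm{norm}(c_{i,t},\mathbf{c}_i)$ adds its own covariance contribution to $L_{\texttt{IAPO}}-L_{\mathrm{GRPO}}$, and the hypothesis says nothing about it.
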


This assumption is mild and natural because, as completions grow longer, they tend to accumulate more low-informativeness tokens, leading $S(o)$ to decrease. This establishes the negative covariance in Theorem~\ref{thm:length_covariance}, ensuring that \texttt{IAPO} reduces the expected completion length.
\subsection{Exploration Adjustment}\label{sec: 4.2}
\texttt{IAPO} additionally incorporates a token-level term
$c_{i,t} = \pi_{\theta_{\text{old}}}(o_{i,t} \mid q, o_{i,<t})$
to regulate exploration. To analyze how it affects exploration, we define
the degree of exploration of an LLM $\pi_0$ under an input sequence $s:=(q, o_{i,<t})$
as the entropy of its prediction distribution, i.e.,
$H(\pi_0(\cdot \mid s))$.

For the policy $\pi_{\texttt{IAPO}}$ obtained by one policy-gradient update
from $\pi_0$, it holds that~\citep{cui2025entropy}
\begin{equation}\label{equ: 8}
\begin{aligned}
    &H(\pi_{\texttt{IAPO}}(\cdot\mid s)) - H(\pi_{0}(\cdot\mid s))
\\
&\approx-\eta\,\mathrm{Cov}_{o_t\sim\pi_0(\cdot\mid s)}
\big(\log\pi_0(o_t\mid s),A(s,o_t)\big).
\end{aligned}
\end{equation}
where $\eta$ is the policy update step size, $A(s,o_t)$ is the advantage assigned to token $o_t$ given the context $s$. 
When the completion prefixed by $s$ yields a correct answer, we set
$A(s,o_t) = \pi_0(o_t \mid s)$. In this case, we have the covariance 
$$
\mathrm{Cov}_{o_t \sim \pi_0(\cdot \mid s)}
\big(\log \pi_0(o_t \mid s), \pi_0(o_t \mid s)\big) > 0,
$$
which, by Eq.~\eqref{equ: 8}, implies
\( H(\pi_{\texttt{IAPO}}(\cdot \mid s)) < H(\pi_0(\cdot \mid s)) \).
Thus, exploration is suppressed, encouraging the policy to remain close to the
current correct reasoning trajectory.

Conversely, when the completion is incorrect, we set
\( A(s,o_t) = -\pi_0(o_t \mid s) \), which reverses the sign of the covariance term.
As a result, Eq.~\eqref{equ: 8} yields
\( H(\pi_{\texttt{IAPO}}(\cdot \mid s)) > H(\pi_0(\cdot \mid s)) \),
thereby increasing the policy entropy and promoting exploration of alternative potentially effective
reasoning trajectories that may lead to correct answers.
\section{Empirical Study}
We first introduce the experiment setup. Then, we present the evaluation results
to answer the following research questions: \textbf{RQ1}: How well can \texttt{IAPO} improve the reasoning efficiency compared with the state-of-the-art token-efficient LLM post-training baselines? \textbf{RQ2}: To what extent does each component of \texttt{IAPO} contribute to the overall reasoning token-efficiency? \textbf{RQ3}: How do hyperparameters such as the coefficient between the $s_{i,t}$ and $c_{i,t}$ affect \texttt{IAPO}’s performance? 
\textbf{RQ4}: Is there a case study to show how \texttt{IAPO} is more token-efficient compared to other baselines?

\subsection{Experiment Settings}
Here, we introduce our experiment settings. For more details such as hardware information, please refer to Appendix~\ref{app: supp_exps}. 

\textbf{Datasets.} We evaluate on three representative mathematics reasoning datasets commonly used for LLM post-training. GSM8K~\cite{cobbe2021gsm8k} contains grade-school–level arithmetic word problems requiring multi-step numerical reasoning. MATH-500~\cite{lightman2023lets} is a competition-level mathematics dataset. DAPO-Math-17k~\cite{yu2025dapo} is a large-scale, diverse math dataset featuring longer solutions than GSM8K and MATH-500.   

\textbf{Baselines.} We adopt state-of-the-art LLM post-training methods for token efficiency as baselines. Specifically, 
 (1) \underline{DAPO}~\cite{yu2025dapo} utilizes an overlong reward shaping technique that assigns zero advantage to overlong sequences. (2) \underline{GFPO}~\cite{shrivastava2025sample} samples more completions during group sampling and only assigns rewards to the shortest completions. 
 (3) \underline{GTPO}~\cite{tan2025gtpo} rewards tokens with high entropy in correct completions while penalizing tokens with low entropy in incorrect completions. (4) \underline{S-GRPO}~\cite{lee2025token} assigns zero advantage to tokens in a completion whose index is larger than a threshold.
\begin{figure}
    \centering
    \vspace{-5pt}
    \includegraphics[width=1\linewidth]{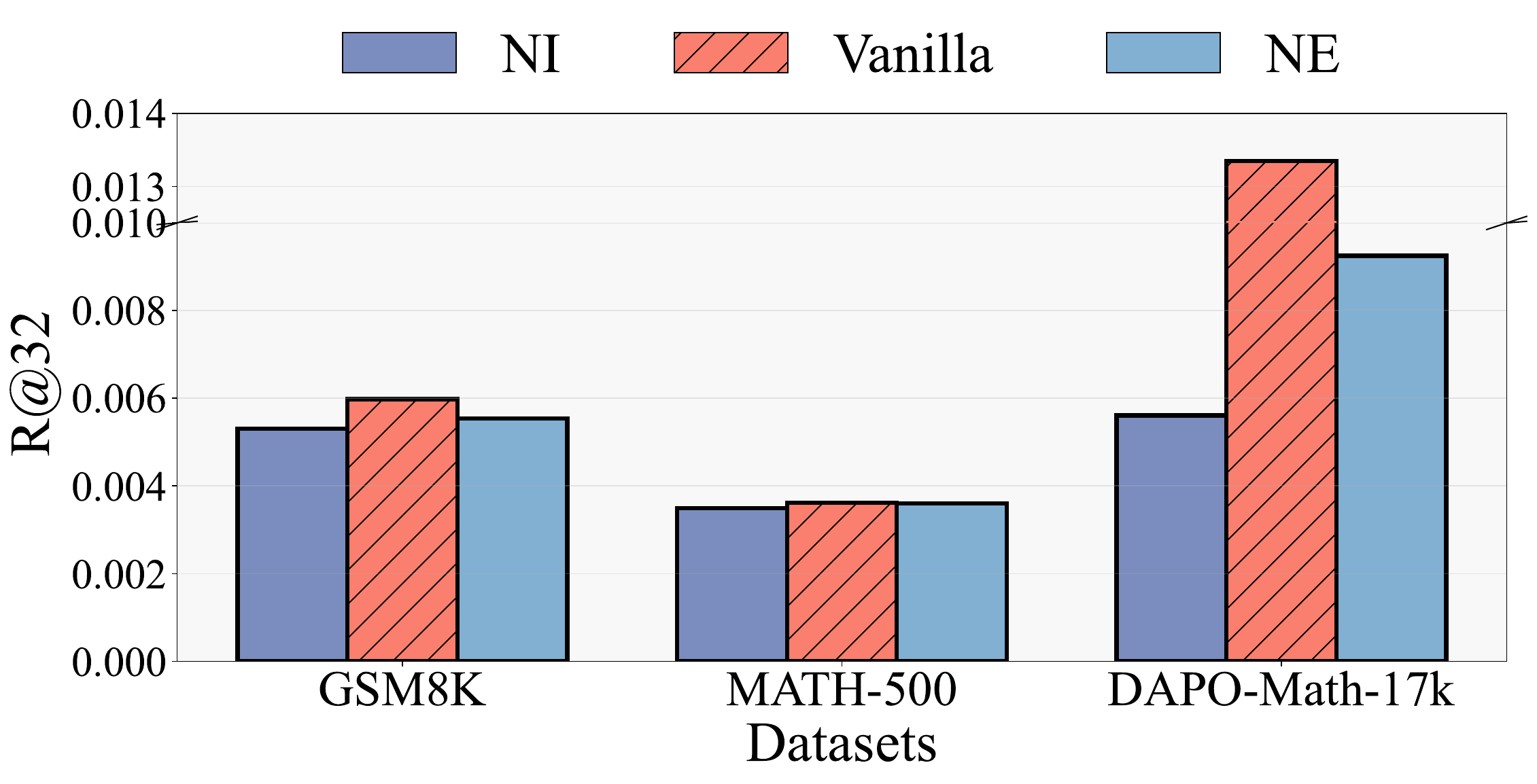}
    \caption{Ablation Study for \texttt{IAPO}.}
    \label{fig:ablation}
    \vspace{-0.25in}
\end{figure}

\textbf{Evaluation Metrics.} We follow well-established work~\cite {yu2025dapo} and adopt Pass@$k$ as the metric for evaluating model effectiveness, which is the percentage of evaluation samples that the LLM answers correctly with $k$ trials. Correspondingly, we utilize Length@$k$, the average length of the $k$ completions across all test queries. Finally, we design a token-efficiency metric, Ratio@$k$, which is the ratio of Pass@$k$ to Length@$k$, indicating how effective each token is at producing the correct answer.

\textbf{Implementation Details.}
We optimize the LLMs using AdamW~\cite{loshchilovdecoupled} with a learning rate of $1\times10^{-6}$ and a decay factor of $0.5$. The completion group size $G$ (see Section~\ref{sec: preliminaries}) is set to $8$, and the coefficient for KL-divergence $D_{\mathrm{KL}}(\pi_\theta \,\|\, \pi_{\mathrm{ref}})$ is fixed at $0.001$. Please see additional implementation details in Appendix~\ref{app: supp_exps}.

\subsection{Effectiveness \& Efficiency of \texttt{IAPO}}
\begin{table*}[h]
\caption{Comparison of \texttt{IAPO} with
baselines across three mathematical reasoning datasets and three LLMs in various scales.
P@$k$, L@$k$, and R@$k$ denote Pass@$k$, Length@$k$, and the ratio Pass@$k$/Length@$k$, respectively. The optimal R@$k$s are in \textbf{bold}, the runner-ups are \underline{underlined}. Results are shown for $k\in\{16, 32\}$; for additional results with $k\in\{2,4,8\}$, please see Appendix~\ref{app: extended_k}.
}\label{tab: 1}
\vspace{3mm}
\centering
\fontsize{7.8pt}{9.5pt}\selectfont
\begin{tabular}{l|cccccc}
\toprule
Method 
& {\fontsize{6.8pt}{9pt}\selectfont \textbf{P@16}} 
& {\fontsize{6.8pt}{9pt}\selectfont \textbf{P@32}} 
& {\fontsize{6.8pt}{9pt}\selectfont \textbf{L@16}} 
& {\fontsize{6.8pt}{9pt}\selectfont \textbf{L@32}}
& {\fontsize{6.8pt}{9pt}\selectfont \textbf{R@16}} 
& {\fontsize{6.8pt}{9pt}\selectfont \textbf{R@32}}\\
\midrule
\multicolumn{7}{c}{\quad \quad \textbf{Dataset GSM8K~\cite{cobbe2021gsm8k}}} \\
\rowcolor{gray!15} Qwen2.5-0.5B-Instruct & 0.4261±{\scriptsize 0.0099} & 0.5722±{\scriptsize 0.0081} & 148.04±{\scriptsize 3.51} & 149.64±{\scriptsize 6.01} & $2.88\times 10^{-3}$ & $3.82\times 10^{-3}$\\
\ \; + DAPO~\cite{yu2025dapo} & 0.8605±{\scriptsize 0.0006} & 0.9050±{\scriptsize 0.0004} & 172.95±{\scriptsize 0.27} & 172.53±{\scriptsize 0.09} & $4.98\times 10^{-3}$ & $5.25\times 10^{-3}$\\
\ \; + GFPO~\cite{shrivastava2025sample} & 0.8590±{\scriptsize 0.0056} & 0.9052±{\scriptsize 0.0043} & 216.26±{\scriptsize 0.31} & 215.99±{\scriptsize 0.12} & $3.97\times 10^{-3}$ & $4.19\times 10^{-3}$\\
\ \; + GTPO~\cite{tan2025gtpo} & 0.8519±{\scriptsize 0.0041} & 0.9085±{\scriptsize 0.0028} & 243.17±{\scriptsize 0.26} & 243.21±{\scriptsize 0.32} & $3.50\times 10^{-3}$ & $3.74\times 10^{-3}$\\
\ \; + S-GRPO~\cite{lee2025token} & 0.8489±{\scriptsize 0.0077} & 0.9017±{\scriptsize 0.0009} & 159.70±{\scriptsize 0.13} & 159.49±{\scriptsize 0.20} & $\underline{5.32\times 10^{-3}}$ & $\underline{5.65\times 10^{-3}}$\\
\rowcolor{green!10} \ \; + \texttt{IAPO} (ours) & 0.8519±{\scriptsize 0.0084} & 0.8979±{\scriptsize 0.0068} & 150.37±{\scriptsize 0.22} & 150.15±{\scriptsize 0.24} & $\mathbf{5.67\times 10^{-3}}$ & $\mathbf{5.98\times 10^{-3}}$\\
\midrule
\rowcolor{gray!15} Qwen2.5-1.5B-Instruct & 0.8160±{\scriptsize 0.0090} & 0.8941±{\scriptsize 0.0029} & 152.66±{\scriptsize 1.53} & 152.59±{\scriptsize 3.76} & $5.35\times 10^{-3}$ & $\underline{5.86\times 10^{-3}}$\\
\ \; + DAPO~\cite{yu2025dapo} & 0.9479±{\scriptsize 0.0059} & 0.9664±{\scriptsize 0.0050} & 169.31±{\scriptsize 0.37} & 169.80±{\scriptsize 0.26} & $\underline{5.60\times 10^{-3}}$ & $5.69\times 10^{-3}$\\
\ \; + GFPO~\cite{shrivastava2025sample} & 0.9510±{\scriptsize 0.0029} & 0.9725±{\scriptsize 0.0022} & 203.85±{\scriptsize 0.65} & 204.05±{\scriptsize 0.58} & $4.67\times 10^{-3}$ & $4.77\times 10^{-3}$\\
\ \; + GTPO~\cite{tan2025gtpo} & 0.9497±{\scriptsize 0.0028} & 0.9674±{\scriptsize 0.0012} & 262.63±{\scriptsize 0.62} & 262.49±{\scriptsize 0.78} & $3.62\times 10^{-3}$ & $3.69\times 10^{-3}$\\
\ \; + S-GRPO~\cite{lee2025token} & 0.9558±{\scriptsize 0.0032} & 0.9735±{\scriptsize 0.0027} & 181.07±{\scriptsize 0.08} & 180.85±{\scriptsize 0.16} & $5.28\times 10^{-3}$ & $5.38\times 10^{-3}$\\
\rowcolor{green!10} \ \; + \texttt{IAPO} (ours) & 0.9512±{\scriptsize 0.0034} & 0.9707±{\scriptsize 0.0028} & 163.32±{\scriptsize 0.46} & 163.51±{\scriptsize 0.56} & $\mathbf{5.82\times 10^{-3}}$ & $\mathbf{5.94\times 10^{-3}}$\\
\midrule
\rowcolor{gray!15} Qwen2.5-7B-Instruct & 0.9793±{\scriptsize 0.0009} & 0.9851±{\scriptsize 0.0016} & 157.08±{\scriptsize 1.76} & 156.43±{\scriptsize 1.05} & $6.23\times 10^{-3}$ & $6.30\times 10^{-3}$\\
\ \; + DAPO~\cite{yu2025dapo} & 0.9778±{\scriptsize 0.0026} & 0.9816±{\scriptsize 0.0026} & 160.36±{\scriptsize 0.73} & 159.83±{\scriptsize 0.21} & $6.10\times 10^{-3}$ & $6.14\times 10^{-3}$\\
\ \; + GFPO~\cite{shrivastava2025sample} & 0.9798±{\scriptsize 0.0016} & 0.9853±{\scriptsize 0.0009} & 160.36±{\scriptsize 0.99} & 160.50±{\scriptsize 0.57} & $6.11\times 10^{-3}$ & $6.14\times 10^{-3}$\\
\ \; + GTPO~\cite{tan2025gtpo} & 0.9765±{\scriptsize 0.0006} & 0.9826±{\scriptsize 0.0006} & 192.99±{\scriptsize 0.10} & 192.85±{\scriptsize 0.10} & $5.06\times 10^{-3}$ & $5.10\times 10^{-3}$\\
\ \; + S-GRPO~\cite{lee2025token} & 0.9790±{\scriptsize 0.0020} & 0.9843±{\scriptsize 0.0004} & 147.67±{\scriptsize 0.70} & 147.40±{\scriptsize 0.42} & $\underline{6.63\times 10^{-3}}$ & $\underline{6.68\times 10^{-3}}$\\
\rowcolor{green!10} \ \; + \texttt{IAPO} (ours) & 0.9735±{\scriptsize 0.0012} & 0.9805±{\scriptsize 0.0009} & 84.03±{\scriptsize 0.12} & 83.97±{\scriptsize 0.04} & $\mathbf{1.16\times 10^{-2}}$ & $\mathbf{1.17\times 10^{-2}}$\\
\bottomrule
\multicolumn{7}{c}{\quad \quad \textbf{Dataset MATH-500~\cite{lightman2023lets}}} \\
\rowcolor{gray!15} Qwen2.5-0.5B-Instruct & 0.3199±{\scriptsize 0.0172} & 0.4478±{\scriptsize 0.0048} & 253.22±{\scriptsize 7.36} & 257.55±{\scriptsize 6.55} & $1.26\times 10^{-3}$ & $1.74\times 10^{-3}$\\
\ \; + DAPO~\cite{yu2025dapo} & 0.4714±{\scriptsize 0.0172} & 0.5488±{\scriptsize 0.0048} & 162.41±{\scriptsize 1.57} & 162.30±{\scriptsize 1.31} & $2.90\times 10^{-3}$ & $3.38\times 10^{-3}$\\
\ \; + GFPO~\cite{shrivastava2025sample} & 0.5118±{\scriptsize 0.0048} & 0.5690±{\scriptsize 0.0095} & 161.56±{\scriptsize 1.66} & 161.55±{\scriptsize 1.90} & $\underline{3.17\times 10^{-3}}$ & $3.52\times 10^{-3}$\\
\ \; + GTPO~\cite{tan2025gtpo} & 0.4714±{\scriptsize 0.0252} & 0.5354±{\scriptsize 0.0082} & 472.01±{\scriptsize 1.23} & 472.58±{\scriptsize 0.20} & $9.99\times 10^{-4}$ & $1.13\times 10^{-3}$\\
\ \; + S-GRPO~\cite{lee2025token} & 0.4680±{\scriptsize 0.0423} & 0.5253±{\scriptsize 0.0378} & 103.37±{\scriptsize 0.39} & 103.31±{\scriptsize 0.12} & $\mathbf{4.53\times 10^{-3}}$ & $\mathbf{5.08\times 10^{-3}}$\\
\rowcolor{green!10} \ \; + \texttt{IAPO} (ours) & 0.5118±{\scriptsize 0.0172} & 0.5960±{\scriptsize 0.0218} & 164.42±{\scriptsize 0.09} & 164.96±{\scriptsize 0.41} & $3.11\times 10^{-3}$ & $\underline{3.61\times 10^{-3}}$\\
\midrule
\rowcolor{gray!15} Qwen2.5-1.5B-Instruct & 0.5152±{\scriptsize 0.0143} & 0.6330±{\scriptsize 0.0390} & 296.65±{\scriptsize 3.09} & 296.89±{\scriptsize 3.28} & $1.74\times 10^{-3}$ & $2.13\times 10^{-3}$\\
\ \; + DAPO~\cite{yu2025dapo} & 0.6869±{\scriptsize 0.0082} & 0.7508±{\scriptsize 0.0190} & 327.34±{\scriptsize 1.24} & 326.89±{\scriptsize 1.11} & $2.10\times 10^{-3}$ & $2.30\times 10^{-3}$\\
\ \; + GFPO~\cite{shrivastava2025sample} & 0.6465±{\scriptsize 0.0165} & 0.7003±{\scriptsize 0.0095} & 374.95±{\scriptsize 2.54} & 374.47±{\scriptsize 0.71} & $1.72\times 10^{-3}$ & $1.87\times 10^{-3}$\\
\ \; + GTPO~\cite{tan2025gtpo} & 0.5084±{\scriptsize 0.0172} & 0.5421±{\scriptsize 0.0126} & 496.05±{\scriptsize 0.69} & 495.83±{\scriptsize 0.33} & $1.02\times 10^{-3}$ & $1.09\times 10^{-3}$\\
\ \; + S-GRPO~\cite{lee2025token} & 0.7273±{\scriptsize 0.0143} & 0.7576±{\scriptsize 0.0218} & 276.75±{\scriptsize 1.02} & 278.08±{\scriptsize 0.37} & $\underline{2.63\times 10^{-3}}$ & $\underline{2.72\times 10^{-3}}$\\
\rowcolor{green!10} \ \; + \texttt{IAPO} (ours) & 0.7172±{\scriptsize 0.0165} & 0.7576±{\scriptsize 0.0165} & 265.12±{\scriptsize 1.35} & 264.28±{\scriptsize 0.38} & $\mathbf{2.71\times 10^{-3}}$ & $\mathbf{2.87\times 10^{-3}}$\\
\midrule
\rowcolor{gray!15} Qwen2.5-7B-Instruct & 0.6734±{\scriptsize 0.0126} & 0.7003±{\scriptsize 0.0190} & 341.46±{\scriptsize 2.43} & 340.73±{\scriptsize 1.95} & $1.97\times 10^{-3}$ & $2.06\times 10^{-3}$\\
\ \; + DAPO~\cite{yu2025dapo} & 0.7811±{\scriptsize 0.0048} & 0.8148±{\scriptsize 0.0048} & 331.61±{\scriptsize 1.34} & 331.22±{\scriptsize 0.54} & $2.36\times 10^{-3}$ & $\underline{2.46\times 10^{-3}}$\\
\ \; + GFPO~\cite{shrivastava2025sample} & 0.7340±{\scriptsize 0.0048} & 0.7811±{\scriptsize 0.0126} & 344.24±{\scriptsize 0.39} & 344.59±{\scriptsize 0.25} & $2.13\times 10^{-3}$ & $2.27\times 10^{-3}$\\
\ \; + GTPO~\cite{tan2025gtpo} & 0.6128±{\scriptsize 0.0126} & 0.6498±{\scriptsize 0.0252} & 453.00±{\scriptsize 0.47} & 453.14±{\scriptsize 0.56} & $1.35\times 10^{-3}$ & $1.43\times 10^{-3}$\\
\ \; + S-GRPO~\cite{lee2025token} & 0.8013±{\scriptsize 0.0126} & 0.8215±{\scriptsize 0.0172} & 335.58±{\scriptsize 0.44} & 335.48±{\scriptsize 0.37} & $\underline{2.39\times 10^{-3}}$ & $2.45\times 10^{-3}$\\
\rowcolor{green!10} \ \; + \texttt{IAPO} (ours) & 0.7744±{\scriptsize 0.0126} & 0.8047±{\scriptsize 0.0048} & 303.04±{\scriptsize 1.30} & 302.24±{\scriptsize 0.75} & $\mathbf{2.56\times 10^{-3}}$ & $\mathbf{2.66\times 10^{-3}}$\\
\bottomrule
\multicolumn{7}{c}{\quad \quad \textbf{Dataset DAPO-Math-17k~\cite{yu2025dapo}}} \\
\rowcolor{gray!15} Qwen2.5-0.5B-Instruct & 0.0867±{\scriptsize 0.0249} & 0.1667±{\scriptsize 0.0094} & 374.85±{\scriptsize 8.83} & 378.65±{\scriptsize 8.64} & $2.31\times 10^{-4}$ & $4.40\times 10^{-4}$\\
\ \; + DAPO~\cite{yu2025dapo} & 0.2467±{\scriptsize 0.0094} & 0.2733±{\scriptsize 0.0094} & 32.59±{\scriptsize 0.18} & 32.22±{\scriptsize 0.12} & $7.57\times 10^{-3}$ & $8.48\times 10^{-3}$\\
\ \; + GFPO~\cite{shrivastava2025sample} & 0.2067±{\scriptsize 0.0411} & 0.2667±{\scriptsize 0.0499} & 16.76±{\scriptsize 0.19} & 16.79±{\scriptsize 0.13} & $\mathbf{1.23\times 10^{-2}}$ & $\mathbf{1.59\times 10^{-2}}$\\
\ \; + GTPO~\cite{tan2025gtpo} & 0.1400±{\scriptsize 0.0163} & 0.1733±{\scriptsize 0.0094} & 15.87±{\scriptsize 0.02} & 15.86±{\scriptsize 0.01} & $8.82\times 10^{-3}$ & $1.09\times 10^{-2}$\\
\ \; + S-GRPO~\cite{lee2025token} & 0.1733±{\scriptsize 0.0189} & 0.2667±{\scriptsize 0.0094} & 17.57±{\scriptsize 0.10} & 17.64±{\scriptsize 0.03} & $9.86\times 10^{-3}$ & $\underline{1.51\times 10^{-2}}$\\
\rowcolor{green!10} \ \; + \texttt{IAPO} (ours) & 0.1933±{\scriptsize 0.0094} & 0.2333±{\scriptsize 0.0094} & 17.52±{\scriptsize 0.24} & 17.48±{\scriptsize 0.09} & $\underline{1.10\times 10^{-2}}$ & $1.33\times 10^{-2}$\\
\midrule
\rowcolor{gray!15} Qwen2.5-1.5B-Instruct & 0.1800±{\scriptsize 0.0163} & 0.2333±{\scriptsize 0.0249} & 387.59±{\scriptsize 9.38} & 386.86±{\scriptsize 6.84} & $4.64\times 10^{-4}$ & $6.03\times 10^{-4}$\\
\ \; + DAPO~\cite{yu2025dapo} & 0.2133±{\scriptsize 0.0249} & 0.2800±{\scriptsize 0.0283} & 38.36±{\scriptsize 0.97} & 38.35±{\scriptsize 0.93} & $\underline{5.56\times 10^{-3}}$ & $7.30\times 10^{-3}$\\
\ \; + GFPO~\cite{shrivastava2025sample} & 0.2133±{\scriptsize 0.0340} & 0.2733±{\scriptsize 0.0094} & 64.51±{\scriptsize 1.06} & 64.59±{\scriptsize 1.22} & $3.31\times 10^{-3}$ & $4.23\times 10^{-3}$\\
\ \; + GTPO~\cite{tan2025gtpo} & 0.2000±{\scriptsize 0.0163} & 0.2533±{\scriptsize 0.0189} & 18.88±{\scriptsize 0.04} & 18.87±{\scriptsize 0.02} & $\mathbf{1.06\times 10^{-2}}$ & $\mathbf{1.34\times 10^{-2}}$\\
\ \; + S-GRPO~\cite{lee2025token} & 0.2200±{\scriptsize 0.0327} & 0.2800±{\scriptsize 0.0163} & 60.36±{\scriptsize 0.75} & 60.11±{\scriptsize 0.52} & $3.64\times 10^{-3}$ & $4.66\times 10^{-3}$\\
\rowcolor{green!10} \ \; + \texttt{IAPO} (ours) & 0.2133±{\scriptsize 0.0411} & 0.3200±{\scriptsize 0.0000} & 41.39±{\scriptsize 0.57} & 41.46±{\scriptsize 0.56} & $5.15\times 10^{-3}$ & $\underline{7.72\times 10^{-3}}$\\
\midrule
\rowcolor{gray!15} Qwen2.5-7B-Instruct & 0.3267±{\scriptsize 0.0340} & 0.3800±{\scriptsize 0.0566} & 449.39±{\scriptsize 4.05} & 446.26±{\scriptsize 2.14} & $7.27\times 10^{-4}$ & $8.52\times 10^{-4}$\\
\ \; + DAPO~\cite{yu2025dapo} & 0.3467±{\scriptsize 0.0249} & 0.4200±{\scriptsize 0.0327} & 445.48±{\scriptsize 1.03} & 444.91±{\scriptsize 0.79} & $7.78\times 10^{-4}$ & $9.44\times 10^{-4}$\\
\ \; + GFPO~\cite{shrivastava2025sample} & 0.3933±{\scriptsize 0.0189} & 0.4800±{\scriptsize 0.0327} & 424.36±{\scriptsize 0.55} & 424.10±{\scriptsize 0.49} & $\underline{9.27\times 10^{-4}}$ & $\underline{1.13\times 10^{-3}}$\\
\ \; + GTPO~\cite{tan2025gtpo} & 0.1667±{\scriptsize 0.0249} & 0.1933±{\scriptsize 0.0340} & 519.29±{\scriptsize 0.26} & 519.03±{\scriptsize 0.11} & $3.21\times 10^{-4}$ & $3.72\times 10^{-4}$\\
\ \; + S-GRPO~\cite{lee2025token} & 0.4067±{\scriptsize 0.0189} & 0.4667±{\scriptsize 0.0094} & 398.34±{\scriptsize 0.13} & 398.02±{\scriptsize 0.28} & $\mathbf{1.02\times 10^{-3}}$ & $\mathbf{1.17\times 10^{-3}}$\\
\rowcolor{green!10} \ \; + \texttt{IAPO} (ours) & 0.3867±{\scriptsize 0.0189} & 0.4267±{\scriptsize 0.0249} & 430.06±{\scriptsize 1.68} & 429.62±{\scriptsize 0.94} & $8.99\times 10^{-4}$ & $9.93\times 10^{-4}$\\
\bottomrule
\end{tabular}
\label{tab:main_results_main}
\vspace{-4mm}
\end{table*}
 We answer \textbf{RQ1} by evaluating the effectiveness and efficiency of our proposed \texttt{IAPO}, as shown in 
 Table~\ref{tab: 1}. We observe that: (1) \textit{Efficiency.} When achieving comparable Pass@$k$ performance, \texttt{IAPO} consistently generates the fewest tokens across datasets and model scales, outperforming all baselines in terms of reasoning length. (2) \textit{Effectiveness.} \texttt{IAPO} attains the highest reasoning effectiveness (Pass@$k$) across almost all experimental settings. In particular, we achieve a Pass@32 gain of more than 0.06 over the runner-up in configurations such as the Qwen2.5-1.5B-Instruct~\cite{qwen2.5} model and MATH-500~\cite{lightman2023lets} dataset. (3) \textit{Effectiveness–Efficiency Trade-off.} 
 When evaluated using the token-efficiency metric Ratio@$k$, \texttt{IAPO} achieves the best (or runner-up) across all settings.

\textit{\textbf{Remark.}} We also evaluate \texttt{IAPO} on non-mathematical reasoning tasks (e.g., \textbf{commonsense reasoning}) to assess its generalizability; see Appendix~\ref{app: non-math}. Our results show that \texttt{IAPO} also achieves the best token efficiency on these tasks.
Additionally, beyond measuring completion length as an efficiency indicator, we also measure \textbf{wall-clock inference time} and find that post-training with \texttt{IAPO} reduces inference time by more than 11\% (see Appendix~\ref{app: wall-clock}).
\subsection{Ablation Study}
Here, we address \textbf{RQ2} by measuring the contribution of different components of our proposed \texttt{IAPO} using two variants: (1) \textbf{IAPO-NI}: removes the token-wise advantage assignment based on token informativeness level (the conditional MI term).
(2) \textbf{IAPO-NE}: replaces the early-exit-based informativeness level estimator with next-token entropy reduction term, i.e., $H(o_{i,t} \mid q, o_{i, <t}) - H(o_{i, t+1} \mid q, o_{i, \le t})$, where the entropies are those of the LLM's probability distributions when generating $o_{i,t}$ and $o_{i,t+1}$ given $\{q, o_{i,<t}\}$ and $\{q, o_{i,\le t}\}$, respectively. We report Ratio@32 for these variants using Qwen2.5-0.5B-Instruct~\cite{qwen2.5, qwen2} in Fig.~\ref{fig:ablation}. We observe that: (1) \texttt{IAPO}'s token efficiency degrades when removing the informativeness level term or substituting it with next-token prediction entropies. 
(2) \textbf{IAPO-NE} has higher token efficiency than \textbf{IAPO-NI}, showing that incorporating conditional MI 
is promising.
\clearpage
\subsection{Parameter Analysis}
\begin{figure}
    \centering
    \vspace{-0.1in}
    \includegraphics[width=1\linewidth]{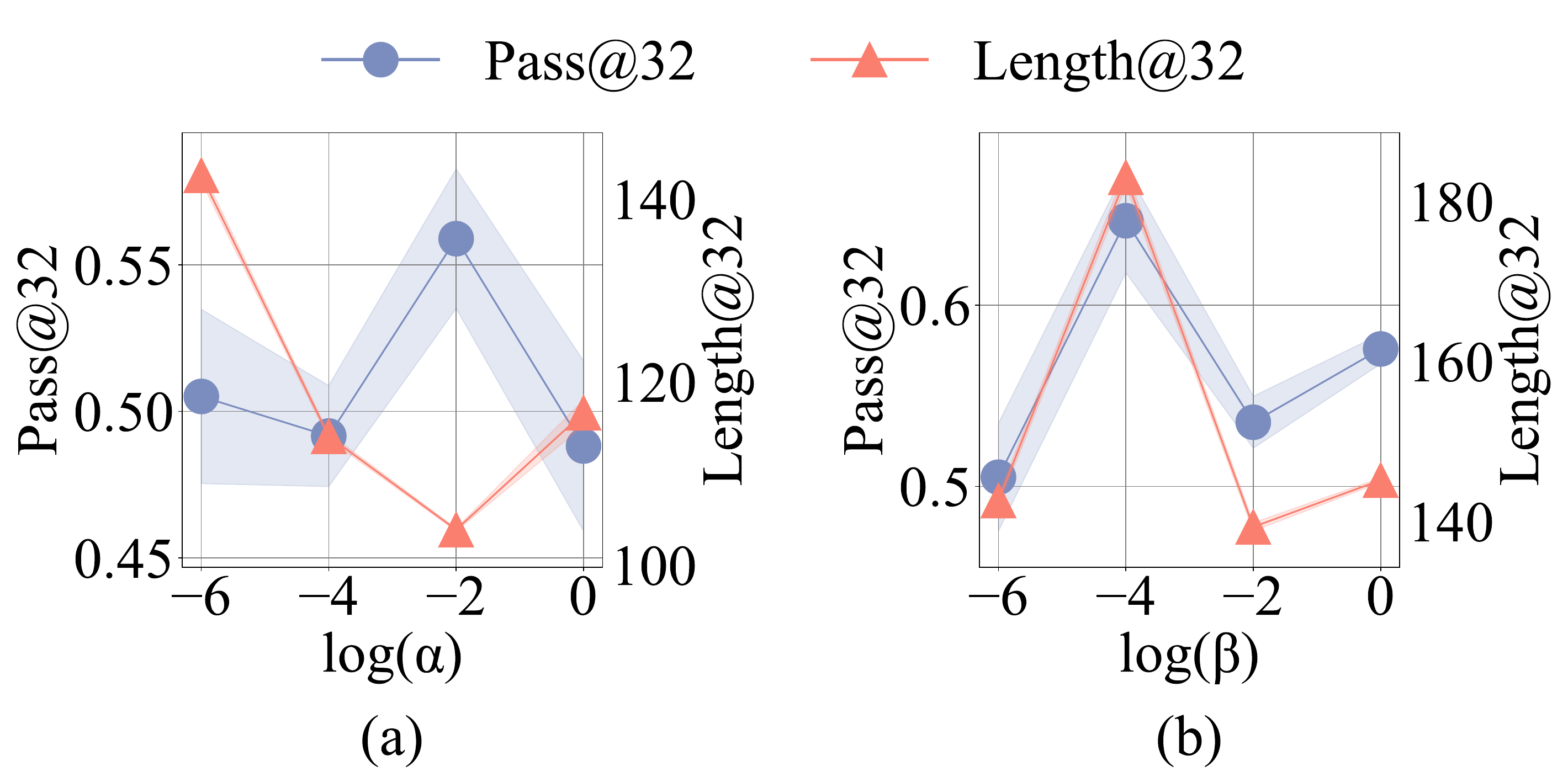}
    \caption{Parameter Analysis for \texttt{IAPO}. }
    \label{fig:param_ana}
    \vspace{-0.1in}
\end{figure}
We answer \textbf{RQ3} by examining the effects of 
two key hyperparameters in \texttt{IAPO}, i.e., the $\alpha$ and $\beta$
 in Equ.~\ref{equ: adv_assign}. We report Pass@32 and Length@32 while varying $\alpha$ (or $\beta$) in $\{10^{-6}, 10^{-4}, 10^{-2}, 1\}$, with the other coefficient fixed at $10^{-6}$, in Fig.~\ref{fig:param_ana}(a) and Fig.~\ref{fig:param_ana}(b), respectively on Qwen2.5-0.5B-Instruct~\cite{qwen2.5} on MATH-500 dataset~\cite{lightman2023lets}. We observe that 
(1) As $\alpha$ increases, reasoning length decreases monotonically before $10^{-2}$, demonstrating the effectiveness of the informativeness level term in reducing verbosity. Although Pass@32 sometimes decreases, its degradation is substantially slower, 
indicating a favorable effectiveness-efficiency trade-off.
(2) As $\beta$ increases, Pass@32 can significantly improve, but reasoning length also grows sharply. This suggests that a larger $\beta$ encourages exploration of reasoning, enhancing accuracy at the cost of substantially increased token consumption.
\begin{figure}
    \centering
    \includegraphics[width=1\linewidth]{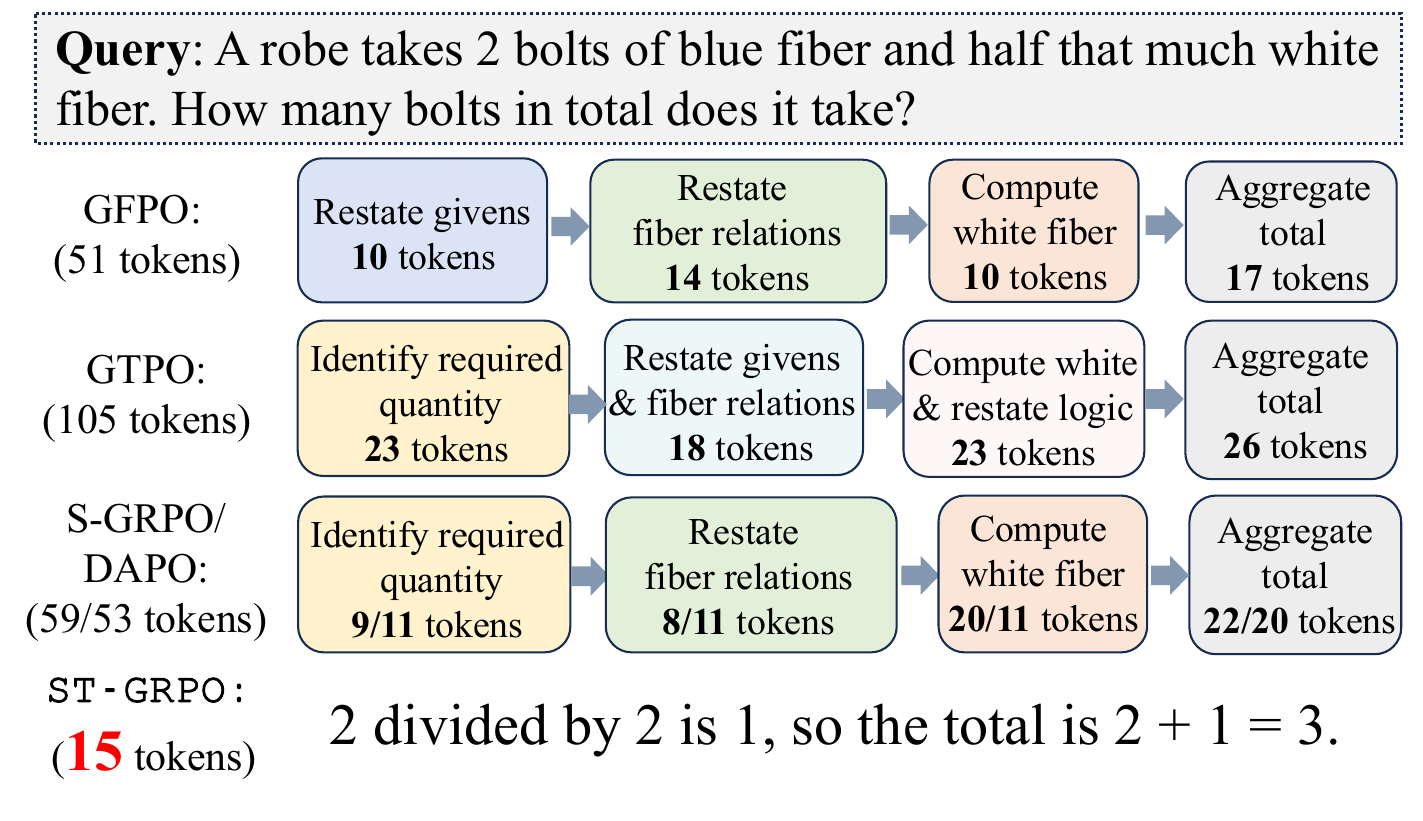}
    \caption{Case study: We show reasoning trajectories generated by Qwen2.5-7B-Instruct~\cite{qwen2.5} post-trained by \texttt{IAPO} and baselines for the same query  from GSM8K~\cite{cobbe2021gsm8k} dataset. \texttt{IAPO} produces a substantially shorter yet sufficient completion by concentrating generation on high-informativeness tokens.}
    \label{fig:case_study}
    \vspace{-0.25in}
\end{figure}
\subsection{Case Study}
We answer \textbf{RQ4} by examining reasoning completions generated by different methods on a query from the GSM8K dataset~\cite{cobbe2021gsm8k}, shown in Fig.~\ref{fig:case_study}. We make two key observations: (1) \texttt{IAPO} produces \textit{substantially shorter reasoning} by concentrating on informative tokens. For the query in Fig.~\ref{fig:case_study},
\texttt{IAPO} generates the correct answer with only 15 tokens, which is 3.4× to 7× shorter than baselines (51–105 tokens). 
(2) \textit{Baselines exhibit distinct verbosity patterns}. GFPO~\cite{shrivastava2025sample}  restates problem givens (10 tokens) and relations (14 tokens). GTPO~\cite{tan2025gtpo} repeatedly restate givens or its high-level reasoning strategies.
DAPO~\cite{yu2025dapo} and S-GRPO~\cite{lee2025token} share similar reasoning logic; they verbosely identify what quantities the problem requires.

\section{Related Work}
\subsection{RL-based Post-training for LLMs}
RL-based post-training methods improve LLM reasoning by optimizing sequence-level rewards, typically derived from correctness or preference signals, using policy-gradient algorithms such as PPO~\cite{schulman2017proximal} and GRPO \cite{shao2024deepseekmath}. Variants including RLHF~\cite{ouyang2022training, christiano2017deep}, RLAIF~\cite{lee2024rlaif, lee2024rlaif}, and DPO~\cite{rafailov2023direct} simplify reward modeling or replace human feedback with AI-based evaluators, improving scalability and stability \cite{bai2022constitutional, rafailov2023direct}. More recent reasoning-focused approaches apply outcome-based or verifier-guided rewards, encouraging correct final answers or stepwise validity without explicit supervision of reasoning chains~\cite{uesato2022solving, lightman2023lets}. 
Although these methods enhance reasoning, their reasoning processes are verbose, consuming large amounts of tokens and generation time.

\subsection{Reasoning Efficiency of LLMs}
Two main approaches improve LLM reasoning efficiency. The first~\cite{cheng2024compressed, sutoken, xu2025softcot, shen2025codi} replaces verbose chain-of-thought with continuous embeddings—implicit reasoning tokens—bypassing explicit text generation and reducing reasoning length. While efficient, these methods sacrifice interpretability by using embeddings rather than explicit text, limiting their use in high-stakes scenarios that require transparent rationales. The second approach, including ours, reduces reasoning length via advantage shaping under GRPO~\cite{shao2024deepseekmath}. However, existing methods uniformly favor shorter completions~\cite{shrivastava2025sample, yu2025dapo} or apply heuristic position-based decay~\cite{dai2025s, lee2025token}, without distinguishing informative from redundant tokens during optimization. Consequently, they fail to align reasoning accuracy with token efficiency.
\section{Conclusion}
We present \texttt{IAPO}, a token-efficient post-training framework that assigns advantages based on information theoretic measurements, achieving up to 36\% token reduction while preserving reasoning accuracy. \texttt{IAPO} provides a principled foundation for efficient LLM reasoning optimization.

\section*{Acknowledgements}
This work was supported in part by the National Science Foundation (NSF) under Grants IIS-2144209, IIS-2223769, BCS-2228534, and CMMI-2411248, and by the Office of Naval Research (ONR) under Grant N000142412636.

\clearpage
\section*{Impact Statement}
This paper advances machine learning by improving the efficiency of large language model reasoning. By reducing reasoning token consumption by up to 47\% while maintaining accuracy, \texttt{IAPO} lowers the compute and energy required for LLM inference, supporting more sustainable AI deployment and reducing the carbon footprint of large-scale usage. The resulting cost reductions also make advanced reasoning capabilities more accessible to researchers, organizations, and individuals with limited computational budgets, helping to democratize AI and lower barriers to entry for educational, scientific, and startup applications. Beyond efficiency, \texttt{IAPO}'s information-theoretic framework offers principled insights into which reasoning steps contribute most to correct answers, supporting more interpretable AI systems and better oversight. Finally, the methodological contributions of information-aware advantage shaping and efficient conditional mutual information estimation are general and may benefit other areas of machine learning beyond language model reasoning.
\bibliography{main}
\bibliographystyle{icml2026}

\clearpage
\onecolumn
\titlespacing*{\section}{0pt}{*1}{*1}
\titlespacing*{\subsection}{0pt}{*1.25}{*1.25}
\titlespacing*{\subsubsection}{0pt}{*1.5}{*1.5}

\setlength{\abovedisplayskip}{10pt}
\setlength{\abovedisplayshortskip}{10pt}
\setlength{\belowdisplayskip}{10pt}
\setlength{\belowdisplayshortskip}{10pt}

\normalsize
\part*{Appendix}
\appendix
\textbf{Organization.} This appendix is organized as follows: Section~\ref{app: related_work} presents supplementary related work on the application of information theory in reinforcement learning and natural language research, providing a broader context for our information-theoretic approach to token-efficient reasoning. Section~\ref{app: theory} presents the theoretical foundations and proofs, including the complete derivation of Theorem~\ref{thm:length_covariance}, demonstrating how \texttt{IAPO} reduces expected completion length through covariance-based analysis, along with Corollary~\ref{cor:shorter} establishing conditions under which \texttt{IAPO} achieves shorter completions than GRPO. Section~\ref{app: supp_exps} provides supplementary experimental results, including detailed implementation specifications (hardware configuration, distributed computing setup, hyperparameters), comprehensive evaluation results across additional metrics (Pass@2, Pass@4, Pass@8 with corresponding Length@$k$ and Ratio@$k$ measurements), training dynamics visualizations showing the evolution of correctness rewards, completion lengths, and efficiency ratios throughout the post-training process, and extensive case studies with token-level heatmaps demonstrating how our conditional mutual information estimator successfully identifies informative tokens while assigning low scores to redundant content such as meta-commentary and repetitive verifications. These supplementary materials provide complete transparency into our experimental methodology and offer deeper insights into \texttt{IAPO}'s mechanisms for achieving token-efficient reasoning.

\section{Supplementary Related Work}\label{app: related_work}
\subsection{Application of Information Theory in Reinforcement Learning}
Information theory has been used to analyze fundamental questions in reinforcement learning. \citet{chen2019information} use information theory to prove lower bounds in batch RL, showing that certain assumptions about data coverage and function approximation are necessary: without them, no algorithm can learn efficiently even with unlimited data. \citet{still2012information} shows that Boltzmann exploration naturally emerges when you minimize the information cost (mutual information) of a policy while maintaining a target reward level. \citet{schmidhuber2015learning} proposes that when a model and controller share algorithmic mutual information, the controller can learn more efficiently by querying the model. While these works apply information theory to understand what's possible in RL (lower bounds), how to explore efficiently, and how to leverage learned models, they focus on different problems than token efficiency in LLM reasoning. Our work takes a different direction: we use conditional mutual information to measure how much each token contributes to getting the correct answer, which lets us train models to generate shorter but still accurate reasoning chains.
\subsection{Token-Level Advantage Assignment in RL post-training for LLMs}
Traditional RL-based post-training methods such as PPO~\cite{schulman2017proximal} and GRPO~\cite{shao2024deepseekmath} face a fundamental challenge in credit assignment: determining which specific tokens contribute to the final outcome. While standard GRPO assigns uniform sequence-level advantages to all tokens~\cite{shao2024deepseekmath}, recent work has explored more sophisticated token-level strategies. PPO employs a learned value network for token-level advantages via GAE~\cite{schulman2015high}, but this introduces complexity and overfitting risks. Tree-based approaches like TEMPO~\cite{tran2025exploiting} and TreePO~\cite{yang2025treerpo} exploit prefix structures in grouped rollouts to compute nonparametric values at branching points, while methods like T-SPMO~\cite{lee2025token} apply token-level prefix matching for fine-grained credit assignment. Another line of work focuses on entropy-based selection: \citet{wang2025beyond} identifies high-entropy ``forking tokens'' as critical decision points. Process Reward Models (PRMs)~\cite{zhang2025linking} provide step-wise supervision by evaluating intermediate reasoning, with recent work like CAPO~\cite{xie2025capo} leveraging LLMs as generative PRMs for token-level critique. In contrast to these approaches that use entropy, tree structure, or learned value functions as proxies for importance, our \texttt{IAPO} directly quantifies each token's contribution through conditional mutual information with the final answer, providing an explicit, principled mechanism that measures how much a token reduces uncertainty about correctness rather than merely reflecting generation uncertainty.
\section{Theory}\label{app: theory}
\renewcommand{\thetheorem}{4.1}
\begin{theorem}[Change in expected length = covariance]\label{thm:length_covariance_proof}
Let $L_{\mathrm{GRPO}}$ and $L_{\texttt{IAPO}}$ denote the expected completion length of a query $q$ under the GRPO and \texttt{IAPO} updated policies, respectively:
\[
L_{\mathrm{GRPO}} \triangleq \mathbb{E}_{o \sim \pi_{\theta_{\mathrm{GRPO}}}}[L(o)],\qquad L_{\texttt{IAPO}} \triangleq \mathbb{E}_{o \sim \pi_{\theta_{\texttt{IAPO}}}}[L(o)].
\]
Let $\theta_{\texttt{IAPO}} = \theta_{\mathrm{GRPO}} + \eta\,\Delta\theta_s$, and define, for a trajectory $o$,
\[
g_t(q,o_{i,t}) \triangleq \nabla_\theta \log \pi_\theta(o_{i,t} \mid q)\Big|_{\theta = \theta_{\mathrm{GRPO}}} \Delta\theta_s,\qquad S(o) \triangleq \sum_{t=1}^{L(o)} g_t(q,o_{i,t}).
\]
Then, for sufficiently small step size $\eta$, we have the first-order relation
\begin{equation}\label{equ: app_thm}
\tfrac{1}{\eta}\bigl(L_{\texttt{IAPO}} - L_{\mathrm{GRPO}}\bigr)
\;\approx\; \tfrac{d}{d\eta} \mathbb{E}_{o \sim p_\eta}[L(o)]\Big|_{\eta=0}
\;=\; \mathrm{Cov}_{o \sim p_{\mathrm{GRPO}}}\bigl(L(o),\,S(o)\bigr),
\end{equation}
where $p_{\mathrm{GRPO}}(o)=p_{\theta_{\mathrm{GRPO}}}(o)$ is the trajectory distribution under GRPO and $p_\eta$ denotes the trajectory distribution induced by $\theta_\eta = \theta_{\mathrm{GRPO}} + \eta\,\Delta\theta_s$. In particular, the sign of the change in expected completion length is determined by the covariance between $L(o)$ and $S(o)$ under the GRPO policy.
\end{theorem}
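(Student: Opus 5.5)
The plan is to treat $\eta\mapsto \mathbb{E}_{o\sim p_\eta}[L(o)]$ as a smooth function of the scalar $\eta$ and apply the score-function (log-derivative) identity. By construction $\theta_{\texttt{IAPO}}=\theta_\eta$ at the given step size, and $p_0=p_{\mathrm{GRPO}}$. The left approximation in \eqref{equ: app_thm} is then a first-order Taylor expansion. For a finite vocabulary and a maximum generation length, $\mathbb{E}_{p_\eta}[L(o)]$ is a finite sum of products of softmax probabilities, hence smooth in $\eta$. This gives $L_{\texttt{IAPO}}-L_{\mathrm{GRPO}}=\eta\,\tfrac{d}{d\eta}\mathbb{E}_{p_\eta}[L]\big|_{\eta=0}+O(\eta^2)$, which justifies the ``$\approx$'' for sufficiently small $\eta$.

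The main step is computing the derivative. Write $\mathbb{E}_{p_\eta}[L]=\sum_o p_\eta(o)L(o)$. Because $L(o)$ does not depend on $\eta$, differentiating under the (finite) sum gives
\[
\tfrac{d}{d\eta}\mathbb{E}_{p_\eta}[L]\Big|_{\eta=0}=\mathbb{E}_{o\sim p_{\mathrm{GRPO}}}\Bigl[L(o)\,\tfrac{d}{d\eta}\log p_\eta(o)\big|_{\eta=0}\Bigr].
\]
Next use the autoregressive factorization $\log p_\eta(o)=\sum_{t=1}^{L(o)}\log\pi_{\theta_\eta}(o_t\mid q,o_{<t})$. The chain rule then gives $\tfrac{d}{d\eta}\log p_\eta(o)|_{\eta=0}=\sum_t\nabla_\theta\log\pi_\theta(o_t\mid q,o_{<t})|_{\theta_{\mathrm{GRPO}}}\Delta\theta_s=S(o)$. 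Here I read the paper's notation $g_t(q,o_{i,t})$ as conditioning on the full prefix. The termination event (the EOS token) is itself one of the factors, so the length is correctly accounted for by the product over $t\le L(o)$.

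To turn $\mathbb{E}[LS]$ into a covariance, I use that the score has zero mean: $\mathbb{E}_{p_\eta}[S]=\sum_o\tfrac{d}{d\eta}p_\eta(o)=\tfrac{d}{d\eta}\sum_o p_\eta(o)=\tfrac{d}{d\eta}1=0$. Hence $\mathbb{E}[LS]=\mathbb{E}[LS]-\mathbb{E}[L]\mathbb{E}[S]=\mathrm{Cov}_{p_{\mathrm{GRPO}}}(L,S)$. The sign claim then follows because $\eta>0$ and the $O(\eta^2)$ remainder is dominated by the linear term whenever the covariance is nonzero.

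I expect the main obstacle to be rigor rather than computation. Interchanging differentiation and summation needs a finite (or dominated) trajectory space, which I get by assuming a maximum generation length, as is done in practice. One also needs the remainder bound to be uniform enough to state ``for sufficiently small $\eta$.'' A second subtlety is that the theorem treats $\Delta\theta_s$ as a fixed direction, namely the extra displacement produced by the informativeness and exploration terms of \eqref{equ: adv_assign} relative to GRPO. I would state explicitly that $\Delta\theta_s$ is held independent of $\eta$. Under that convention the identity is exact at first order, and the interpretation of $S(o)$ as informativeness-weighted is deferred to the discussion preceding Corollary~\ref{cor:shorter}.
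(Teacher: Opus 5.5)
Your proposal is correct and is essentially the paper's first-order perturbation argument: the paper's linearization $p_{\texttt{IAPO}}(o)\approx p_{\mathrm{GRPO}}(o)\bigl(1+\eta\,S(o)\bigr)$ is exactly your score-function identity $\tfrac{d}{d\eta}p_\eta(o)\big|_{\eta=0}=p_{\mathrm{GRPO}}(o)\,S(o)$. The only real difference is where the centering comes from: the paper explicitly renormalizes by $Z(\eta)=1+\eta\,\mathbb{E}_{p_{\mathrm{GRPO}}}[S]$, whereas you note $\mathbb{E}_{p_{\mathrm{GRPO}}}[S]=0$, so that renormalization is a first-order no-op and $\mathbb{E}[L S]$ is already the covariance; your finite-support smoothness argument also makes the paper's chain of $\approx$ steps rigorous.
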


\begin{proof}
We first analyze the parameters relationship between $\theta_{\texttt{IAPO}}$ and $\theta_{\mathrm{GRPO}}$:
\begin{equation}\label{equ: app_param}
\begin{aligned}
\theta_{\texttt{IAPO}} ={}& \theta + \eta\,\tfrac{1}{G}\sum_{i=1}^G \tfrac{1}{|o_i|}\sum_{t=1}^{|o_i|} \hat{A}_{i,t}\,\nabla_\theta \log\pi_\theta(o_{i,t}\mid q,o_{i,<t}) \\
={}& \theta + \eta\,\tfrac{1}{G}\sum_{i=1}^G \tfrac{1}{|o_i|}\sum_{t=1}^{|o_i|} \tfrac{r_i - \mathrm{mean}(\mathbf{r})}{\mathrm{std}(\mathbf{r})}\,\nabla_\theta \log\pi_\theta(o_{i,t}\mid q,o_{i,<t}) \\
& {}+ \eta\,\tfrac{1}{G}\sum_{i=1}^G \tfrac{1}{|o_i|}\sum_{t=1}^{|o_i|} \tfrac{s_{i,t} - \mathrm{mean}(\mathbf{s}_i)}{\mathrm{std}(\mathbf{s}_i)}\,\nabla_\theta \log\pi_\theta(o_{i,t}\mid q,o_{i,<t}) \\
={}& \theta_{\mathrm{GRPO}} + \eta\,\tfrac{1}{G}\sum_{i=1}^G \tfrac{1}{|o_i|}\sum_{t=1}^{|o_i|} \tfrac{s_{i,t} - \mathrm{mean}(\mathbf{s}_i)}{\mathrm{std}(\mathbf{s}_i)}\,\nabla_\theta \log\pi_\theta(o_{i,t}\mid q,o_{i,<t}).
\end{aligned}
\end{equation}
The \texttt{IAPO} parameters can be written as
\begin{equation}\label{equ: app_dtheta}
\theta_{\texttt{IAPO}} = \theta_{\mathrm{GRPO}} + \eta\,\Delta\theta_s,\qquad \Delta\theta_s = \tfrac{1}{G}\sum_{i=1}^G \tfrac{1}{|o_i|}\sum_{t=1}^{|o_i|} \beta_{i,t}\,\nabla_\theta \log\pi_\theta(o_{i,t}\mid q,o_{i,<t})\Big|_{\theta=\theta_{\mathrm{GRPO}}},
\end{equation}
where $\beta_{i,t} = \tfrac{s_{i,t} - \mathrm{mean}(\mathbf{s}_i)}{\mathrm{std}(\mathbf{s}_i)}$ is the normalized informativeness score.

We treat $\eta$ as a small scalar so that the \texttt{IAPO} update is a small perturbation of $\theta_{\mathrm{GRPO}}$. For any $(q,o_{i,t})$, a first-order Taylor expansion of the log-policy around $\theta_{\mathrm{GRPO}}$ gives
\begin{equation}\label{equ: app_taylor_log}
\log \pi_{\theta_{\texttt{IAPO}}}(o_{i,t}\mid q) \;\approx\; \log \pi_{\theta_{\mathrm{GRPO}}}(o_{i,t}\mid q) + \eta\,\nabla_\theta \log\pi_\theta(o_{i,t}\mid q)\Big|_{\theta=\theta_{\mathrm{GRPO}}}\!\Delta\theta_s.
\end{equation}
Exponentiating and linearizing $e^x \approx 1 + x$ yields
\begin{equation}\label{equ: app_pi_lin}
\pi_{\theta_{\texttt{IAPO}}}(o_{i,t}\mid q) \;\approx\; \pi_{\theta_{\mathrm{GRPO}}}(o_{i,t}\mid q)\bigl(1 + \eta\,g_t(q,o_{i,t})\bigr),
\end{equation}
where we define
\begin{equation}\label{equ: app_g_def}
g_t(q,o_{i,t}) \;\triangleq\; \nabla_\theta \log\pi_\theta(o_{i,t}\mid q)\Big|_{\theta=\theta_{\mathrm{GRPO}}}\!\Delta\theta_s.
\end{equation}
For a trajectory $o = (o_1,\dots,o_{L(o)})$ we have
\begin{equation}\label{equ: app_p_traj}
p_\theta(o) = \prod_{t=1}^{L(o)} \pi_\theta(o_{i,t}\mid q),
\end{equation}
so substituting~\eqref{equ: app_pi_lin} and linearizing the product gives
\begin{equation}\label{equ: app_p_iapo}
\begin{aligned}
p_{\texttt{IAPO}}(o) &= \prod_{t=1}^{L(o)} \pi_{\theta_{\texttt{IAPO}}}(o_{i,t}\mid q)
\;\approx\; \prod_{t=1}^{L(o)} \Bigl[\pi_{\theta_{\mathrm{GRPO}}}(o_{i,t}\mid q)\bigl(1+\eta\,g_t(q,o_{i,t})\bigr)\Bigr] \\
&= \underbrace{\prod_{t=1}^{L(o)} \pi_{\theta_{\mathrm{GRPO}}}(o_{i,t}\mid q)}_{=\,p_{\mathrm{GRPO}}(o)} \prod_{t=1}^{L(o)}\bigl(1+\eta\,g_t(q,o_{i,t})\bigr) \\
&\approx p_{\mathrm{GRPO}}(o)\bigl(1 + \eta \textstyle\sum_{t=1}^{L(o)} g_t(q,o_{i,t})\bigr)
\;=\; p_{\mathrm{GRPO}}(o)\bigl(1 + \eta\,S(o)\bigr),
\end{aligned}
\end{equation}
where we defined $S(o) \triangleq \sum_{t=1}^{L(o)} g_t(q,o_{i,t})$.

The expression in~\eqref{equ: app_p_iapo} is \emph{unnormalized}. Let
\[
\tilde p_\eta(o) \triangleq p_{\mathrm{GRPO}}(o)\bigl(1+\eta\,S(o)\bigr),\qquad Z(\eta) \triangleq \sum_o \tilde p_\eta(o) = \sum_o p_{\mathrm{GRPO}}(o)\bigl(1+\eta\,S(o)\bigr).
\]
Then the properly normalized trajectory distribution is
\begin{equation}\label{equ: app_p_eta}
p_\eta(o) \;\triangleq\; \tfrac{\tilde p_\eta(o)}{Z(\eta)} \;=\; \tfrac{p_{\mathrm{GRPO}}(o)\bigl(1+\eta\,S(o)\bigr)}{1 + \eta\,\mathbb{E}_{o\sim p_{\mathrm{GRPO}}}[S(o)]}.
\end{equation}
For small $\eta$, we can again linearize the denominator to obtain
\begin{equation}\label{equ: app_p_eta_lin}
\begin{aligned}
p_\eta(o) &\approx p_{\mathrm{GRPO}}(o)\bigl(1+\eta\,S(o)\bigr)\bigl(1 - \eta\,\mathbb{E}_{o'\sim p_{\mathrm{GRPO}}}[S(o')]\bigr) \\
&\approx p_{\mathrm{GRPO}}(o)\bigl(1 + \eta\bigl(S(o) - \mathbb{E}_{o'\sim p_{\mathrm{GRPO}}}[S(o')]\bigr)\bigr).
\end{aligned}
\end{equation}
Now consider the expected completion length under $p_\eta$:
\[
\mathbb{E}_{o \sim p_\eta}[L(o)] = \sum_o p_\eta(o)\,L(o).
\]
Using~\eqref{equ: app_p_eta_lin} and keeping only first-order terms in $\eta$, we obtain
\begin{equation}\label{equ: app_expected_L}
\begin{aligned}
\mathbb{E}_{o \sim p_\eta}[L(o)]
&\approx \sum_o p_{\mathrm{GRPO}}(o)\bigl(1 + \eta(S(o) - \mathbb{E}_{o'\sim p_{\mathrm{GRPO}}}[S(o')])\bigr) L(o) \\
&= \mathbb{E}_{o\sim p_{\mathrm{GRPO}}}[L(o)] + \eta\,\mathbb{E}_{o\sim p_{\mathrm{GRPO}}}\Bigl[\bigl(S(o) - \mathbb{E}_{o'\sim p_{\mathrm{GRPO}}}[S(o')]\bigr)L(o)\Bigr] \\
&= L_{\mathrm{GRPO}} + \eta\,\Bigl(\mathbb{E}_{o\sim p_{\mathrm{GRPO}}}[L(o)\,S(o)] - \mathbb{E}_{o\sim p_{\mathrm{GRPO}}}[L(o)]\,\mathbb{E}_{o\sim p_{\mathrm{GRPO}}}[S(o)]\Bigr) \\
&= L_{\mathrm{GRPO}} + \eta\,\mathrm{Cov}_{o\sim p_{\mathrm{GRPO}}}\bigl(L(o),\,S(o)\bigr).
\end{aligned}
\end{equation}
By definition of $p_\eta$, the \texttt{IAPO} expected length corresponds to $\eta$ evaluated at the small update step used in Eq.~\eqref{equ: app_dtheta}, so to first order in $\eta$, we have
\[
L_{\texttt{IAPO}} = \mathbb{E}_{o\sim p_\eta}[L(o)] \approx L_{\mathrm{GRPO}} + \eta\,\mathrm{Cov}_{o\sim p_{\mathrm{GRPO}}}\bigl(L(o),\,S(o)\bigr).
\]
Rearranging yields
\[
\tfrac{1}{\eta}\bigl(L_{\texttt{IAPO}} - L_{\mathrm{GRPO}}\bigr) \approx \mathrm{Cov}_{o\sim p_{\mathrm{GRPO}}}\bigl(L(o),\,S(o)\bigr),
\]
which is exactly Eq.~\eqref{equ: app_thm}. This completes the proof.
\end{proof}

\renewcommand{\thetheorem}{4.2}
\begin{corollary}[Keyword-aligned updates shrink expected length]\label{cor:shorter_proof}
Assume that, under the GRPO policy $\pi_{\theta_{\mathrm{GRPO}}}$: (i) the token-wise scores $s_{i,t}$ satisfy that informative tokens have strictly larger scores than non-informative tokens, and (ii) for trajectories $o$ with comparable semantic quality (same label $Y$), longer trajectories contain strictly more non-informative tokens. Then there exists $\eta_0 > 0$ such that for all $0 < \eta < \eta_0$, the \texttt{IAPO} update satisfies
\[
L_{\texttt{IAPO}} \,<\, L_{\mathrm{GRPO}}.
\]
\end{corollary}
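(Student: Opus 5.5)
The plan is to reduce the corollary to a sign statement about the covariance in Theorem~\ref{thm:length_covariance_proof}. That theorem gives
\[
L_{\texttt{IAPO}} - L_{\mathrm{GRPO}} = \eta\,\mathrm{Cov}_{o\sim p_{\mathrm{GRPO}}}\bigl(L(o),S(o)\bigr) + O(\eta^2).
\]
So it suffices to prove $\mathrm{Cov}_{o\sim p_{\mathrm{GRPO}}}(L(o),S(o))<0$. Given that, we take $\eta_0$ small enough that the $O(\eta^2)$ remainder is dominated by $\eta|\mathrm{Cov}|$. The remainder bound should follow from smoothness of $\eta\mapsto\mathbb{E}_{p_\eta}[L]$, since the trajectory space is finite once lengths are capped by the generation budget. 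This makes $p_\eta$ a finite sum of analytic functions of $\eta$.

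The substantive step is to connect $S(o)$ to the informativeness scores. I would first write
\[
S(o)=\sum_t g_t = \Bigl\langle \sum_t \nabla_\theta\log\pi_\theta(o_t\mid\cdot),\ \Delta\theta_s\Bigr\rangle,
\]
and substitute the definition of $\Delta\theta_s$ from Eq.~\eqref{equ: app_dtheta}. Working at the population level (replacing the group average by its expectation), $S(o)$ becomes an expected inner product of score functions weighted by $\beta_{i,t}$.

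To make this tractable, I would adopt a local-orthogonality (``tabular'' or neural-tangent-style) approximation: score vectors of distinct (context, token) pairs are nearly orthogonal, and same-type tokens share aligned directions. Under this approximation, $g_t$ is proportional to the normalized score of token $o_t$, i.e.\ $g_t\approx\kappa\,\beta_t$ with $\kappa>0$, and hence $S(o)\approx\kappa\sum_t\beta_t$.

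Assumption (i) then gives $\beta_t=b_+$ on informative tokens and $\beta_t\le b_-<b_+$ on non-informative tokens. Since the scores are normalized within each completion, they have mean zero, which forces $b_-<0<b_+$ and effectively ties $S(o)$ to the per-token average informativeness.

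Next, I would condition on the label $Y$ and use the law of total covariance:
\[
\mathrm{Cov}(L,S)=\mathbb{E}\bigl[\mathrm{Cov}(L,S\mid Y)\bigr]+\mathrm{Cov}\bigl(\mathbb{E}[L\mid Y],\mathbb{E}[S\mid Y]\bigr).
\]
Within a fixed $Y$, assumption (ii) says that $L$ increases strictly with the number of non-informative tokens, each contributing negatively to $S$, while the informative content is roughly fixed. So $S$ is a strictly decreasing function of $L$, and the conditional covariance is strictly negative whenever $L$ is non-degenerate. This uses the fact that $\mathrm{Cov}(X,f(X))<0$ for strictly decreasing $f$, a Chebyshev/Harris-type inequality.

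The main obstacle is the between-label term together with the justification of $g_t\propto\beta_t$. Neither is implied by (i)--(ii) as stated. The cross-token gradient interference $\langle\nabla\log\pi(o_t),\nabla\log\pi(o_{t'})\rangle$ can have either sign, and the label-level term has no definite sign. I would handle this by making explicit, as part of the hypotheses, the orthogonality approximation and the condition that the between-label term is dominated by, or has the same sign as, the within-label term. The simplest case is a query whose correct completions share one label, which kills the second term.

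With $\mathrm{Cov}<0$ established, the first-order expansion closes the argument for all $0<\eta<\eta_0$.
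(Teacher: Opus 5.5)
Your outline follows the paper's proof: the first-order expansion from Theorem 4.1, then a sign argument for $\mathrm{Cov}_{o\sim p_{\mathrm{GRPO}}}(L(o),S(o))$, then small $\eta$. You are right that the paper's one-line passage from (i)--(ii) to $\mathrm{Cov}<0$ silently needs two things: a link between $S(o)$ and the scores, and control of the between-label term.

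Your within-label step, however, fails for a more basic reason, and the paper's sentence about ``strictly smaller total informativeness score mass $\sum_t\beta_{i,t}$'' has the same defect. The within-completion standardization forces $\sum_t\beta_{i,t}=0$ for every completion, whatever its length. So under $g_t\approx\kappa\beta_t$ you get $S(o)\approx 0$ identically, and the conditional covariance is zero, not negative.

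Concretely, take two score levels $a>b$, with $k$ informative and $m$ non-informative tokens. Then $\beta_+=\sqrt{m/k}$ and $\beta_-=-\sqrt{k/m}$, so $k\beta_++m\beta_-=0$ for every $m$. Adding non-informative tokens lowers the completion mean and raises the informative tokens' standardized weight by exactly the compensating amount. Your sentence ``each contributing negatively to $S$, while the informative content is roughly fixed'' is therefore false, because the informative contribution is not fixed. The step ``mean zero \dots\ ties $S(o)$ to the per-token average informativeness'' is where the argument breaks, since that per-token average of $\beta$ is identically $0$.

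Any repair must make the weight on a trajectory's tokens independent of that trajectory's own standardization, which means aggregating across the group. Your ``same-type tokens share aligned directions'' hypothesis does this if you use it at the group level rather than collapsing it to $g_t\approx\kappa\beta_t$. Suppose $\nabla_\theta\log\pi_\theta(o_t\mid\cdot)\approx u_\pm$ according to whether $o_t$ is informative. Then the zero-sum property gives $\Delta\theta_s=w\,(u_+-u_-)$, with $w=\frac{1}{G}\sum_i\frac{1}{|o_i|}\sum_{t\in\mathcal{I}_i}\beta_{i,t}$, where $\mathcal{I}_i$ is the set of informative positions in $o_i$. By (i), $w>0$ whenever both token types occur. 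For $u_+\perp u_-$ this yields $S(o)=w\bigl(k(o)\|u_+\|^2-m(o)\|u_-\|^2\bigr)$, which is strictly decreasing in $L(o)$ at fixed $k(o)$.

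With that, and with ``fixed informative content within a label'' stated explicitly as a hypothesis (since (ii) does not provide it), the within-label covariance becomes $-w\|u_-\|^2\,\mathrm{Var}(L\mid Y)<0$. Your total-covariance step then goes through, and the only remaining extra hypothesis is the between-label condition you already flagged.
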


\begin{proof}
By Theorem~\ref{thm:length_covariance_proof} we have, for small $\eta$,
\[
L_{\texttt{IAPO}} = L_{\mathrm{GRPO}} + \eta\,\mathrm{Cov}_{o\sim p_{\mathrm{GRPO}}}\bigl(L(o),\,S(o)\bigr) + o(\eta).
\]
Conditions (i)--(ii), together with the within-trajectory normalization $\beta_{i,t} = \tfrac{s_{i,t} - \mathrm{mean}(\mathbf{s}_i)}{\mathrm{std}(\mathbf{s}_i)}$, imply that, at fixed semantic quality, trajectories with larger length $L(o)$ have strictly smaller total informativeness score mass $\sum_t \beta_{i,t}$, and hence strictly smaller $S(o)$. Therefore $L(o)$ and $S(o)$ are negatively correlated under $p_{\mathrm{GRPO}}$, i.e., $\mathrm{Cov}(L(o),S(o)) < 0$. For sufficiently small $\eta$, the first-order term dominates the higher-order terms, which yields $L_{\texttt{IAPO}} < L_{\mathrm{GRPO}}$.
\end{proof}
\renewcommand{\thetheorem}{\arabic{section}.\arabic{theorem}}
\section{Supplementary Experiments}\label{app: supp_exps}
\subsection{Implementation Details}

\noindent\textbf{Remarks on Reasoning Length Measurement.} 
In our work, we employ different length measurement approaches depending on the context and purpose of each analysis. In Fig.~\ref{fig:1}, we compare model-generated reasoning with human-written solutions. Since human solutions are not processed through any LLM tokenizer, we use whitespace-delimited word counts for both human and model outputs to enable fair comparison. In Fig.~\ref{fig:case_study}, we present reasoning trajectories as natural text to illustrate qualitative differences in verbosity. For interpretability and alignment with the displayed text, we report whitespace-delimited word counts rather than subword tokens provided by the LLMs' tokenizers. In contrast, all quantitative results reported in Table~\ref{tab: 1} use the native tokenizer of the corresponding LLM, as this reflects actual computational cost and ensures fair comparison across methods applied to the same base model.

\noindent\textbf{Data Preprocessing.}
We use the full GSM8K~\cite{cobbe2021gsm8k} and MATH-500~\cite{lightman2023lets} datasets.
Due to the large scale of DAPO-Math-17k~\cite{yu2025dapo} (1.79M samples), we subsample the first 7,000 examples for training and validation.
Specifically, the training sets contain 7,473, 400, and 5,600 samples for GSM8K, MATH-500, and DAPO-Math-17k, respectively, while the corresponding validation sets contain 1,319, 100, and 50 samples.
For DAPO-Math-17k, the validation set is randomly sampled from the remaining 1,400 examples after the training split.

\noindent\textbf{Training Configurations \& Implementation Details.}
In addition to the key settings described in the main paper, we provide further details of our post-training procedure.
All baselines and \texttt{IAPO} are trained with gradient accumulation steps of 6 for Qwen2.5-0.5B-Instruct and Qwen2.5-1.5B-Instruct, and 8 for Qwen2.5-7B-Instruct~\cite{qwen2, qwen2.5}.
Gradient clipping is applied with a maximum norm of 1.0.
We tune both the surprise and confidence coefficients over $\{10^{-6}, 10^{-4}, 10^{-2}, 1\}$ and report the best-performing configuration for each dataset–model combination. For Qwen2.5-0.5B-Instruct and Qwen2.5-1.5B-Instruct~\cite{qwen2, qwen2.5}, the per-GPU batch size is set to 32, corresponding to $32/8=4$ data items per device for group completion generation.
For Qwen2.5-7B-Instruct~\cite{qwen2, qwen2.5}, the per-GPU batch size is reduced to 16, corresponding to $16/8=2$ data items per device. We utilize the $H(o_t|q,o_{<t})$ as substitution for the $\pi_{\theta}(o_t|q,o_{<t})$ for better practical significance. We train 10,000 steps for DAPO-Math-17k~\cite{yu2025dapo} dataset, 2 epochs for GSM8K~\cite{cobbe2021gsm8k} and 152 epochs for MATH-500 dataset~\cite{lightman2023lets}. We take the model checkpoints with the top validation correctness reward and conduct evaluation on them, and report the evaluation results of the checkpoint with the highest evaluation R@32 score.

\noindent\textbf{Hardware Information.}
Post-training of Qwen2.5-0.5B-Instruct and Qwen2.5-1.5B-Instruct~\cite{qwen2, qwen2.5} is conducted using 4 NVIDIA H100 GPUs (80GB HBM3), while Qwen2.5-7B-Instruct~\cite{qwen2, qwen2.5} is trained using 8 H100 GPUs.
All evaluations are performed using 2 H100 GPUs across all model sizes.

\noindent\textbf{Distributed Computing Configurations.}
We adopt DeepSpeed ZeRO Stage 2~\cite{rasley2020deepspeed} for post-training Qwen2.5-0.5B-Instruct and Qwen2.5-1.5B-Instruct~\cite{qwen2, qwen2.5}, and ZeRO Stage 3~\cite{rasley2020deepspeed} for Qwen2.5-7B-Instruct~\cite{qwen2, qwen2.5}.
For completion generation, we employ vLLM~\cite{kwon2023efficient} in colocation mode, with GPU memory utilization set to 0.4 and tensor parallelism set to 2.

\subsection{Effectiveness \& Efficiency}
\subsubsection{Token Efficiency on Non-Math Reasoning Datasets}\label{app: non-math}
Although LLM post-training methods are primarily benchmarked on mathematical reasoning datasets, we also report the performance of our \texttt{IAPO} method on a commonsense reasoning dataset, namely CommonsenseQA~\cite{talmor2019commonsenseqa}. In this dataset, each prompt presents a commonsense question followed by multiple-choice options. For example, a prompt may be: ``The sanctions against the school were a punishing blow, and they seemed to what the efforts the school had made to change? A. ignore, B. enforce, C. authoritarian, D. yell at, E. avoid.'' The LLM is expected to reason about the question and output the letter corresponding to its chosen answer. We report Pass@K, Length@K, and Ratio@K in Table~\ref{tab: commonsense}. We observe that \texttt{IAPO} achieves the second-highest token-efficiency, as measured by Ratio@16 and Ratio@32. At the same time, \texttt{IAPO} delivers substantially higher effectiveness—approximately a 20\% improvement in P@16 and P@32—demonstrating that our method achieves promising post-training performance.
\begin{table}[H]
\caption{Comparison of \texttt{IAPO} with various baselines on CommonsenseQA~\cite{talmor2019commonsenseqa} and Qwen2.5-0.5B-Instruct~\cite{qwen2.5}.
P@$k$, L@$k$, and R@$k$ denote Pass@$k$, Length@$k$, and the ratio Pass@$k$/Length@$k$, respectively. 
}\label{tab: commonsense}
\vspace{3mm}
\centering
\fontsize{6.8pt}{9.5pt}\selectfont
\begin{tabular}{l|cccccc}
\toprule
Method 
& {\fontsize{6.8pt}{9pt}\selectfont \textbf{P@16}} 
& {\fontsize{6.8pt}{9pt}\selectfont \textbf{P@32}} 
& {\fontsize{6.8pt}{9pt}\selectfont \textbf{L@16}} 
& {\fontsize{6.8pt}{9pt}\selectfont \textbf{L@32}}
& {\fontsize{6.8pt}{9pt}\selectfont \textbf{R@16}} 
& {\fontsize{6.8pt}{9pt}\selectfont \textbf{R@32}}\\
\midrule
DAPO~\cite{yu2025dapo} & 0.7111±{\scriptsize 0.0314} & 0.7333±{\scriptsize 0.0000} & 25.51±{\scriptsize 0.12} & 25.28±{\scriptsize 0.17} & $2.79\times 10^{-2}$ & $2.90\times 10^{-2}$\\
GFPO~\cite{shrivastava2025sample} & 0.6667±{\scriptsize 0.0000} & 0.6667±{\scriptsize 0.0000} & 17.71±{\scriptsize 0.16} & 18.02±{\scriptsize 0.37} & $\mathbf{3.76\times 10^{-2}}$ & $\mathbf{3.70\times 10^{-2}}$\\
GTPO~\cite{tan2025gtpo} & 0.7111±{\scriptsize 0.0629} & 0.7333±{\scriptsize 0.0544} & 108.91±{\scriptsize 0.53} & 107.16±{\scriptsize 2.10} & $6.52\times 10^{-3}$ & $6.84\times 10^{-3}$ \\
S-GRPO~\cite{lee2025token} & 0.7556±{\scriptsize 0.0314} & 0.7778±{\scriptsize 0.0314} & 25.15±{\scriptsize 0.22} & 25.21±{\scriptsize 0.14} & $3.00\times 10^{-2}$ & $3.09\times 10^{-2}$\\
\rowcolor{green!10} \ \; + \texttt{IAPO} (ours) & 0.8000±{\scriptsize 0.0000} & 0.8000±{\scriptsize 0.0000} & 24.01±{\scriptsize 0.28} & 23.78±{\scriptsize 0.02} & $\underline{3.33\times 10^{-2}}$ & $\underline{3.36\times 10^{-2}}$\\
\bottomrule
\end{tabular}
\end{table}

\subsubsection{Extended Evaluation on Pass@k and Length@k}\label{app: extended_k}
We evaluate our \texttt{IAPO} and the baselines on the three adopted datasets and LLMs with Pass@2, Length@2, Ratio@2, Pass@4, Length@4, Ratio@4, Pass@8, Length@8, Ratio@8 as well. The results of Pass@2, Length@2, and Ratio@2 are shown in Table~\ref{tab: eva_2}, the results of Pass@4, Length@4, and Ratio@4 are shown in Table~\ref{tab: eva_4}, the results of Pass@8, Length@8 and Ratio@8 are shown in Table~\ref{tab: eva_8}. Our evaluation results align with those of the main paper that our proposed $\texttt{IAPO}$ achieves a generally optimal effectiveness-efficiency trade-off when tested across LLMs and reasoning datasets. 

\begin{table}[htbp]
\caption{Effectiveness and efficiency of \texttt{IAPO} and baselines evaluated with P@2, L@2, and R@2.}\label{tab: eva_2}
\begin{tabular}{cccccccc}
\hline
                                       &                                &          & DAPO & GFPO & GTPO & S-GRPO & \texttt{IAPO} \\ \hline
\multirow{9}{*}{0.5B} & \multirow{3}{*}{GSM8K}         & P@2   &      0.6515±0.0066 &      0.6027±0.0049 &      0.6138±0.0089 &      0.6113±0.0147 &      0.6247±0.0022  \\
                                       &                                & L@2 &      172.40±2.59 &      217.02±1.56 &      241.73±1.55 &      159.72±0.97 &      151.00±1.78  \\
                                       &                                & R@2 &      $3.78 \times10^{-3}$ &      $2.78 \times10^{-3}$ &      $2.54 \times10^{-3}$ &      $3.83 \times10^{-3}$ &      $4.14 \times10^{-3}$  \\ \cline{2-8}
                                       & \multirow{3}{*}{Math-17k}         & P@2   &      0.0867±0.0094 &      0.0400±0.0163 &      0.0400±0.0163 &      0.0733±0.0249 &      0.1000±0.0163  \\
                                       &                                & L@2 &      32.52±0.58 &      16.86±0.55 &      15.93±0.19 &      17.64±0.03 &      17.43±0.04  \\
                                       &                                & R@2 &      $2.67 \times10^{-3}$ &      $2.37 \times10^{-3}$ &      $2.51 \times10^{-3}$ &      $4.16 \times10^{-3}$ &      $5.74 \times10^{-3}$  \\ \cline{2-8}
                                       & \multirow{3}{*}{MATH-500}         & P@2   &      0.2896±0.0126 &      0.2896±0.0126 &      0.2323±0.0359 &      0.2559±0.0126 &      0.2761±0.0172  \\
                                       &                                & L@2 &      160.70±2.93 &      164.27±5.21 &      475.90±1.82 &      102.58±1.36 &      163.65±4.31  \\
                                       &                                & R@2 &      $1.80 \times10^{-3}$ &      $1.76 \times10^{-3}$ &      $4.88 \times10^{-4}$ &      $2.49 \times10^{-3}$ &      $1.69 \times10^{-3}$  \\
\hline
\multirow{9}{*}{1.5B} & \multirow{3}{*}{GSM8K}         & P@2   &      0.8221±0.0053 &      0.8279±0.0032 &      0.8198±0.0034 &      0.8426±0.0085 &      0.8261±0.0014  \\
                                       &                                & L@2 &      168.93±1.98 &      202.02±0.76 &      261.40±0.31 &      180.13±0.83 &      160.20±2.84  \\
                                       &                                & R@2 &      $4.87 \times10^{-3}$ &      $4.10 \times10^{-3}$ &      $3.14 \times10^{-3}$ &      $4.68 \times10^{-3}$ &      $5.16 \times10^{-3}$  \\ \cline{2-8}
                                       & \multirow{3}{*}{Math-17k}         & P@2   &      0.0400±0.0000 &      0.0600±0.0327 &      0.0800±0.0163 &      0.0933±0.0094 &      0.0733±0.0094  \\
                                       &                                & L@2 &      38.61±1.33 &      64.59±4.98 &      18.94±0.13 &      62.01±2.61 &      41.08±0.76  \\
                                       &                                & R@2 &      $1.04 \times10^{-3}$ &      $9.29 \times10^{-4}$ &      $4.22 \times10^{-3}$ &      $1.50 \times10^{-3}$ &      $1.78 \times10^{-3}$  \\ \cline{2-8}
                                       & \multirow{3}{*}{MATH-500}         & P@2   &      0.5522±0.0190 &      0.4949±0.0218 &      0.3502±0.0095 &      0.5421±0.0126 &      0.5488±0.0095  \\
                                       &                                & L@2 &      332.77±3.31 &      374.32±8.07 &      495.30±3.07 &      276.59±3.55 &      264.53±5.22  \\
                                       &                                & R@2 &      $1.66 \times10^{-3}$ &      $1.32 \times10^{-3}$ &      $7.07 \times10^{-4}$ &      $1.96 \times10^{-3}$ &      $2.07 \times10^{-3}$  \\
\hline
\multirow{9}{*}{7B} & \multirow{3}{*}{GSM8K}         & P@2   &      0.9431±0.0006 &      0.9449±0.0028 &      0.9338±0.0034 &      0.9419±0.0022 &      0.9179±0.0044  \\
                                       &                                & L@2 &      156.33±3.20 &      156.16±1.96 &      193.27±0.60 &      144.39±0.14 &      84.14±0.61  \\
                                       &                                & R@2 &      $6.03 \times10^{-3}$ &      $6.05 \times10^{-3}$ &      $4.83 \times10^{-3}$ &      $6.52 \times10^{-3}$ &      $1.09 \times10^{-2}$  \\ \cline{2-8}
                                       & \multirow{3}{*}{Math-17k}         & P@2   &      0.2400±0.0283 &      0.2667±0.0094 &      0.1200±0.0163 &      0.2600±0.0000 &      0.2600±0.0163  \\
                                       &                                & L@2 &      444.34±2.66 &      420.55±3.39 &      518.77±1.87 &      400.40±9.52 &      425.78±2.39  \\
                                       &                                & R@2 &      $5.40 \times10^{-4}$ &      $6.34 \times10^{-4}$ &      $2.31 \times10^{-4}$ &      $6.49 \times10^{-4}$ &      $6.11 \times10^{-4}$  \\ \cline{2-8}
                                       & \multirow{3}{*}{MATH-500}         & P@2   &      0.6431±0.0126 &      0.6094±0.0095 &      0.5185±0.0252 &      0.6700±0.0238 &      0.6801±0.0048  \\
                                       &                                & L@2 &      327.31±3.79 &      341.66±2.71 &      452.15±1.19 &      334.14±2.25 &      299.38±1.99  \\
                                       &                                & R@2 &      $1.96 \times10^{-3}$ &      $1.78 \times10^{-3}$ &      $1.15 \times10^{-3}$ &      $2.00 \times10^{-3}$ &      $2.27 \times10^{-3}$  \\
\cline{1-8}
\end{tabular}
\end{table}
\begin{table}[htbp]
\caption{Effectiveness and efficiency of \texttt{IAPO} and baselines evaluated with P@4, L@4, and R@4.}\label{tab: eva_4}
\begin{tabular}{cccccccc}
\hline
                                       &                                &          & DAPO & GFPO & GTPO & S-GRPO & \texttt{IAPO} \\ \hline
\multirow{9}{*}{0.5B} & \multirow{3}{*}{GSM8K}         & P@4   &      0.7374±0.0056 &      0.7028±0.0125 &      0.7159±0.0095 &      0.7111±0.0059 &      0.7165±0.0049  \\
                                       &                                & L@4 &      172.31±1.65 &      215.23±1.60 &      242.17±0.15 &      159.34±0.88 &      150.32±0.86  \\
                                       &                                & R@4 &      $4.28 \times10^{-3}$ &      $3.27 \times10^{-3}$ &      $2.96 \times10^{-3}$ &      $4.46 \times10^{-3}$ &      $4.77 \times10^{-3}$  \\ \cline{2-8}
                                       & \multirow{3}{*}{Math-17k}         & P@4   &      0.1467±0.0189 &      0.0533±0.0094 &      0.0667±0.0189 &      0.1133±0.0189 &      0.1200±0.0163  \\
                                       &                                & L@4 &      33.10±0.45 &      16.60±0.28 &      15.86±0.11 &      17.60±0.13 &      17.39±0.07  \\
                                       &                                & R@4 &      $4.43 \times10^{-3}$ &      $3.21 \times10^{-3}$ &      $4.21 \times10^{-3}$ &      $6.44 \times10^{-3}$ &      $6.90 \times10^{-3}$  \\ \cline{2-8}
                                       & \multirow{3}{*}{MATH-500}         & P@4   &      0.3333±0.0082 &      0.3535±0.0082 &      0.2997±0.0390 &      0.3232±0.0165 &      0.3704±0.0172  \\
                                       &                                & L@4 &      161.22±1.16 &      162.88±2.78 &      474.21±0.53 &      101.98±1.77 &      162.34±2.18  \\
                                       &                                & R@4 &      $2.07 \times10^{-3}$ &      $2.17 \times10^{-3}$ &      $6.32 \times10^{-4}$ &      $3.17 \times10^{-3}$ &      $2.28 \times10^{-3}$  \\
\hline
\multirow{9}{*}{1.5B} & \multirow{3}{*}{GSM8K}         & P@4   &      0.8835±0.0086 &      0.8855±0.0022 &      0.8812±0.0029 &      0.8964±0.0004 &      0.8888±0.0036  \\
                                       &                                & L@4 &      169.43±1.09 &      202.70±1.28 &      262.67±0.88 &      180.57±0.71 &      161.83±1.04  \\
                                       &                                & R@4 &      $5.21 \times10^{-3}$ &      $4.37 \times10^{-3}$ &      $3.36 \times10^{-3}$ &      $4.96 \times10^{-3}$ &      $5.49 \times10^{-3}$  \\ \cline{2-8}
                                       & \multirow{3}{*}{Math-17k}         & P@4   &      0.1200±0.0163 &      0.0933±0.0189 &      0.1133±0.0094 &      0.1200±0.0283 &      0.1067±0.0340  \\
                                       &                                & L@4 &      39.72±1.06 &      65.52±0.54 &      18.91±0.08 &      60.56±1.11 &      41.84±0.47  \\
                                       &                                & R@4 &      $3.02 \times10^{-3}$ &      $1.42 \times10^{-3}$ &      $5.99 \times10^{-3}$ &      $1.98 \times10^{-3}$ &      $2.55 \times10^{-3}$  \\ \cline{2-8}
                                       & \multirow{3}{*}{MATH-500}         & P@4   &      0.6027±0.0252 &      0.5657±0.0218 &      0.4343±0.0218 &      0.6128±0.0126 &      0.5960±0.0218  \\
                                       &                                & L@4 &      329.57±3.74 &      372.52±7.55 &      495.57±0.41 &      278.09±3.86 &      264.57±2.82  \\
                                       &                                & R@4 &      $1.83 \times10^{-3}$ &      $1.52 \times10^{-3}$ &      $8.76 \times10^{-4}$ &      $2.20 \times10^{-3}$ &      $2.25 \times10^{-3}$  \\
\hline
\multirow{9}{*}{7B} & \multirow{3}{*}{GSM8K}         & P@4   &      0.9616±0.0019 &      0.9636±0.0022 &      0.9530±0.0011 &      0.9618±0.0014 &      0.9441±0.0032  \\
                                       &                                & L@4 &      158.14±0.93 &      158.05±2.04 &      193.05±0.13 &      145.81±0.88 &      84.04±0.47  \\
                                       &                                & R@4 &      $6.08 \times10^{-3}$ &      $6.10 \times10^{-3}$ &      $4.94 \times10^{-3}$ &      $6.60 \times10^{-3}$ &      $1.12 \times10^{-2}$  \\ \cline{2-8}
                                       & \multirow{3}{*}{Math-17k}         & P@4   &      0.2667±0.0249 &      0.3133±0.0094 &      0.1333±0.0249 &      0.3000±0.0163 &      0.2867±0.0094  \\
                                       &                                & L@4 &      446.21±3.16 &      424.78±0.74 &      518.73±0.86 &      401.46±6.58 &      427.52±5.75  \\
                                       &                                & R@4 &      $5.98 \times10^{-4}$ &      $7.38 \times10^{-4}$ &      $2.57 \times10^{-4}$ &      $7.47 \times10^{-4}$ &      $6.71 \times10^{-4}$  \\ \cline{2-8}
                                       & \multirow{3}{*}{MATH-500}         & P@4   &      0.7037±0.0095 &      0.6566±0.0165 &      0.5623±0.0172 &      0.7172±0.0165 &      0.7138±0.0208  \\
                                       &                                & L@4 &      330.62±2.13 &      342.70±2.41 &      453.33±2.31 &      334.08±1.80 &      300.63±2.18  \\
                                       &                                & R@4 &      $2.13 \times10^{-3}$ &      $1.92 \times10^{-3}$ &      $1.24 \times10^{-3}$ &      $2.15 \times10^{-3}$ &      $2.37 \times10^{-3}$  \\
\cline{1-8}
\end{tabular}
\end{table}
\begin{table}[htbp]
\caption{Effectiveness and efficiency of \texttt{IAPO} and baselines evaluated with P@8, L@8, and R@8.}\label{tab: eva_8}
\begin{tabular}{cccccccc}
\hline
                                       &                                &          & DAPO & GFPO & GTPO & S-GRPO & \texttt{IAPO} \\ \hline
\multirow{9}{*}{0.5B} & \multirow{3}{*}{GSM8K}         & P@8   &      0.8067±0.0019 &      0.7923±0.0087 &      0.7930±0.0059 &      0.7902±0.0053 &      0.7928±0.0054  \\
                                       &                                & L@8 &      172.94±0.21 &      216.56±1.86 &      242.81±0.22 &      159.81±0.35 &      150.55±0.48  \\
                                       &                                & R@8 &      $4.66 \times10^{-3}$ &      $3.66 \times10^{-3}$ &      $3.27 \times10^{-3}$ &      $4.94 \times10^{-3}$ &      $5.27 \times10^{-3}$  \\ \cline{2-8}
                                       & \multirow{3}{*}{Math-17k}         & P@8   &      0.1867±0.0094 &      0.1600±0.0283 &      0.1133±0.0340 &      0.1400±0.0327 &      0.1800±0.0000  \\
                                       &                                & L@8 &      32.95±0.22 &      16.56±0.01 &      15.82±0.04 &      17.49±0.12 &      17.36±0.04  \\
                                       &                                & R@8 &      $5.67 \times10^{-3}$ &      $9.66 \times10^{-3}$ &      $7.16 \times10^{-3}$ &      $8.00 \times10^{-3}$ &      $1.04 \times10^{-2}$  \\ \cline{2-8}
                                       & \multirow{3}{*}{MATH-500}         & P@8   &      0.3973±0.0172 &      0.4478±0.0238 &      0.4108±0.0372 &      0.3872±0.0126 &      0.4276±0.0390  \\
                                       &                                & L@8 &      162.97±1.64 &      162.86±0.74 &      473.62±1.17 &      103.93±0.66 &      165.15±0.21  \\
                                       &                                & R@8 &      $2.44 \times10^{-3}$ &      $2.75 \times10^{-3}$ &      $8.67 \times10^{-4}$ &      $3.73 \times10^{-3}$ &      $2.59 \times10^{-3}$  \\
\hline
\multirow{9}{*}{1.5B} & \multirow{3}{*}{GSM8K}         & P@8   &      0.9222±0.0041 &      0.9232±0.0009 &      0.9242±0.0033 &      0.9348±0.0016 &      0.9282±0.0035  \\
                                       &                                & L@8 &      169.26±0.59 &      203.80±0.61 &      262.51±0.38 &      180.73±0.16 &      162.77±0.78  \\
                                       &                                & R@8 &      $5.45 \times10^{-3}$ &      $4.53 \times10^{-3}$ &      $3.52 \times10^{-3}$ &      $5.17 \times10^{-3}$ &      $5.70 \times10^{-3}$  \\ \cline{2-8}
                                       & \multirow{3}{*}{Math-17k}         & P@8   &      0.1600±0.0283 &      0.1533±0.0094 &      0.1600±0.0163 &      0.1733±0.0340 &      0.1600±0.0432  \\
                                       &                                & L@8 &      38.95±0.67 &      65.56±1.20 &      18.88±0.03 &      60.20±0.58 &      41.76±0.82  \\
                                       &                                & R@8 &      $4.11 \times10^{-3}$ &      $2.34 \times10^{-3}$ &      $8.47 \times10^{-3}$ &      $2.88 \times10^{-3}$ &      $3.83 \times10^{-3}$  \\ \cline{2-8}
                                       & \multirow{3}{*}{MATH-500}         & P@8   &      0.6566±0.0000 &      0.6162±0.0218 &      0.4714±0.0172 &      0.6835±0.0126 &      0.6599±0.0126  \\
                                       &                                & L@8 &      327.52±2.35 &      375.96±6.42 &      495.66±0.84 &      277.13±2.27 &      265.36±3.18  \\
                                       &                                & R@8 &      $2.00 \times10^{-3}$ &      $1.64 \times10^{-3}$ &      $9.51 \times10^{-4}$ &      $2.47 \times10^{-3}$ &      $2.49 \times10^{-3}$  \\
\hline
\multirow{9}{*}{7B} & \multirow{3}{*}{GSM8K}         & P@8   &      0.9704±0.0012 &      0.9742±0.0012 &      0.9669±0.0009 &      0.9725±0.0025 &      0.9623±0.0014  \\
                                       &                                & L@8 &      159.53±0.55 &      160.06±1.45 &      193.23±0.02 &      147.33±0.69 &      83.95±0.25  \\
                                       &                                & R@8 &      $6.08 \times10^{-3}$ &      $6.09 \times10^{-3}$ &      $5.00 \times10^{-3}$ &      $6.60 \times10^{-3}$ &      $1.15 \times10^{-2}$  \\ \cline{2-8}
                                       & \multirow{3}{*}{Math-17k}         & P@8   &      0.3067±0.0094 &      0.3467±0.0249 &      0.1533±0.0249 &      0.3533±0.0340 &      0.3333±0.0189  \\
                                       &                                & L@8 &      445.73±1.63 &      423.35±0.95 &      518.86±0.35 &      398.63±1.65 &      429.34±0.35  \\
                                       &                                & R@8 &      $6.88 \times10^{-4}$ &      $8.19 \times10^{-4}$ &      $2.95 \times10^{-4}$ &      $8.86 \times10^{-4}$ &      $7.76 \times10^{-4}$  \\ \cline{2-8}
                                       & \multirow{3}{*}{MATH-500}         & P@8   &      0.7441±0.0126 &      0.7205±0.0126 &      0.5892±0.0048 &      0.7677±0.0082 &      0.7542±0.0126  \\
                                       &                                & L@8 &      331.99±1.89 &      344.19±0.51 &      453.86±1.12 &      335.72±1.30 &      301.11±1.70  \\
                                       &                                & R@8 &      $2.24 \times10^{-3}$ &      $2.09 \times10^{-3}$ &      $1.30 \times10^{-3}$ &      $2.29 \times10^{-3}$ &      $2.50 \times10^{-3}$  \\
\cline{1-8}
\end{tabular}
\end{table}
\clearpage
\subsubsection{Wall-clock Inference Time}\label{app: wall-clock}
We further report the average wall-clock inference time required to generate 32 completions per query on Qwen2.5-7B-Instruct~\cite{qwen2.5} across all three datasets (Fig.~\ref{fig:wall_clock_time}). Notably, the model post-trained with \texttt{IAPO} achieves a substantially lower inference time—up to a 17.7\% reduction—compared to the original model.

\begin{figure}[H]
    \centering
    \includegraphics[width=1\linewidth]{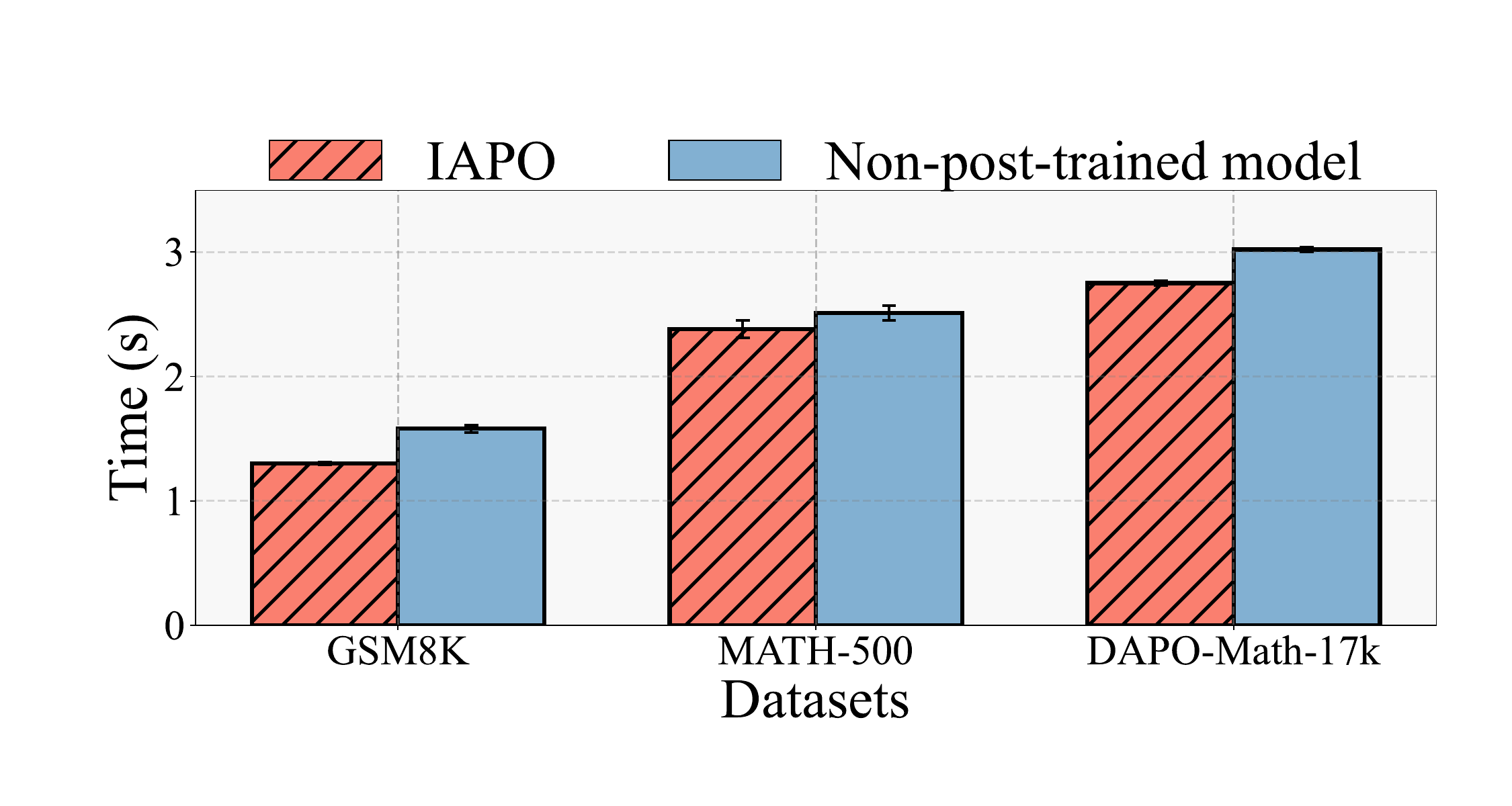}
    \caption{Wall-clock LLM inference time before and after \texttt{IAPO} training. }
    \label{fig:wall_clock_time}
\end{figure}
\clearpage
\subsection{Training Dynamics}
Here we present the training dynamics of (i) the mean correctness reward of model completions (with a reward of 1 for correct completions and -1 for incorrect ones), (ii) the mean completion length, and (iii) the ratio between correctness reward and completion length. These dynamics indicate that our method consistently achieves optimal token efficiency throughout the post-training process compared with the state-of-the-art baselines.
\begin{figure}[H]
    \centering
    \includegraphics[width=\linewidth]{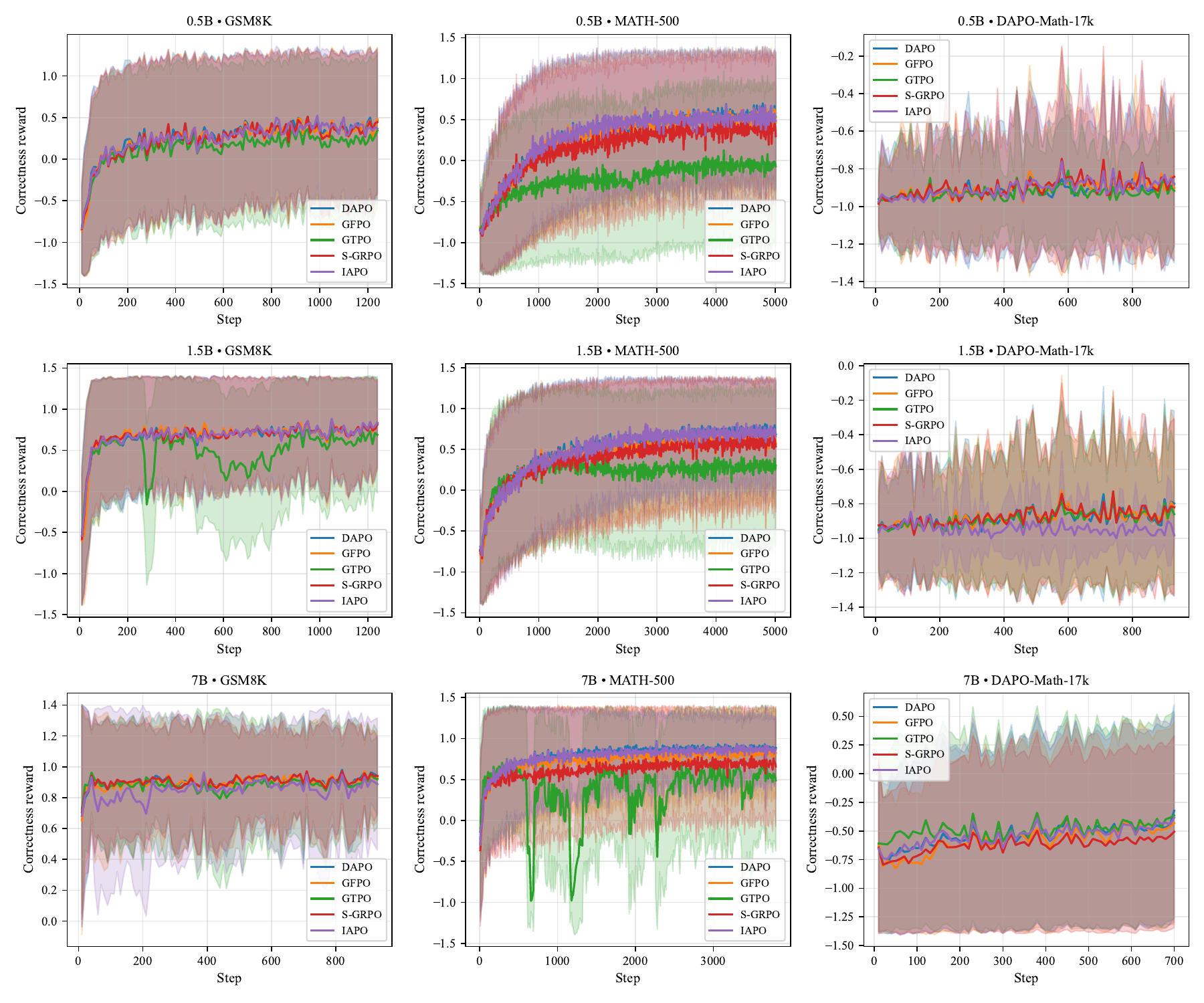}
    \caption{The dynamics of correctness reward during the LLM post-training process.}
    \label{fig: reward_dynamics}
\end{figure}
\begin{figure}[H]
    \centering
    \includegraphics[width=1\linewidth]{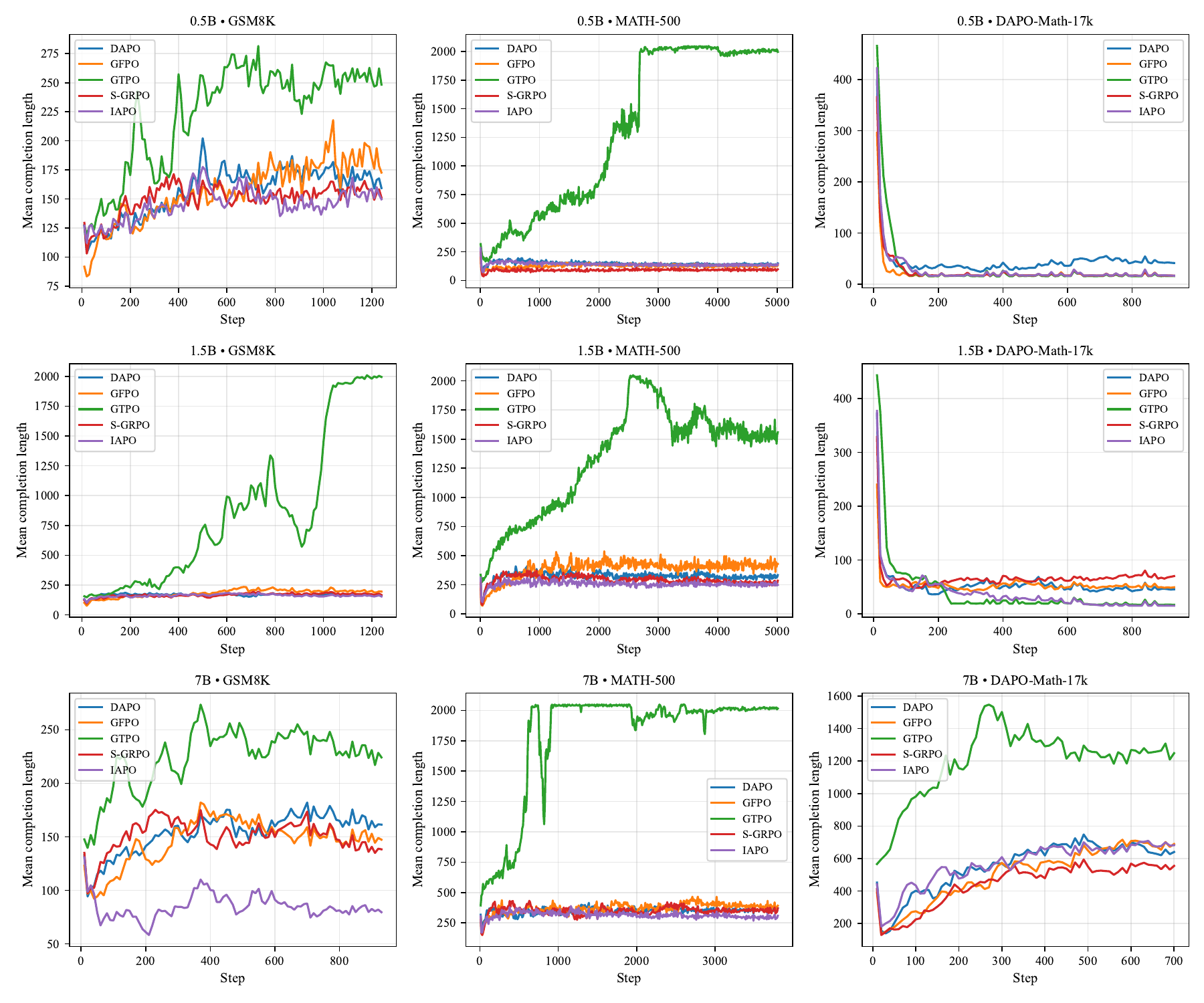}
    \caption{The dynamics of completion lengths during the LLM post-training process.}
    \label{fig:length_dynamics}
\end{figure}
\begin{figure}[H]
    \centering
    \includegraphics[width=1\linewidth]{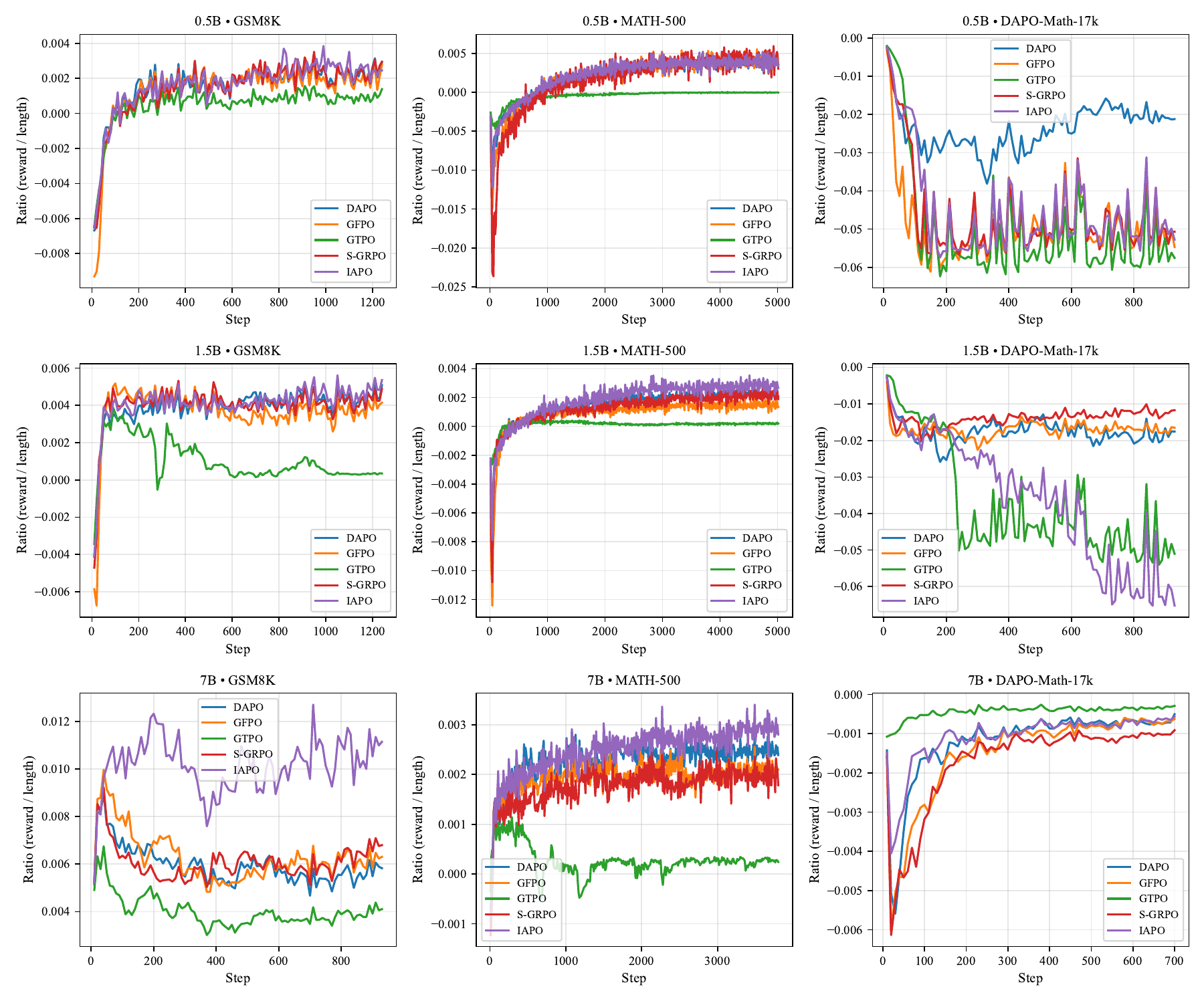}
    \caption{The dynamics of the ratio between the mean correctness reward and the mean completion lengths during the LLM post-training process.}
    \label{fig:length_dynamics}
\end{figure}

\subsection{Case Study}
We examine two research questions in this subsection: (1) \textbf{Generalizability of Conciseness}: We present additional case studies demonstrating that our \texttt{IAPO} achieves optimal token-efficiency compared with state-of-the-art baselines (see Table~\ref{tab: extended_case_studies}). The samples show that our method exhibits minimal self-commentary, virtually no failed intermediate derivations, and no redundant verifications, indicating that \texttt{IAPO} achieves token-efficiency improvements by addressing all three factors contributing to verbosity in current RL post-trained LLMs. (2) \textbf{Informative Token Identification via Conditional MI}: We examine whether the conditional MI estimated by our early-exit estimator effectively captures informative tokens for producing correct answers. Specifically, we plot (in Fig.~\ref{fig:case_study_mi_effi}) token-wise heatmaps for completions generated by Qwen2.5-0.5B-Instruct~\cite{qwen2.5} on the GSM8K dataset~\cite{cobbe2021gsm8k}, where color intensity represents the conditional MI value of each token. The conditional MI successfully identifies informative tokens such as key numbers in reasoning and logical conjunctions (e.g., ``since'', ``proceed''), while assigning low values to redundant tokens such as restatements of givens and meta-commentary.
\begin{figure}[H]
    \centering
    
    \begin{subfigure}[b]{1\textwidth}
        \centering
        \includegraphics[width=\textwidth]{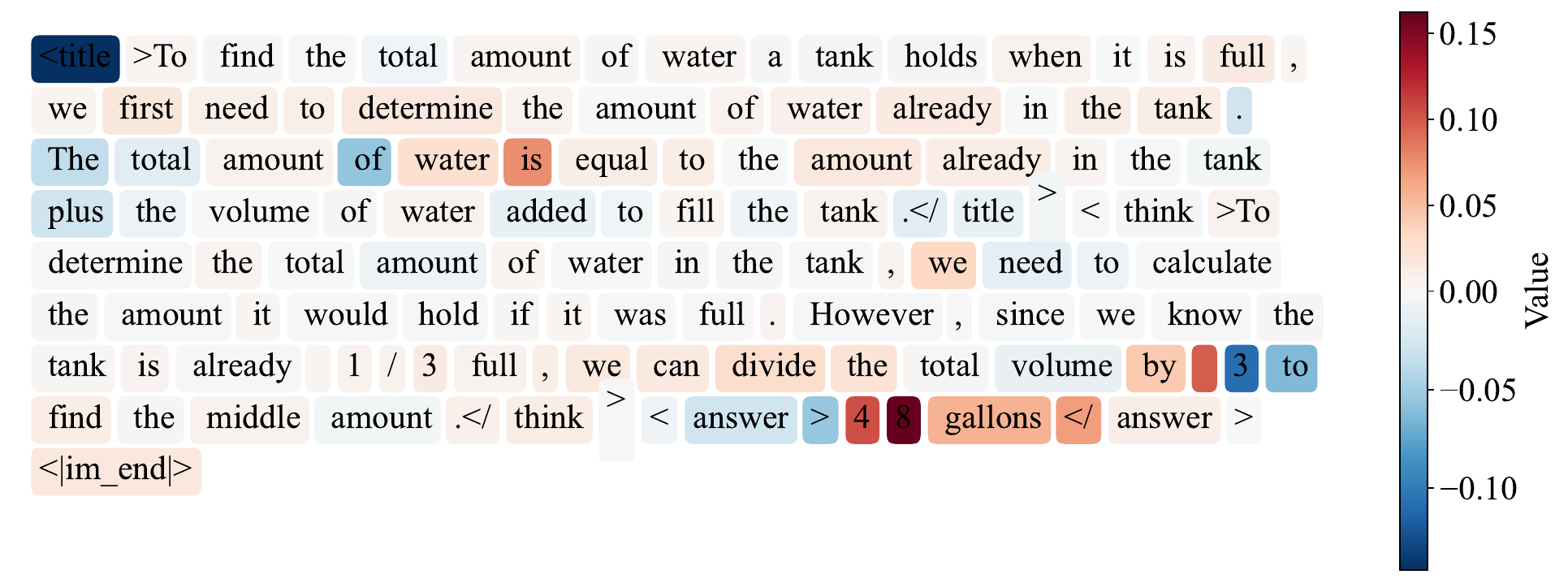}
        \caption{}
    \end{subfigure}
    
    \vspace{0.3cm}
    
    \begin{subfigure}[b]{1\textwidth}
        \centering
        \includegraphics[width=\textwidth]{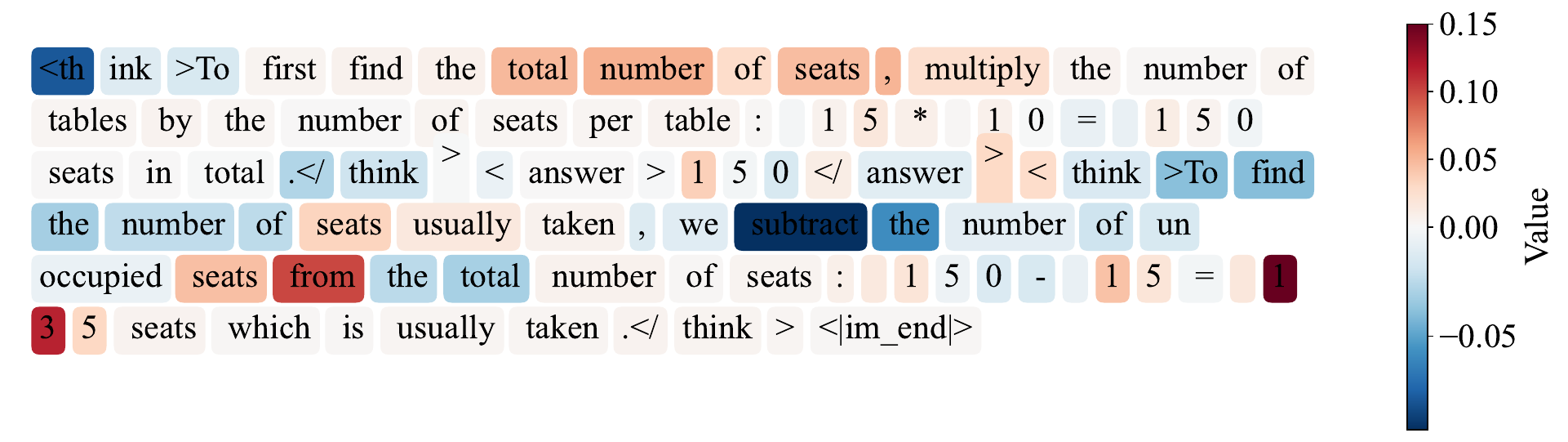}
        \caption{}
    \end{subfigure}
    
    \caption{Case study on the effectiveness of the conditional MI in identifying informative tokens. Each subplot is a reasoning completion generated by Qwen2.5-0.5B-Instruct on a question from the GSM8K dataset. (Part 1 of 13)}
    \label{fig:case_study_mi_effi}
\end{figure}

\begin{figure}[H]
    \ContinuedFloat
    \centering
    
    \begin{subfigure}[b]{1\textwidth}
        \centering
        \includegraphics[width=\textwidth]{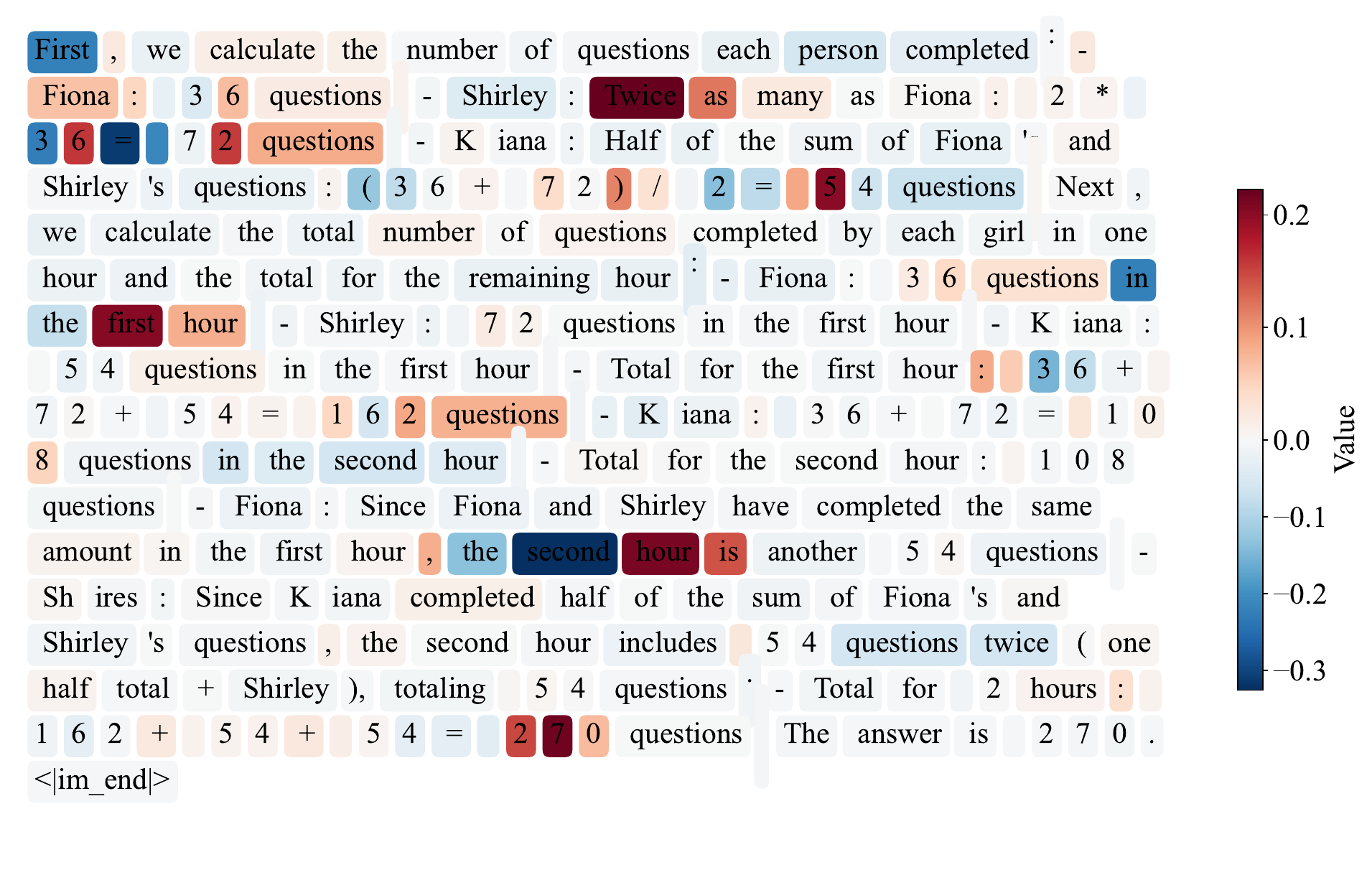}
        \caption{}
    \end{subfigure}
    
    \vspace{0.3cm}
    
    \begin{subfigure}[b]{1\textwidth}
        \centering
        \includegraphics[width=\textwidth]{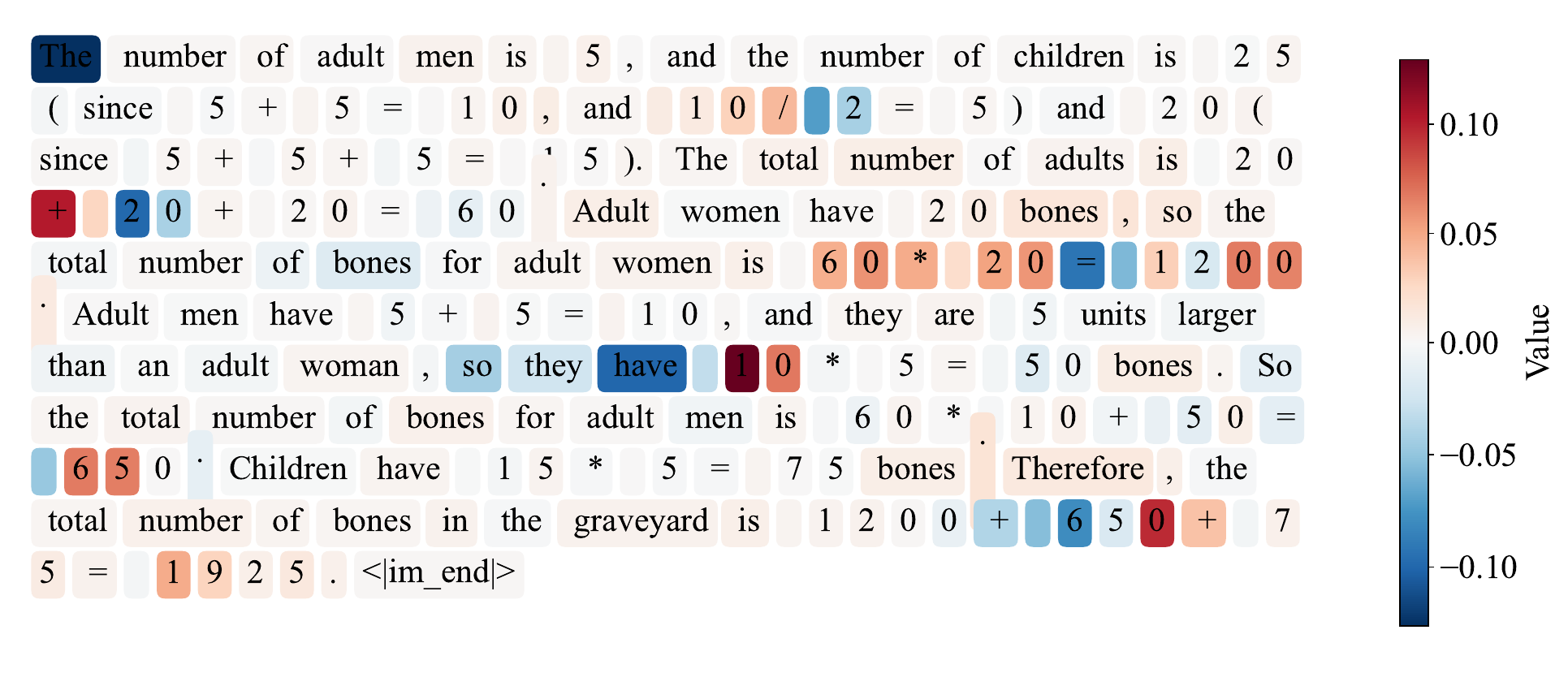}
        \caption{}
    \end{subfigure}
    
    \caption{Case study on the effectiveness of the conditional MI in identifying informative tokens. Each subplot is a reasoning completion generated by Qwen2.5-0.5B-Instruct on a question from the GSM8K dataset. (Part 2 of 13)}
\end{figure}

\begin{figure}[H]
    \ContinuedFloat
    \centering
    
    \begin{subfigure}[b]{1\textwidth}
        \centering
        \includegraphics[width=\textwidth]{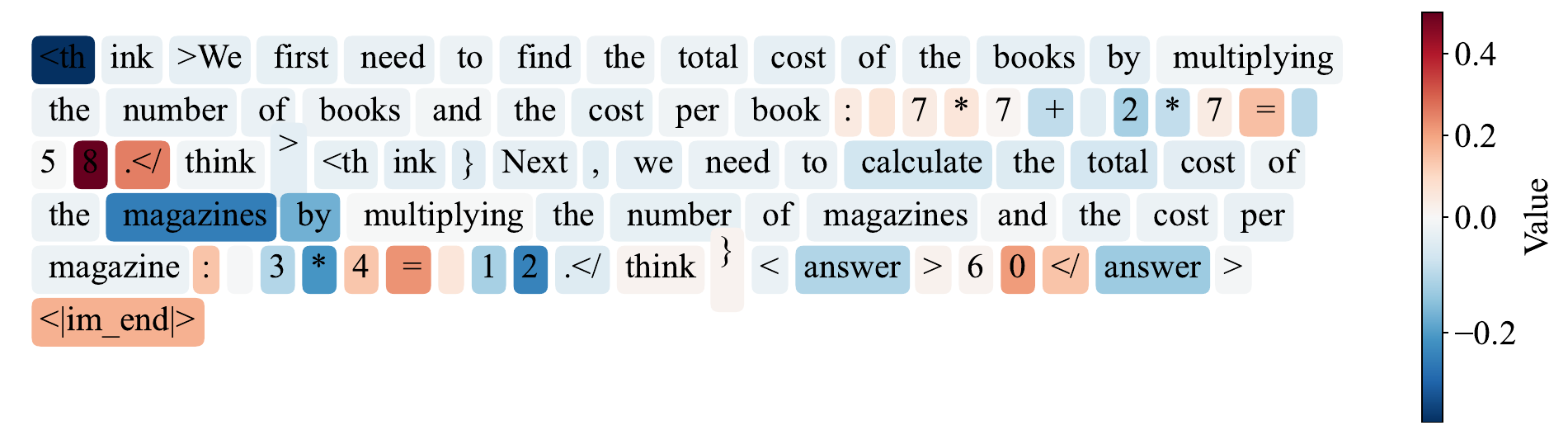}
        \caption{}
    \end{subfigure}
    
    \vspace{0.3cm}
    
    \begin{subfigure}[b]{1\textwidth}
        \centering
        \includegraphics[width=\textwidth]{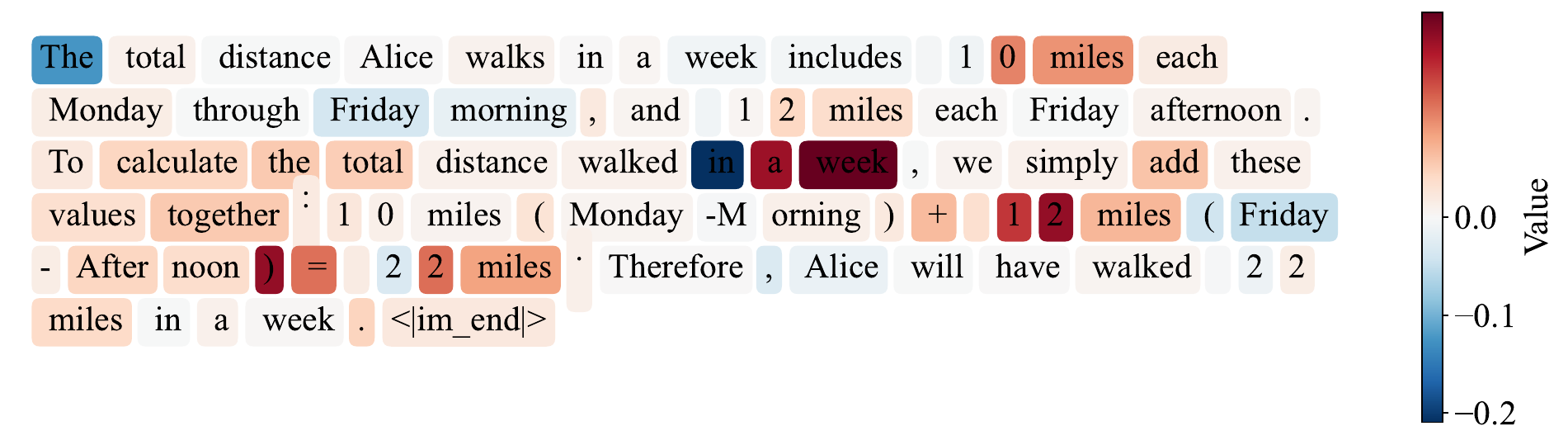}
        \caption{}
    \end{subfigure}
    
    \caption{Case study on the effectiveness of the conditional MI in identifying informative tokens. Each subplot is a reasoning completion generated by Qwen2.5-0.5B-Instruct on a question from the GSM8K dataset. (Part 3 of 13)}
\end{figure}

\begin{figure}[H]
    \ContinuedFloat
    \centering
    
    \begin{subfigure}[b]{1\textwidth}
        \centering
        \includegraphics[width=\textwidth]{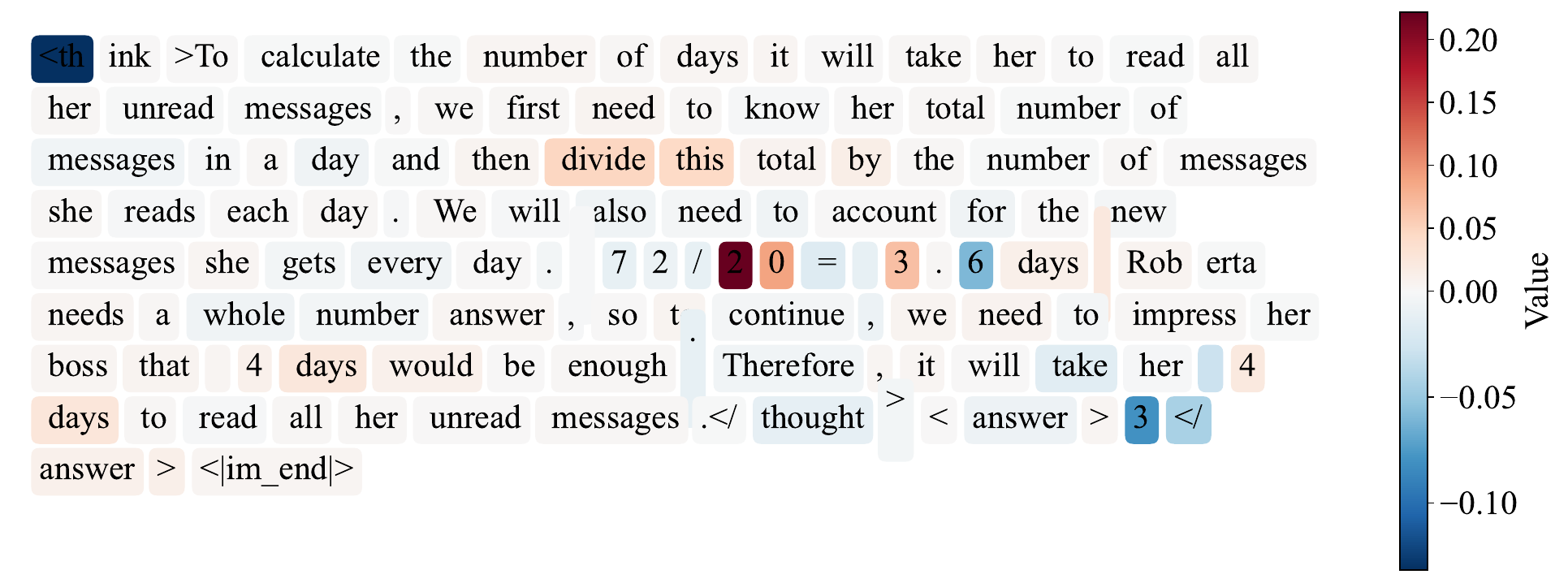}
        \caption{}
    \end{subfigure}
    
    \vspace{0.3cm}
    
    \begin{subfigure}[b]{1\textwidth}
        \centering
        \includegraphics[width=\textwidth]{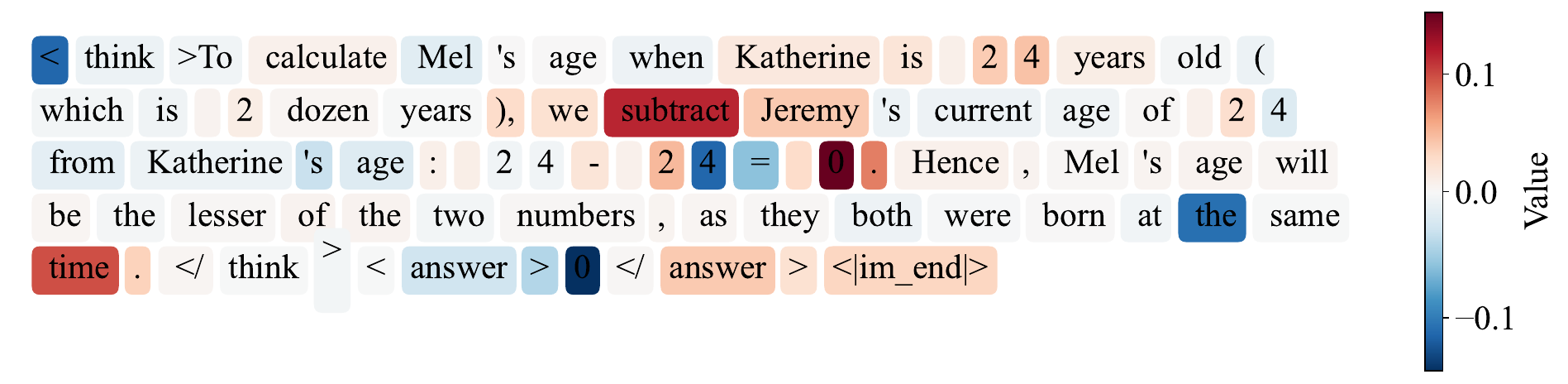}
        \caption{}
    \end{subfigure}
    
    \caption{Case study on the effectiveness of the conditional MI in identifying informative tokens. Each subplot is a reasoning completion generated by Qwen2.5-0.5B-Instruct on a question from the GSM8K dataset. (Part 4 of 13)}
\end{figure}

\begin{figure}[H]
    \ContinuedFloat
    \centering
    
    \begin{subfigure}[b]{1\textwidth}
        \centering
        \includegraphics[width=\textwidth]{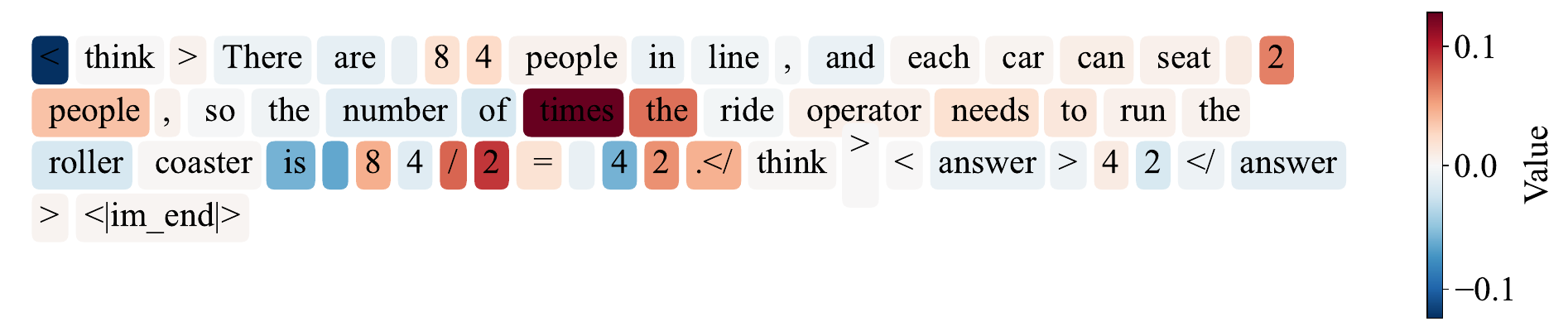}
        \caption{}
    \end{subfigure}
    
    \vspace{0.3cm}
    
    \begin{subfigure}[b]{1\textwidth}
        \centering
        \includegraphics[width=\textwidth]{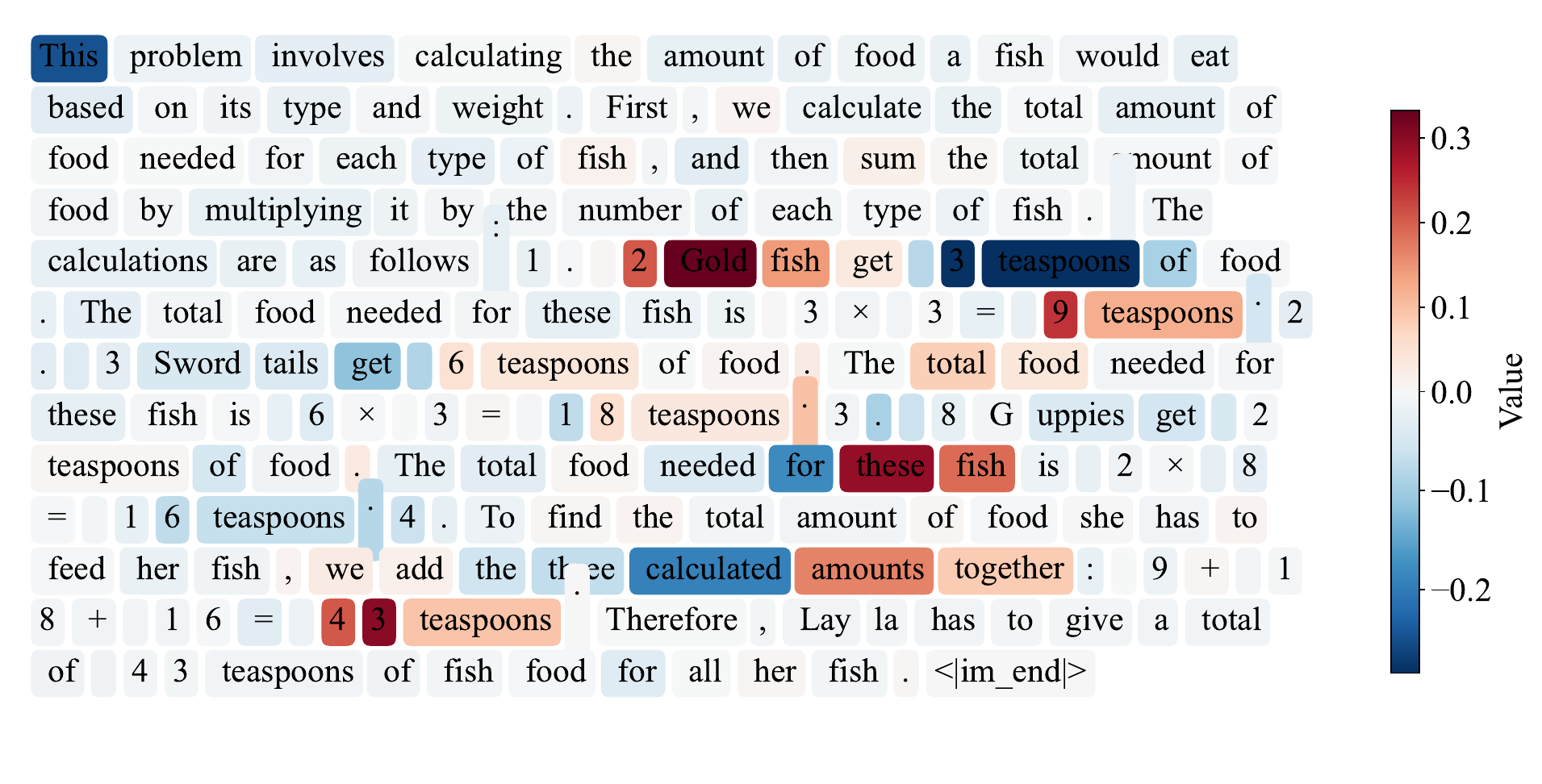}
        \caption{}
    \end{subfigure}
    
    \caption{Case study on the effectiveness of the conditional MI in identifying informative tokens. Each subplot is a reasoning completion generated by Qwen2.5-0.5B-Instruct on a question from the GSM8K dataset. (Part 5 of 13)}
\end{figure}

\begin{figure}[H]
    \ContinuedFloat
    \centering
    
    \begin{subfigure}[b]{1\textwidth}
        \centering
        \includegraphics[width=\textwidth]{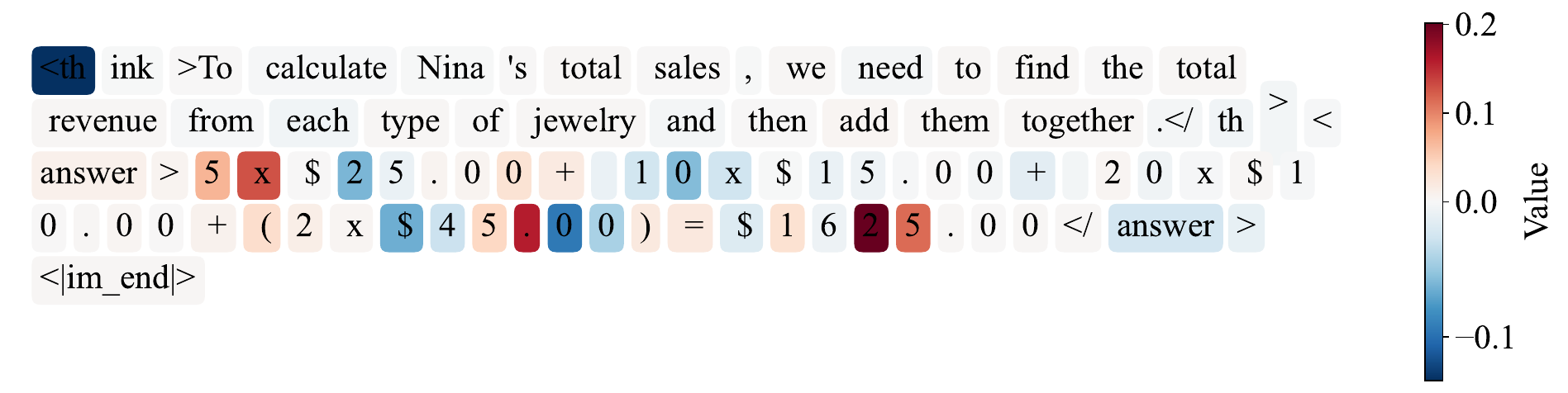}
        \caption{}
    \end{subfigure}
    
    \vspace{0.3cm}
    
    \begin{subfigure}[b]{1\textwidth}
        \centering
        \includegraphics[width=\textwidth]{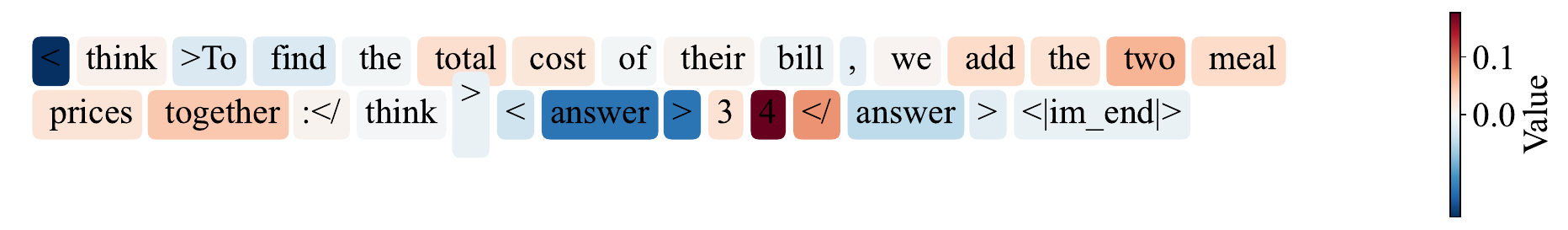}
        \caption{}
    \end{subfigure}
    
    \caption{Case study on the effectiveness of the conditional MI in identifying informative tokens. Each subplot is a reasoning completion generated by Qwen2.5-0.5B-Instruct on a question from the GSM8K dataset. (Part 6 of 13)}
\end{figure}

\begin{figure}[H]
    \ContinuedFloat
    \centering
    
    \begin{subfigure}[b]{1\textwidth}
        \centering
        \includegraphics[width=\textwidth]{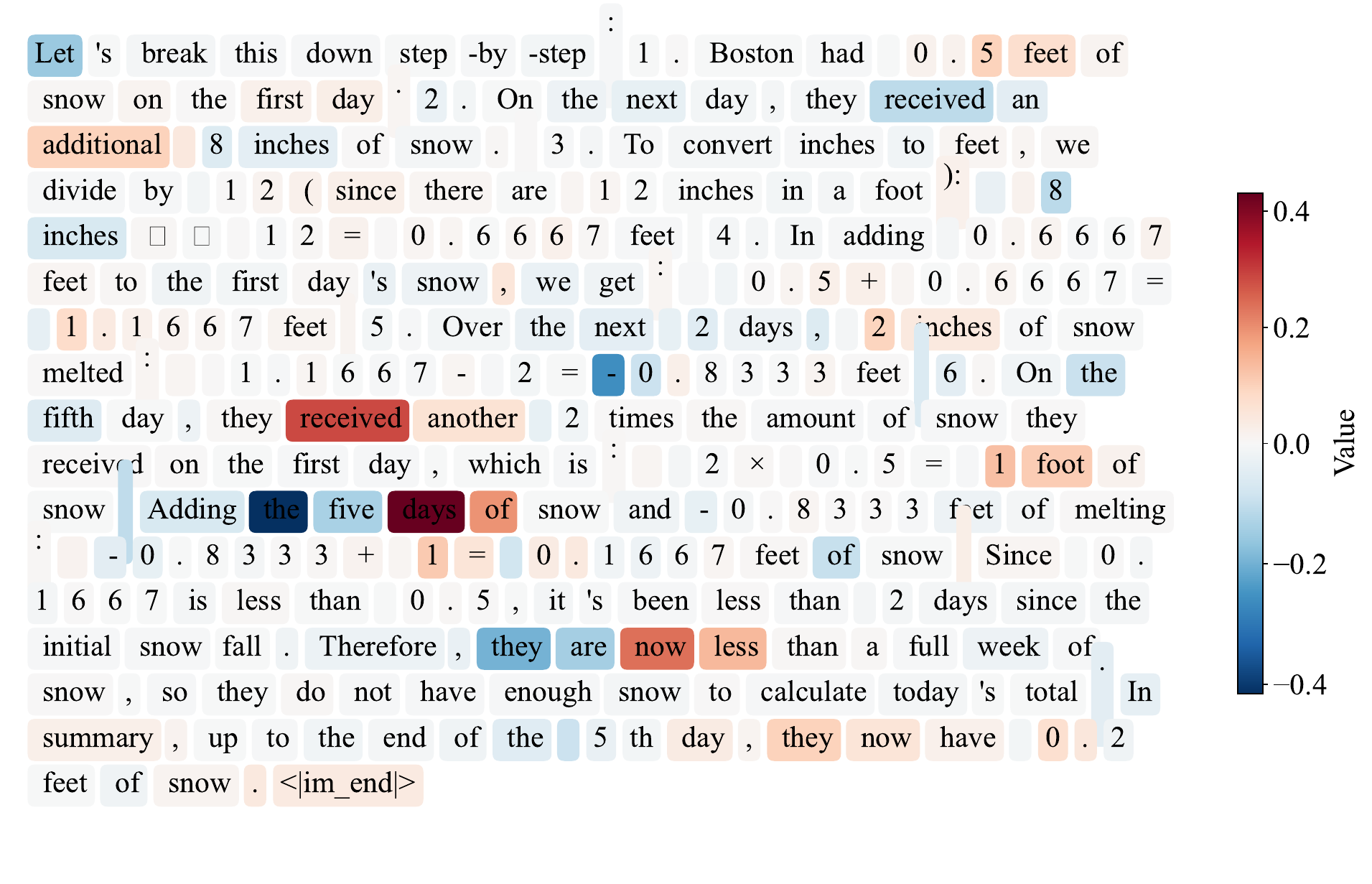}
        \caption{}
    \end{subfigure}
    
    \vspace{0.3cm}
    
    \begin{subfigure}[b]{1\textwidth}
        \centering
        \includegraphics[width=\textwidth]{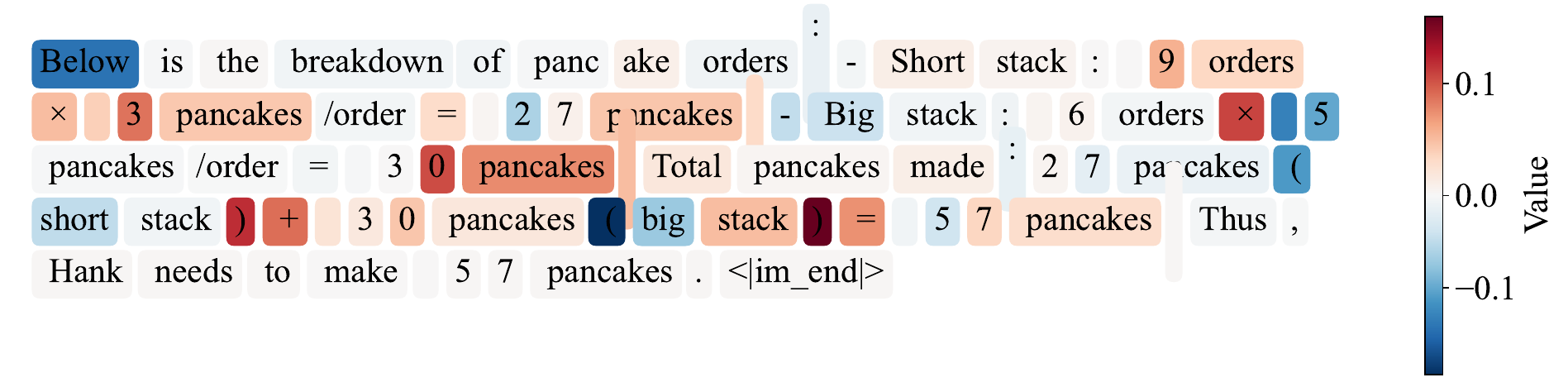}
        \caption{}
    \end{subfigure}
    
    \caption{Case study on the effectiveness of the conditional MI in identifying informative tokens. Each subplot is a reasoning completion generated by Qwen2.5-0.5B-Instruct on a question from the GSM8K dataset. (Part 7 of 13)}
\end{figure}

\begin{figure}[H]
    \ContinuedFloat
    \centering
    
    \begin{subfigure}[b]{1\textwidth}
        \centering
        \includegraphics[width=\textwidth]{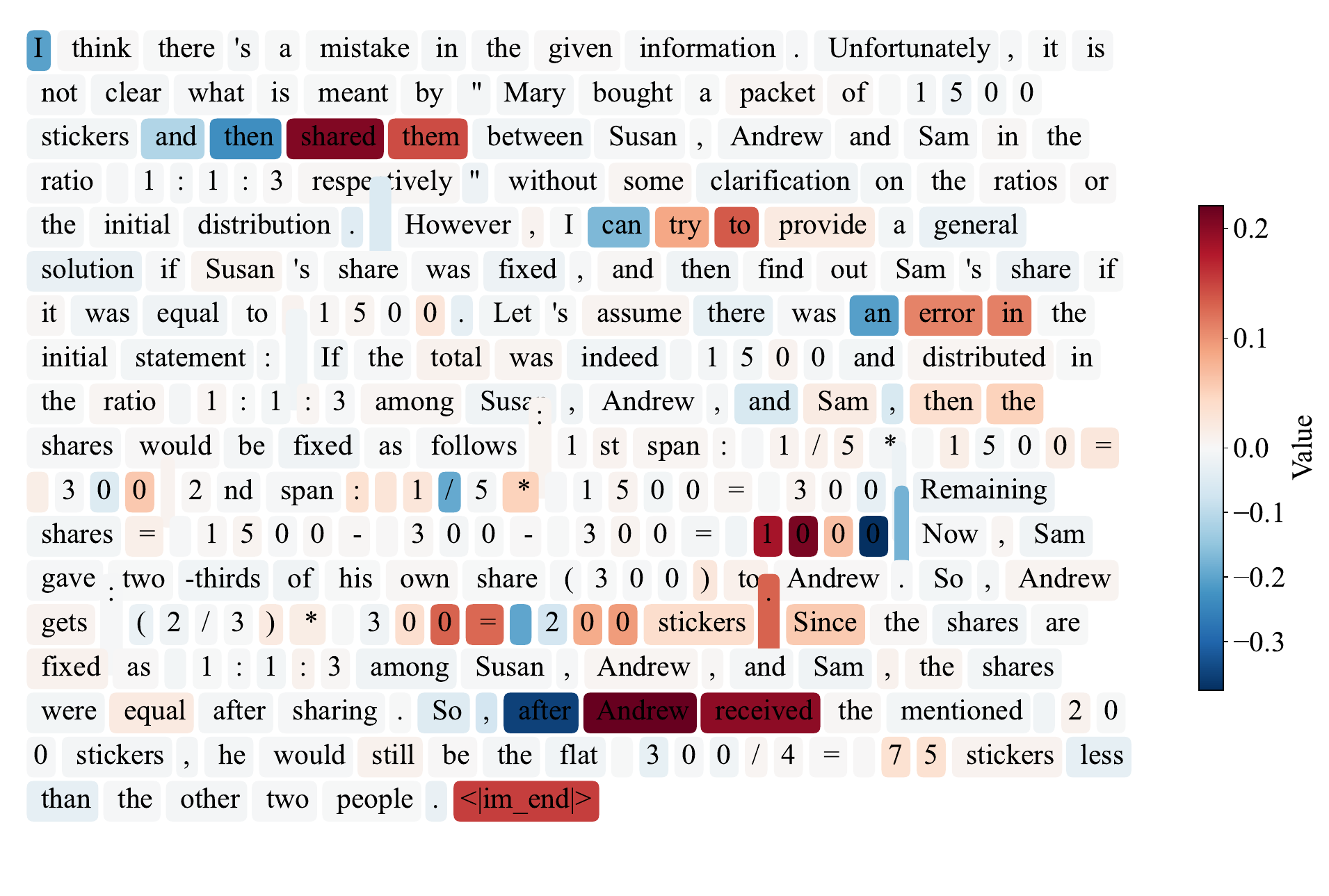}
        \caption{}
    \end{subfigure}
    
    \vspace{0.3cm}
    
    \begin{subfigure}[b]{1\textwidth}
        \centering
        \includegraphics[width=\textwidth]{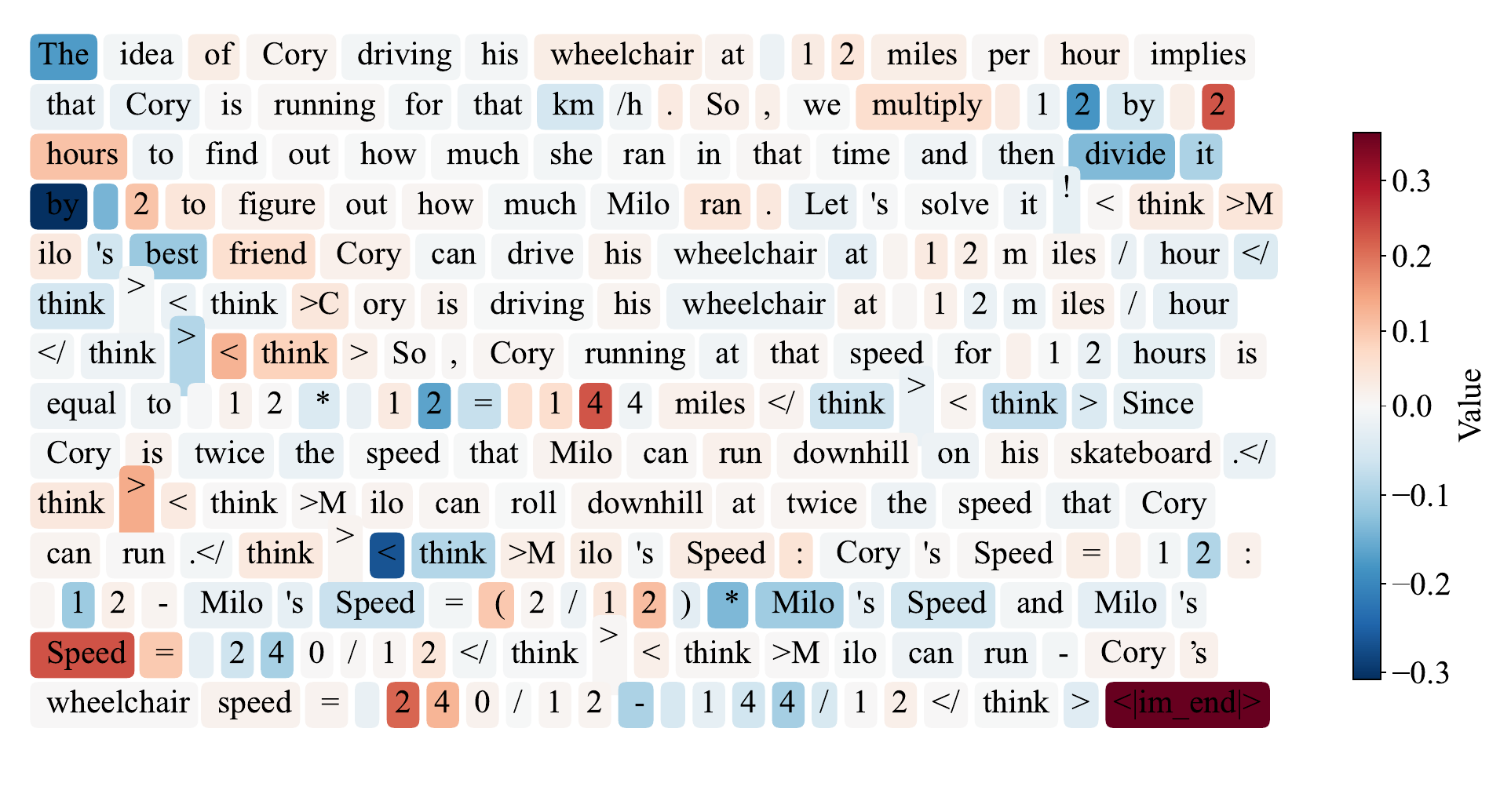}
        \caption{}
    \end{subfigure}
    
    \caption{Case study on the effectiveness of the conditional MI in identifying informative tokens. Each subplot is a reasoning completion generated by Qwen2.5-0.5B-Instruct on a question from the GSM8K dataset. (Part 8 of 13)}
\end{figure}

\begin{figure}[H]
    \ContinuedFloat
    \centering
    
    \begin{subfigure}[b]{1\textwidth}
        \centering
        \includegraphics[width=\textwidth]{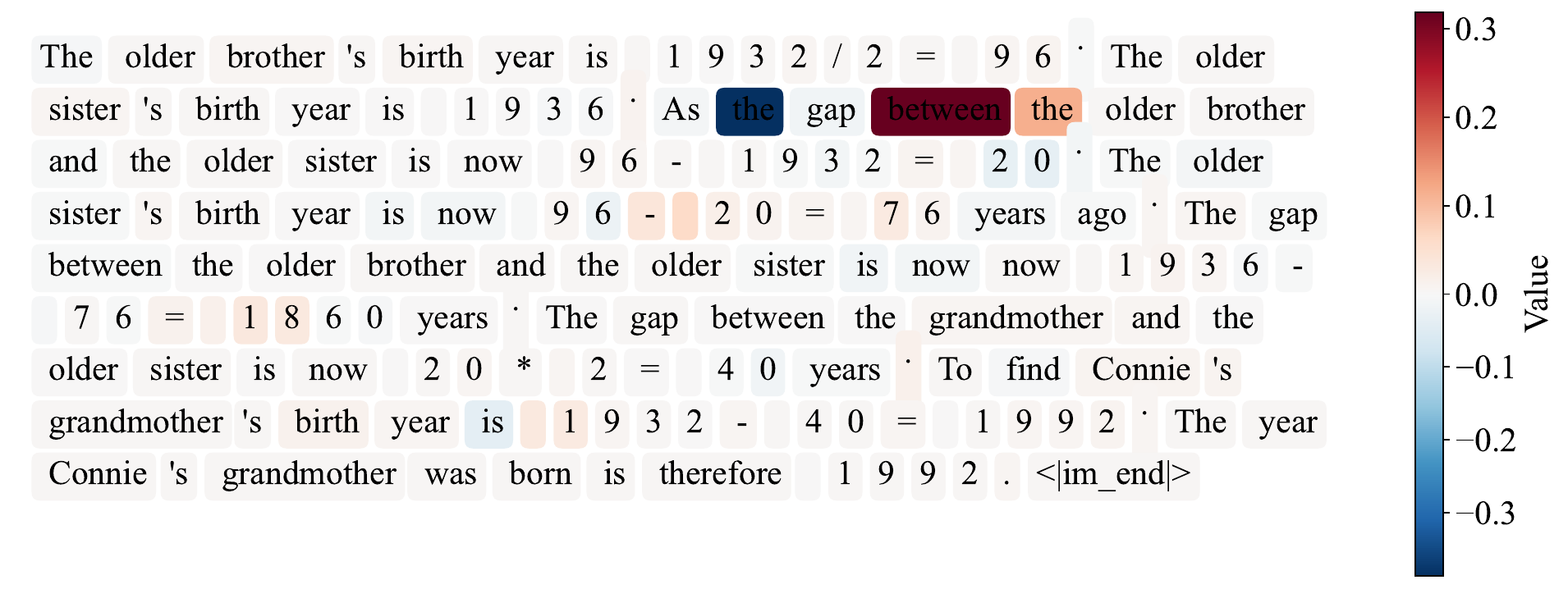}
        \caption{}
    \end{subfigure}
    
    \vspace{0.3cm}
    
    \begin{subfigure}[b]{1\textwidth}
        \centering
        \includegraphics[width=\textwidth]{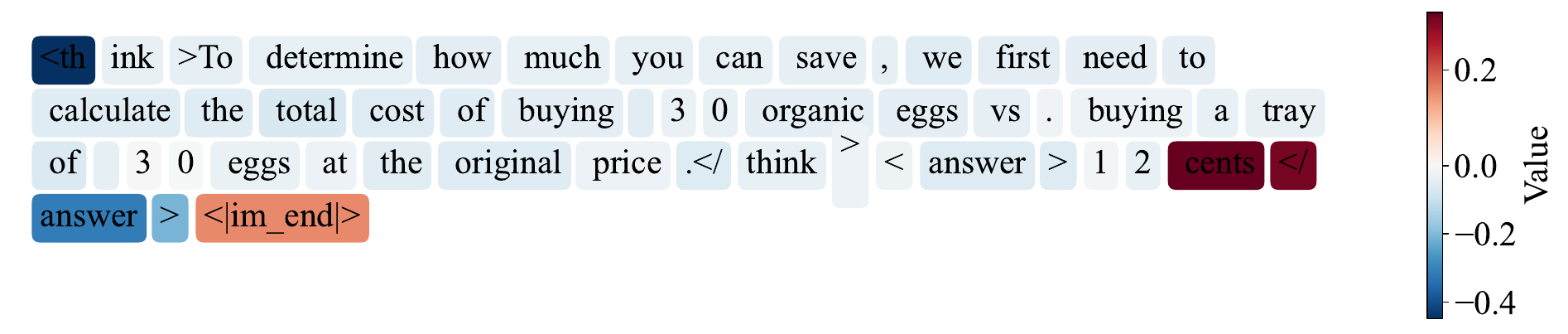}
        \caption{}
    \end{subfigure}
    
    \caption{Case study on the effectiveness of the conditional MI in identifying informative tokens. Each subplot is a reasoning completion generated by Qwen2.5-0.5B-Instruct on a question from the GSM8K dataset. (Part 9 of 13)}
\end{figure}

\begin{figure}[H]
    \ContinuedFloat
    \centering
    
    \begin{subfigure}[b]{1\textwidth}
        \centering
        \includegraphics[width=\textwidth]{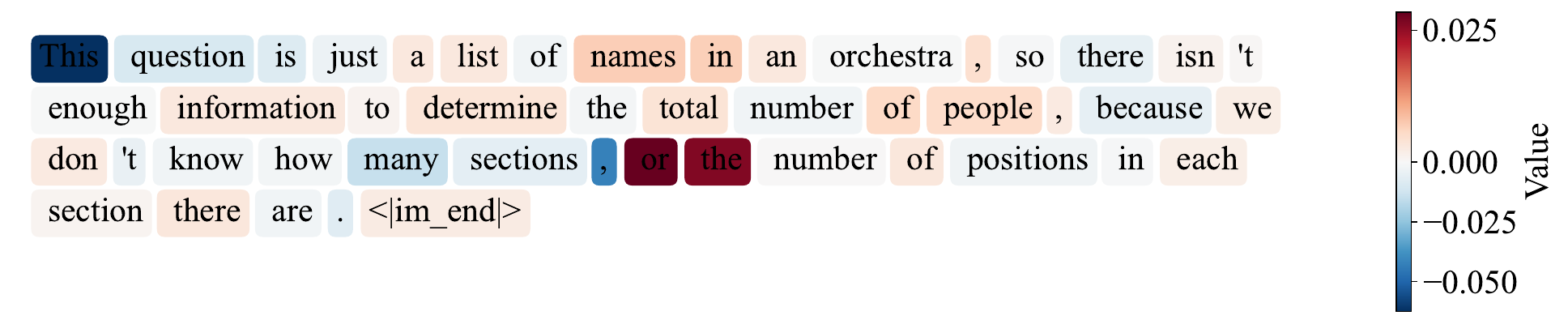}
        \caption{}
    \end{subfigure}
    
    \vspace{0.3cm}
    
    \begin{subfigure}[b]{1\textwidth}
        \centering
        \includegraphics[width=\textwidth]{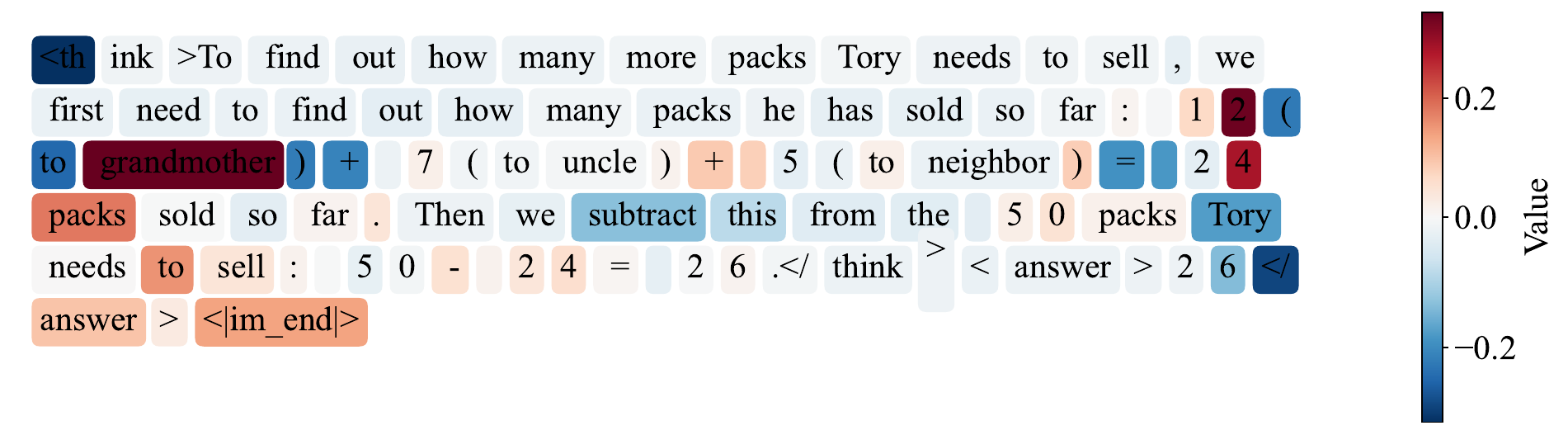}
        \caption{}
    \end{subfigure}
    
    \caption{Case study on the effectiveness of the conditional MI in identifying informative tokens. Each subplot is a reasoning completion generated by Qwen2.5-0.5B-Instruct on a question from the GSM8K dataset. (Part 10 of 13)}
\end{figure}

\begin{figure}[H]
    \ContinuedFloat
    \centering
    
    \begin{subfigure}[b]{1\textwidth}
        \centering
        \includegraphics[width=\textwidth]{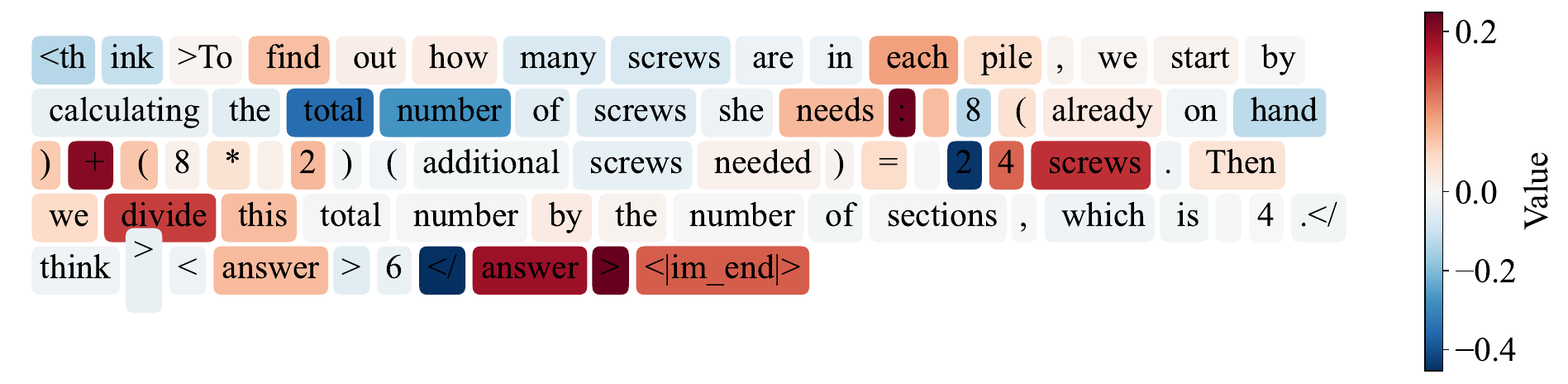}
        \caption{}
    \end{subfigure}
    
    \vspace{0.3cm}
    
    \begin{subfigure}[b]{1\textwidth}
        \centering
        \includegraphics[width=\textwidth]{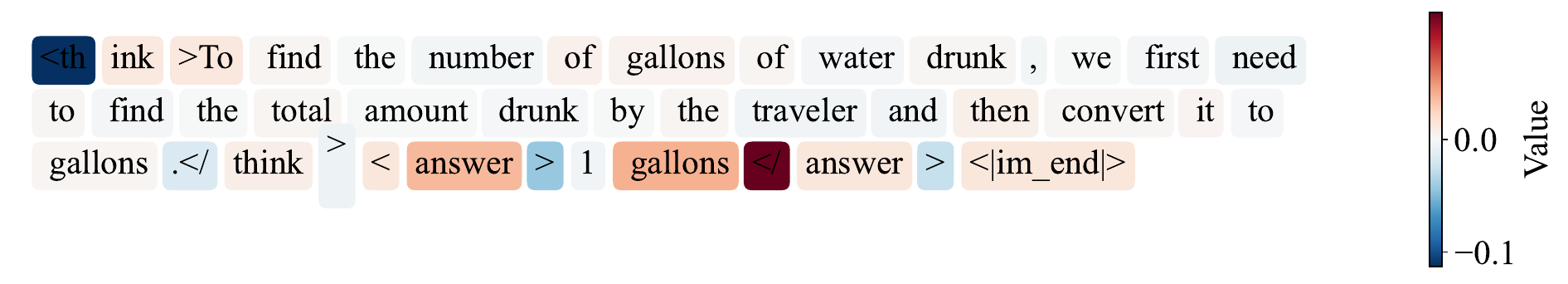}
        \caption{}
    \end{subfigure}
    
    \caption{Case study on the effectiveness of the conditional MI in identifying informative tokens. Each subplot is a reasoning completion generated by Qwen2.5-0.5B-Instruct on a question from the GSM8K dataset. (Part 11 of 13)}
\end{figure}

\begin{figure}[H]
    \ContinuedFloat
    \centering
    
    \begin{subfigure}[b]{1\textwidth}
        \centering
        \includegraphics[width=\textwidth]{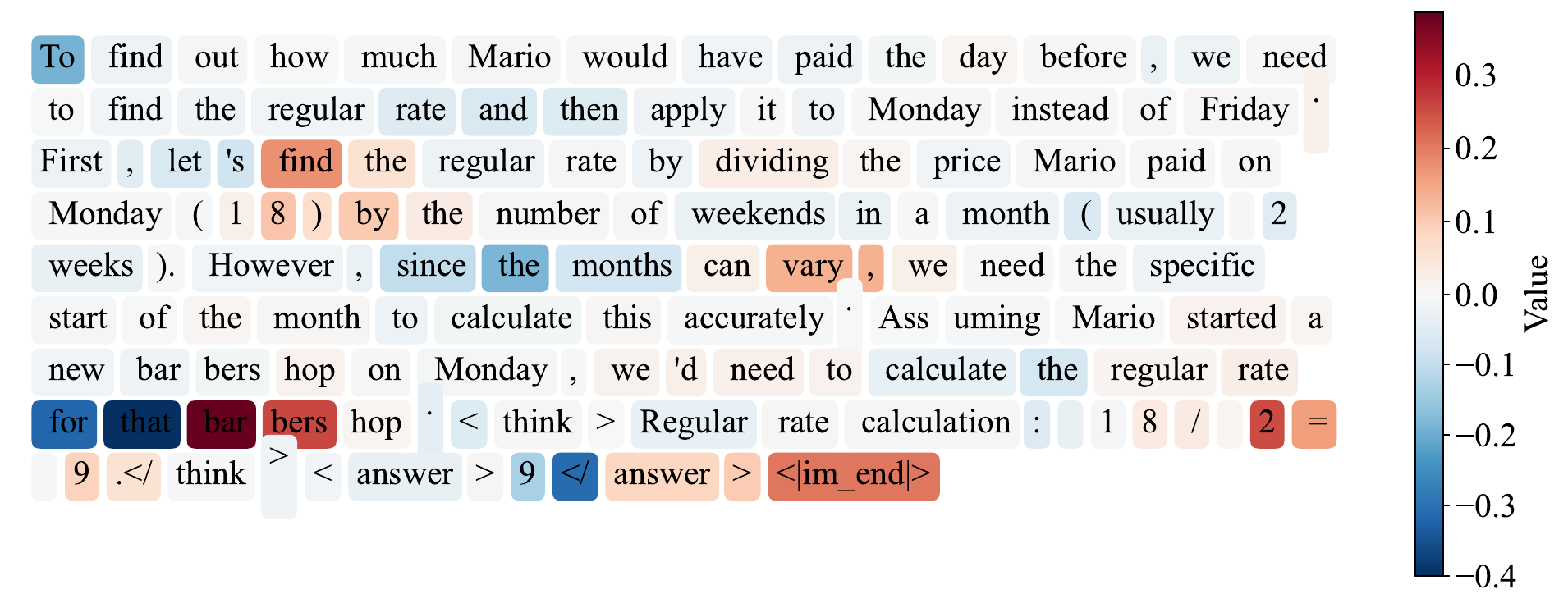}
        \caption{}
    \end{subfigure}
    
    \vspace{0.3cm}
    
    \begin{subfigure}[b]{1\textwidth}
        \centering
        \includegraphics[width=\textwidth]{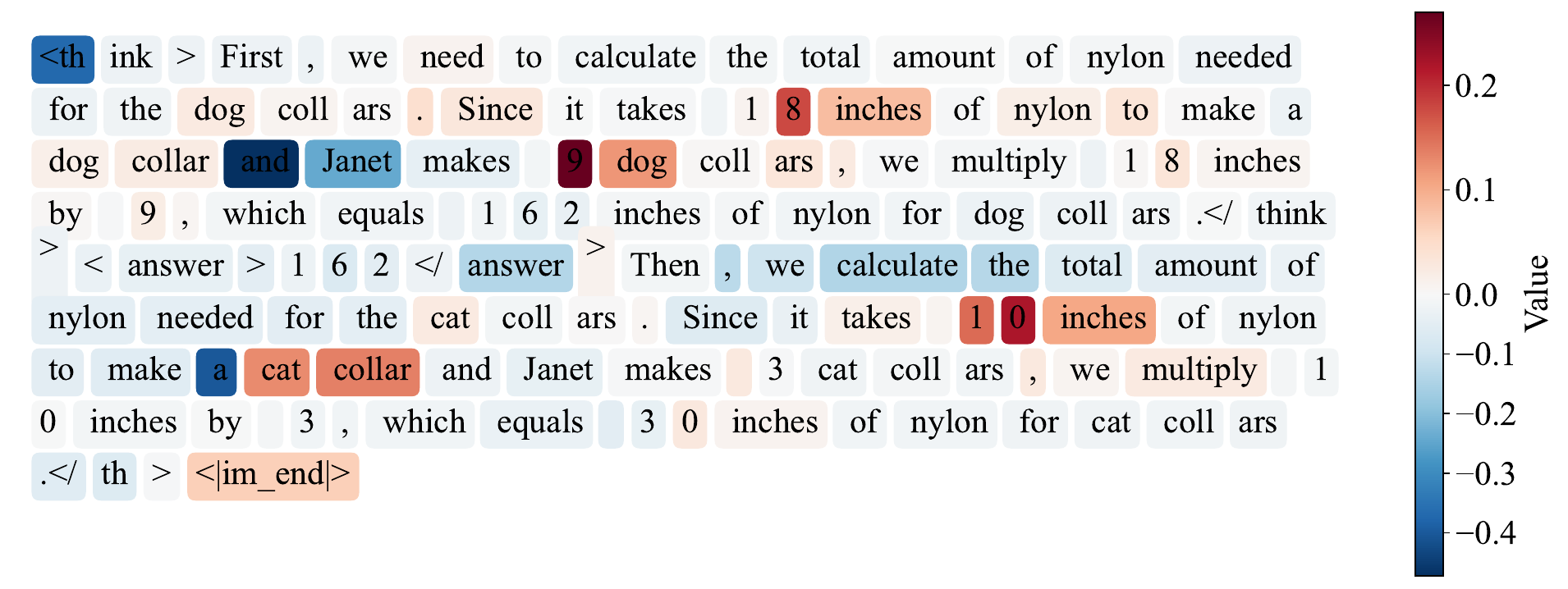}
        \caption{}
    \end{subfigure}
    
    \caption{Case study on the effectiveness of the conditional MI in identifying informative tokens. Each subplot is a reasoning completion generated by Qwen2.5-0.5B-Instruct on a question from the GSM8K dataset. (Part 12 of 13)}
\end{figure}

\begin{figure}[H]
    \ContinuedFloat
    \centering
    
    \begin{subfigure}[b]{1\textwidth}
        \centering
        \includegraphics[width=\textwidth]{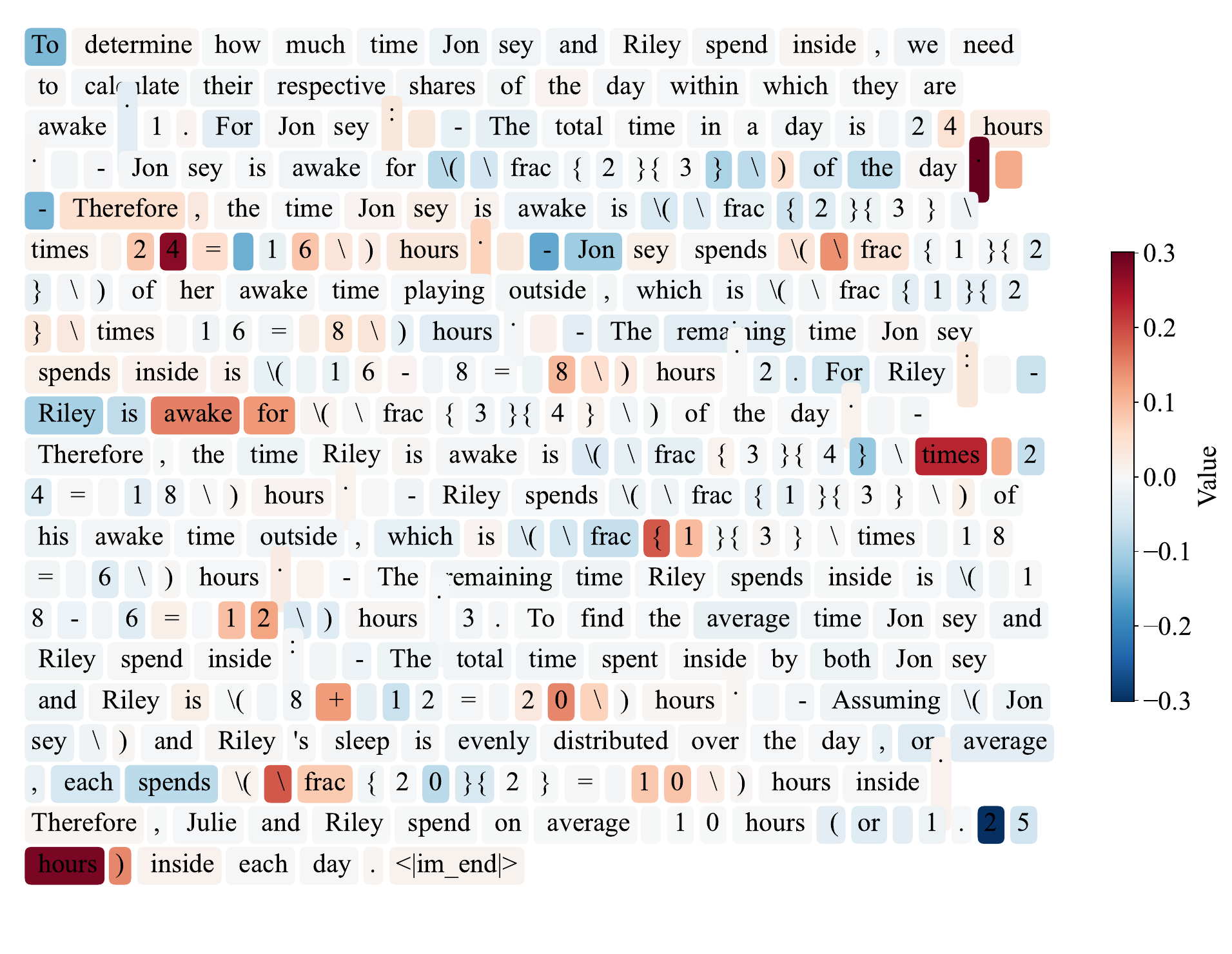}
        \caption{}
    \end{subfigure}
    
    \vspace{0.3cm}
    
    \begin{subfigure}[b]{1\textwidth}
        \centering
        \includegraphics[width=\textwidth]{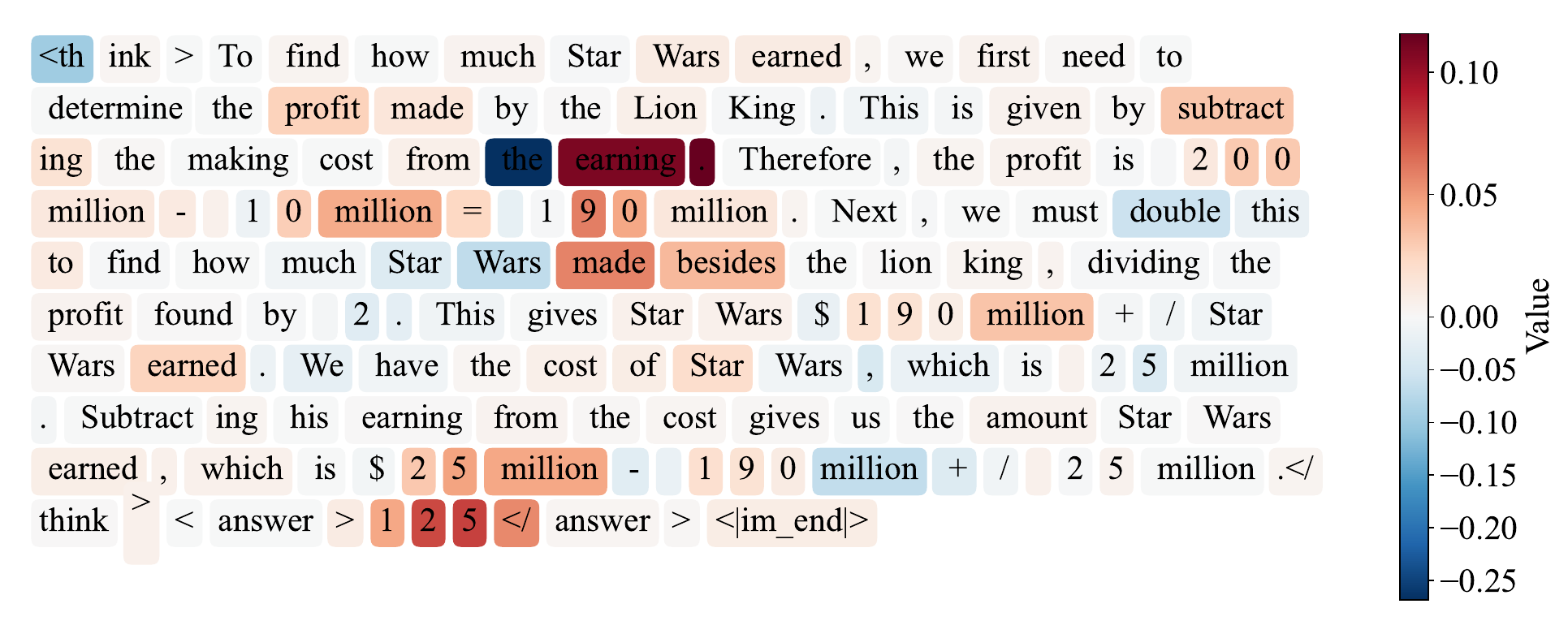}
        \caption{}
    \end{subfigure}
    
    \caption{Case study on the effectiveness of the conditional MI in identifying informative tokens. Each subplot is a reasoning completion generated by Qwen2.5-0.5B-Instruct on a question from the GSM8K dataset. (Part 13 of 13)}
\end{figure}

\begin{longtable}[H]{p{1cm}p{2.5cm}p{5cm}p{2.5cm}p{2.5cm}p{2.5cm}}
\caption{Comparison of completions across different methods}\label{tab: extended_case_studies} \\
\toprule
Problem & GFPO & GTPO & SGRPOLee & DAPO &  \texttt{IAPO} \\
\midrule
\endfirsthead

\multicolumn{6}{c}%
{\tablename\ \thetable\ -- \textit{Continued from previous page}} \\
\toprule
Problem & GFPO & GTPO & SGRPOLee & DAPO & \texttt{IAPO} \\
\midrule
\endhead

\midrule
\multicolumn{6}{r}{\textit{Continued on next page}} \\
\endfoot

\bottomrule
\endlastfoot

Robe problem & For the robe, it requires 2 bolts of blue fiber. Since the amount of white fiber needed is half that of the blue fiber, we calculate half of 2 bolts, which is 1 bolt. Adding the blue and white fibers together, we get 2 + 1 = 3 bolts in total. Answer: 3 & To determine the total number of bolts needed for the robe, we first look at the amount of each type of fiber required. The problem states that a robe requires 2 bolts of blue fiber and half that much white fiber. Since half of 2 is 1, the robe will require 1 bolt of white fiber. This means the total number of bolts needed is the sum of the bolts of blue fiber and the bolts of white fiber. So, 2 (blue fiber) + 1 (white fiber) = 3 bolts in total. The robe will need a total of 3 bolts of fiber. Answer: 3 & First, we calculate how much white fiber is needed, which is half the amount of blue fiber. Since the robe takes 2 bolts of blue fiber, the amount of white fiber needed is 2 / 2 = 1 bolt. The total amount of fiber needed is the sum of the blue and white fibers, so 2 + 1 = 3 bolts. Answer: 3 & First, we need to determine how much white fiber is required. Since the amount of white fiber is half of the blue fiber, we have 2/2 = 1 bolt of white fiber. Then, we add the bolts of blue fiber and white fiber together: 2 + 1 = 3 bolts in total. Answer: 3 & 2 divided by 2 is 1, so the total is 2 + 1 = 3. Answer: 3 \\

Duck eggs problem & First, we calculate the total number of eggs Janet has per day which is 16. Janet consumes 3 eggs for breakfast every day, leaving 16 - 3 = 13 eggs. Then she uses 4 eggs to bake muffins for her friends, leaving 13 - 4 = 9 eggs. Since she sells the remaining eggs at the farmers' market for \$2 each, she makes 9 * \$2 = \$18 per day from selling eggs. Answer: 18 & To find out how much money Janet makes every day at the farmers' market, let's first determine how many eggs she has left after consuming some for her personal use. Janet's ducks lay 16 eggs per day. She eats 3 eggs for breakfast, so the number of eggs left after breakfast is 16 - 3 = 13 eggs. She also uses 4 eggs to bake muffins for her friends, so the number of eggs left after baking is 13 - 4 = 9 eggs. These 9 eggs are what she sells at the farmers' market. Since each egg is sold for \$2, the amount of money she makes from selling 9 eggs is 9 * 2 = 18 dollars. Therefore, Janet makes \$18 every day at the farmers' market. Answer: 18 & Janet's ducks lay 16 eggs per day. She eats 3 eggs every morning for breakfast, so the number of eggs used for breakfast is 3. She also uses 4 eggs per day to bake muffins for her friends, so the number of eggs used for baking is 4. The total number of eggs used every day is 3 + 4 = 7 eggs. The number of eggs left to sell at the farmers' market is 16 - 7 = 9 eggs. She sells the remaining eggs at \$2 per egg, so the amount she makes from selling the eggs is 9 * 2 = 18 dollars. Answer: 18 & Janet's ducks lay 16 eggs per day. She eats 3 eggs every morning for breakfast and uses 4 eggs every day to bake muffins for her friends. In total, she uses 3 + 4 = 7 eggs every day. The number of eggs left to sell at the farmers' market is 16 - 7 = 9 eggs. She sells each egg for \$2, so the amount she makes every day at the farmers' market is 9 * \$2 = \$18. Answer: 18 & Janet gets 16 - 3 - 4 = 9 eggs per day. At \$2 per egg, she makes 9 * 2 = 18 dollars. Answer: 18 \\

Sprints problem & To find out how many total meters James runs in a week, we first determine how many meters he runs in one session. Since he runs 3 sprints each session and each sprint is 60 meters, he runs 3 * 60 = 180 meters per session. Since he does this 3 times a week, the total meters he runs in a week is 180 * 3 = 540 meters. Answer: 540 & Let's break down the problem step by step. First, we need to find out how many meters James runs in a single sprint session. Since each sprint is 60 meters and he does 3 sprints per session, the total distance he runs in one session is 3 * 60 = 180 meters. Next, we need to find out how many meters he runs in a week. Since he does these sprint sessions 3 times a week, the total distance he runs in a week is 180 * 3 = 540 meters. Therefore, James runs 540 meters in a week. Answer: 540 & First, we calculate the number of meters James runs in one session. He runs 3 sprints and each sprint is 60 meters, so in one session, he runs 3 * 60 = 180 meters. Then we calculate the total for the week. He runs 3 times a week, so the total meters he runs in a week is 180 * 3 = 540 meters. Answer: 540 & First, we calculate the total distance James runs in one sprint session. Since he runs 60 meters each sprint and does 3 sprints per session, the distance covered in one session is 60 * 3 = 180 meters. Next, since he does 3 sessions a week, the total distance he runs in a week is 180 * 3 = 540 meters. Answer: 540 & Each sprint is 60 meters, and he does 3 sprints 3 times a week, so 3 * 60 * 3 = 540 meters. Answer: 540 \\

Glasses problem & To solve this, we need to calculate the total cost of 16 glasses, where every second glass is at 60\% of the price. First, let's determine the cost of one pair of glasses: the first glass is \$5 and the second is 60\% of \$5, which is \$3. The total for one pair is \$5 + \$3 = \$8. Since Kylar is buying 16 glasses, which form 8 pairs, we multiply the cost of one pair by 8: \$8 * 8 = \$64. Answer: 64 & To determine the total cost of 16 glasses, we need to consider the pricing scheme. The first glass is priced at \$5, and every second glass is priced at 60\% of \$5, which is \$5 * 0.6 = \$3. Therefore, the cost of a pair (1 + 1) of glasses is \$5 + \$3 = \$8. Since Kylar is buying 16 glasses, we can imagine the sequence in pairs, and since 16 is an even number, there are exactly 8 pairs (because 16 / 2 = 8). So, for 8 pairs, the total cost is 8 * 8 = 64. Hence, Kylar needs to pay \$64 for 16 glasses. Answer: 64 & First, we need to determine the cost for one pair of glasses (two glasses). The first glass costs \$5 and the second glass costs 60\% of \$5, which is \$5 * 0.6 = \$3. So the total cost for one pair of glasses is \$5 + \$3 = \$8. Kylar wants to buy 16 glasses, which is 8 pairs of glasses since 16 / 2 = 8. The total cost for 8 pairs of glasses at \$8 each is 8 * \$8 = \$64. Answer: 64 & First, let's calculate the cost of one pair of glasses, where the second glass costs 60\% of \$5. So, the cost of the second glass is 0.60 * \$5 = \$3. Therefore, the cost of one pair of glasses is \$5 + \$3 = \$8. Since Kylar wants to buy 16 glasses, and he gets a discounted second glass for every pair, he will have 8 pairs of glasses. The total cost will be 8 * \$8 = \$64. Answer: 64 & Every second pair of glasses costs 0.6 * 5 = \$3. For 16 glasses, we can group them into 8 pairs, so 8 * (5 + 3) = 8 * 8 = 64. Answer: 64 \\

Chicken feed problem & First, we need to calculate how many cups of feed Wendi provides to each chicken in the morning and afternoon. In the morning, she gives 15 cups, and in the afternoon, she gives 25 cups. Since she feeds each chicken 15/20 = 0.75 cups in the morning, and 25/20 = 1.25 cups in the afternoon, the total amount each chicken gets in these two meals is 0.75 + 1.25 = 2 cups. Since each chicken should get 3 cups of feed per day, the final meal must make up the difference: 3 - 2 = 1 cup of feed per chicken. With 20 chickens, Wendi needs to provide 20 * 1 = 20 cups of feed in the final meal. Answer: 20 & Wendi gives her chickens 15 cups of feed in the morning and 25 cups of feed in the afternoon. First, we need to determine how many cups of feed each chicken gets per day. Since there are three feedings a day and each chicken should get 3 cups of feed per day, the total feed required for all 20 chickens per day is 20 * 3 = 60 cups. In the first two feedings (morning and afternoon), Wendi has already provided 15 + 25 = 40 cups of feed. To find out how much feed is needed for the final feeding, we subtract the amount already given from the total daily requirement: 60 - 40 = 20 cups. Therefore, Wendi needs to give her chickens 20 cups of feed in the final feeding of the day. Answer: 20 & First, we calculate the total amount of feed given to the chickens in a day. Wendi gives 15 cups in the morning and 25 cups in the afternoon, so the total for these two meals is 15 + 25 = 40 cups. Each chicken is given 3 cups of feed per day. Since there are three meals, the total daily feed needed is 3 * 20 * 3 = 180 cups. We already know the morning and afternoon feed totals, so the feed for the final meal is 180 - 40 = 140 cups. Answer: 140 & First, we need to calculate the total amount of feed that each chicken consumes daily. Since each chicken gets 3 cups of feed and there are 20 chickens, the daily requirement is 20 * 3 = 60 cups of feed. Wendi gives her chickens 15 cups in the morning and 25 cups in the afternoon, so the total feed given in these two meals is 15 + 25 = 40 cups. To find out how much feed is needed in the final meal, we subtract: 60 - 40 = 20 cups. Answer: 20 & First, calculate the total daily feed required: 20 * 3 = 60 cups. Then, the remaining feed is 60 - (15 + 25) = 60 - 40 = 20 cups. Answer: 20 \\

Lemon tree problem & To determine how many years it will take for Carlos to start earning money from his lemon tree, we first need to calculate the net income each year. Each year, the tree grows 7 lemons, and he sells them for \$1.5 each, resulting in a revenue of $7 \times 1.5 = 10.5$ dollars. The cost to water and feed the tree each year is \$3. Therefore, the net income each year is $10.5 - 3 = 7.5$ dollars. The initial cost to plant the tree is \$90. To find out how many years it will take to recover this cost, we divide: $90 / 7.5 = 12$. Thus, it will take 12 years. Answer: 12 & Let's break down the problem step by step. First, the initial cost of planting the lemon tree is \$90. Next, each year, the tree will grow 7 lemons, which Carlos can sell at \$1.5 per lemon. The cost to water and feed the tree each year is \$3. To determine how many years it will take before Carlos starts earning money, we need to find the point where his total earnings from selling lemons exceed the initial planting cost plus the yearly costs. The profit each year from selling lemons is 7 * 1.5 = 10.5 dollars. Each year, the net profit (after subtracting the yearly cost) is 10.5 - 3 = 7.5 dollars. Now, we need to find out how many years it will take for the total net profit to exceed the initial planting cost of \$90. We can set up the equation: 7.5 * n = 90, where n is the number of years. Solving for n, we get n = 90 / 7.5 = 12. Therefore, it will take 12 years for Carlos to start earning money on the lemon tree. Answer: 12 & The tree costs \$90 to plant. Each year the tree produces 7 lemons at \$1.5 each, so revenue is $7 \times 1.5 = 10.5$ dollars. Annual cost is \$3. Net profit per year is $10.5 - 3 = 7.5$ dollars. Years to break even: $90 / 7.5 = 12$ years. Answer: 12 & To determine how many years it will take for Carlos to start earning money on the lemon tree, we need to calculate the costs and income over time. Let's denote the number of years by $n$. Annual income from lemons: $7 \times 1.5 = 10.5$ dollars. Annual cost: \$3. Net annual profit: $7.5n - 90$. We want the smallest $n$ such that net profit is positive: $7.5n - 90 > 0$. Solving: $7.5n > 90$, thus $n > 12$. Since $n$ must be a whole number, we round up to 13. Therefore, it will take 13 years. Answer: 13 & First, we calculate the net profit: 7 * 1.5 - 3 = 10.5 - 3 = 7.5. To break even, we need 90 / 7.5 = 12 years. Answer: 12 \\

Sheep problem & To solve this problem, we start by determining the number of sheep in each city. If Seattle has 20 sheep, then Charleston has $4 \times 20 = 80$ sheep. Toulouse has $2 \times 80 = 160$ sheep. Total: $20 + 80 + 160 = 260$ sheep. Answer: 260 & Let's start by figuring out how many sheep each city has based on the given relationships. We know that Seattle has 20 sheep. The problem states that Charleston has 4 times as many sheep as Seattle. Since Seattle has 20 sheep, we can calculate the number of sheep in Charleston as 4 * 20 = 80 sheep. Next, we are told that Toulouse has twice as many sheep as Charleston. Since Charleston has 80 sheep, we can calculate the number of sheep in Toulouse as 2 * 80 = 160 sheep. Now, to find the total number of sheep in Toulouse, Charleston, and Seattle together, we add up the sheep from each city: 160 (Toulouse) + 80 (Charleston) + 20 (Seattle) = 260 sheep. Therefore, the total number of sheep in all three cities is 260. Answer: 260 & First, calculate Charleston: $4 \times 20 = 80$ sheep. Next, Toulouse: $2 \times 80 = 160$ sheep. Finally, sum: $160 + 80 + 20 = 260$ sheep. Answer: 260 & First, Charleston has $4 \times 20 = 80$ sheep. Next, Toulouse has $2 \times 80 = 160$ sheep. Now, we add: $160 + 80 + 20 = 260$ sheep. Answer: 260 & Seattle has 20 sheep. Charleston has 4 * 20 = 80 sheep. Toulouse has 2 * 80 = 160 sheep. Therefore, 20 + 80 + 160 = 260. Answer: 260 \\

Eliza's earnings & Eliza earns $40 \times 10 = 400$ dollars for first 40 hours. For 5 overtime hours at $10 \times 1.2 = 12$ dollars/hour, she earns $5 \times 12 = 60$ dollars. Total: $400 + 60 = 460$ dollars. Answer: 460 & Let's break down the problem step by step to calculate Eliza's total earnings. First, we need to determine her pay for the regular hours. Eliza's regular pay rate is \$10 per hour. She worked 45 hours this week, and the first 40 hours are considered regular hours, so for the first 40 hours, her pay is 40 * 10 = 400 dollars. Next, we calculate her overtime pay. Any hours worked beyond 40 are considered overtime, and the overtime pay rate is 1.2 times the regular rate. The overtime rate is therefore 10 * 1.2 = 12 dollars per hour. She worked 45 - 40 = 5 overtime hours. The pay for these 5 overtime hours is 5 * 12 = 60 dollars. Finally, we add her regular pay and overtime pay to find her total earnings: 400 + 60 = 460 dollars. Therefore, Eliza's total earnings for this week are 460 dollars. Answer: 460 & Eliza worked 45 hours. Regular pay for first 40 hours: $40 \times 10 = 400$ dollars. Remaining 5 hours are overtime at $1.2 \times 10 = 12$ dollars/hour. Overtime earnings: $5 \times 12 = 60$ dollars. Total: $400 + 60 = 460$ dollars. Answer: 460 & Eliza's regular pay: $40 \times 10 = 400$ dollars. Overtime hours: $45 - 40 = 5$ hours. Overtime rate: $1.2 \times 10 = 12$ dollars/hour. Overtime pay: $5 \times 12 = 60$ dollars. Total: $400 + 60 = 460$ dollars. Answer: 460 & Eliza's regular pay is 40 * \$10 = \$400. For the 5 hours of overtime, her pay is 5 * (10 * 1.2) = \$60. Thus, her total earnings are \$400 + \$60 = \$460. Answer: 460 \\

Pastries & Cost of 3 dozen donuts: $3 \times 68 = 204$ dollars. Cost of 2 dozen mini cupcakes: $2 \times 80 = 160$ dollars. Cost of 6 dozen mini cheesecakes: $6 \times 55 = 330$ dollars. Total: $204 + 160 + 330 = 694$ dollars. Answer: 694 & To find the total cost of the items Toula bought, we need to calculate the cost for each type of pastry and then sum these costs. First, we calculate the cost of the donuts. Toula bought 3 dozen donuts, and each dozen costs \$68. Therefore, the cost of the donuts is 3 dozen * \$68 per dozen = 3 * 68 = \$204. Next, we calculate the cost of the mini cupcakes. Toula bought 2 dozen mini cupcakes, and each dozen costs \$80. Thus, the cost of the mini cupcakes is 2 dozen * \$80 per dozen = 2 * 80 = \$160. Finally, we calculate the cost of the mini cheesecakes. Toula bought 6 dozen mini cheesecakes, and each dozen costs \$55. Therefore, the cost of the mini cheesecakes is 6 dozen * \$55 per dozen = 6 * 55 = \$330. To find the total cost, we add these amounts together: \$204 + \$160 + \$330 = \$694. Therefore, the total cost of the pastries is \$694. Answer: 694 & First, donuts: $3 \times 68 = 204$ dollars. Next, mini cupcakes: $2 \times 80 = 160$ dollars. Then, mini cheesecakes: $6 \times 55 = 330$ dollars. Total: $204 + 160 + 330 = 694$ dollars. Answer: 694 & First, donuts: $3 \times 68 = 204$ dollars. Next, mini cupcakes: $2 \times 80 = 160$ dollars. Then, mini cheesecakes: $6 \times 55 = 330$ dollars. Total: $204 + 160 + 330 = 694$ dollars. Answer: 694 & Calculating the cost for each item: 3 * 68 = 204, 2 * 80 = 160, and 6 * 55 = 330. The total cost is 204 + 160 + 330 = 694. Answer: 694 \\

Downloads & First month: 60 downloads. Second month: $3 \times 60 = 180$ downloads. Third month reduced by 30\%: $0.30 \times 180 = 54$, so $180 - 54 = 126$ downloads. Total: $60 + 180 + 126 = 366$ downloads. Answer: 366 & Let's break down the problem step by step. First, we know the number of downloads in the first month is 60. In the second month, the number of downloads is three times the first month's downloads. So, we calculate the downloads for the second month as 3 * 60 = 180. In the third month, the downloads are reduced by 30\%, meaning 70\% of the second month's downloads. To find 70\% of 180, we calculate 0.70 * 180 = 126. Therefore, the third month's downloads are 126. To find the total downloads over the three months, we sum the downloads from each month: 60 + 180 + 126. Adding these together, we get 60 + 180 + 126 = 366. Therefore, the total downloads over the three months are 366. Answer: 366 & First month: 60. Second month: $3 \times 60 = 180$. Third month: $180 - (30/100 \times 180) = 180 - 54 = 126$. Total: $60 + 180 + 126 = 366$. Answer: 366 & First month: 60 downloads. Second month: $3 \times 60 = 180$ downloads. Third month 30\% reduction: $0.30 \times 180 = 54$ downloads, so $180 - 54 = 126$ downloads. Total: $60 + 180 + 126 = 366$ downloads. Answer: 366 & In the second month, 60 * 3 = 180. In the third month, 180 - (180 * 0.3) = 180 - 54 = 126. Total downloads = 60 + 180 + 126 = 366. Answer: 366 \\

Dance class & Contemporary dance: $0.20 \times 20 = 4$ students. Remaining: $20 - 4 = 16$ students. Jazz dance: $0.25 \times 16 = 4$ students. Hip-hop: $16 - 4 = 12$ students. Percentage: $(12 / 20) \times 100 = 60\%$. Answer: 60 & To find the percentage of students in the hip-hop dance class, let's start by determining how many students are in each type of dance class. First, 20\% of the 20 students signed up for contemporary dance. To calculate this, we multiply 20 by 20\%, which is 20 * 0.20 = 4 students. This means 4 students are in the contemporary dance class. After signing up 4 students for contemporary dance, the number of students remaining is 20 - 4 = 16 students. Next, 25\% of the remaining 16 students signed up for jazz dance. To calculate this, we multiply 16 by 25\%, which is 16 * 0.25 = 4 students. This means 4 students are in the jazz dance class. After signing up 4 students for jazz dance, the number of students remaining for hip-hop is 16 - 4 = 12 students. To find the percentage of the total students who signed up for hip-hop, we divide the number of hip-hop students by the total number of students and multiply by 100. This is (12 / 20) * 100 = 60\%. Therefore, 60\% of the students signed up for hip-hop dance. Answer: 60 & Contemporary dance: $20 \times 0.2 = 4$ students. Remaining: $20 - 4 = 16$ students. Jazz dance: $16 \times 0.25 = 4$ students. Remaining: $16 - 4 = 12$ students (hip-hop). Percentage: $(12 / 20) \times 100 = 60\%$. Answer: 60 & Contemporary dance: $(20/100) \times 20 = 4$ students. Remaining: $20 - 4 = 16$ students. Jazz dance: $(25/100) \times 16 = 4$ students. Hip-hop: $16 - 4 = 12$ students. Percentage: $(12/20) \times 100 = 60\%$. Answer: 60 & First, 20\% of 20 students = 4, so 20 - 4 = 16. Then, 25\% of 16 = 4, so 16 - 4 = 12. Thus, 12 out of 20 students are in hip-hop, which is 12/20 = 60\%. Answer: 60 \\

File download & 40\% of 200 GB is 80 GB. At 2 GB/minute, this takes $80 / 2 = 40$ minutes. After restart (20 minutes), she downloads full 200 GB: $200 / 2 = 100$ minutes. Total: $40 + 20 + 100 = 160$ minutes. Answer: 160 & Carla is downloading a 200 GB file at a rate of 2 GB per minute. This means that the total download time without interruptions would be 200 GB / 2 GB/minute = 100 minutes. However, she encounters an interruption 40\% into the download. To find out how much data she has downloaded by 40\%, we calculate 40\% of 200 GB: 200 * 0.4 = 80 GB. Since she is downloading at a rate of 2 GB per minute, it would take her 80 GB / 2 GB/minute = 40 minutes to download 80 GB. At this point, Windows forces a restart, and the download process restarts from the beginning. After the restart, the total data to be downloaded is 200 GB again, and the download process has to start from the beginning. During the restart, it takes 20 minutes to download 80 GB. But since she is already at 40 minutes and the Windows restart takes 20 minutes, the download restarts and then needs to complete the remaining 120 GB (200 GB - 80 GB). The rate of download is still 2 GB/minute, so it takes her 120 GB / 2 GB/minute = 60 minutes to download the remaining 120 GB. Therefore, the total time taken to download the file is the sum of the initial 40 minutes, the restart time of 20 minutes, and the restart download time of 60 minutes, which is 40 + 20 + 60 = 120 minutes. So, the total time taken to download the 200 GB file is 120 minutes. Answer: 120 & File size: 200 GB. 40\% is $200 \times 0.4 = 80$ GB. Time to download 80 GB: $80 / 2 = 40$ minutes. Restart takes 20 minutes. Total until restart: $40 + 20 = 60$ minutes. Then she restarts from beginning, downloading full 200 GB: $200 / 2 = 100$ minutes. Total: $40 + 20 + 100 = 160$ minutes. Answer: 160 & 40\% of 200 GB is $0.4 \times 200 = 80$ GB. Time: $80 / 2 = 40$ minutes. After restart, she downloads full 200 GB: $200 / 2 = 100$ minutes. Plus 20 minutes for restart. Total: $40 + 20 + 100 = 160$ minutes. Answer: 160 & 40\% of 200 GB is 200 * 0.4 = 80 GB. 80 GB / 2 = 40 minutes. The remaining 200 - 80 = 120 GB takes 120 / 2 = 60 minutes. The total time is 40 + 20 + 60 = 120 minutes. Answer: 120 \\

Driving & John drives $3 \times 60 = 180$ miles away. Returning: 2 hours in traffic (0 miles), 0.5 hours at 30 mph ($0.5 \times 30 = 15$ miles), remaining time $4 - 2 - 0.5 = 1.5$ hours at 80 mph ($1.5 \times 80 = 120$ miles). Total return: $15 + 120 = 135$ miles. Distance from home: $180 - 135 = 45$ miles. Answer: 45 & John drives at a speed of 60 mph for 3 hours, so the distance he drives away from home is 60 * 3 = 180 miles. On the return trip, John spends 2 hours in standstill traffic, which means he doesn't cover any distance during this time. After the traffic, he drives for the remaining time at different speeds. The total time for the return trip is 4 hours. Since he spent 2 hours in traffic, he has 4 - 2 = 2 hours left. During the first part of the remaining time, John drives at 30 mph for half an hour (0.5 hours). The distance covered during this time is 30 * 0.5 = 15 miles. For the rest of the return trip, he drives at 80 mph. The remaining time after driving at 30 mph is 2 - 0.5 = 1.5 hours. The distance covered during this time is 80 * 1.5 = 120 miles. Therefore, the total distance John covers on the return trip is 0 (from the 2 hours in traffic) + 15 (from driving at 30 mph) + 120 (from driving at 80 mph) = 135 miles. Since John drove 180 miles away from home, and he covered 135 miles on the return trip, the distance he is from home is 180 - 135 = 45 miles. Answer: 45 & Initial distance: $3 \times 60 = 180$ miles. After traffic and 0.5 hours at 30 mph ($0.5 \times 30 = 15$ miles), he's $180 - 15 = 165$ miles away. Remaining time: $4 - 2 - 0.5 = 1.5$ hours at 80 mph: $80 \times 1.5 = 120$ miles. Total return: $15 + 120 = 135$ miles. Distance from home: $180 - 135 = 45$ miles. Answer: 45 & Initial distance: $3 \times 60 = 180$ miles. Return: 2 hours traffic (0 miles), 0.5 hours at 30 mph ($0.5 \times 30 = 15$ miles), 1.5 hours at 80 mph ($1.5 \times 80 = 120$ miles). Total return: $15 + 120 = 135$ miles. Distance from home: $180 - 135 = 45$ miles. Answer: 45 & First, John drives 3 * 60 = 180 miles. Turning around, he spends 2 * 0 = 0 miles (in standstill traffic), then 0.5 * 30 = 15 miles, and finally 3.5 * 80 = 280 miles. So, the total distance from home is 180 + 15 + 280 = 475 miles. Answer: 475 \\

House flipping & Purchase: \$80,000. Repairs: \$50,000. Total investment: \$130,000. Value increase: $80,000 \times 1.5 = 120,000$ dollars. New value: $80,000 + 120,000 = 200,000$ dollars. Profit: $200,000 - 130,000 = 70,000$ dollars. Answer: 70000 & Josh initially buys a house for \$80,000 and then spends an additional \$50,000 on repairs. The total cost for Josh is the sum of the house price and the repair cost. So, let's calculate the total cost: 80,000 + 50,000 = 130,000 dollars. Next, we need to find the gain in value from the repairs. The repair cost of \$50,000 increased the value of the house by 150\%. To find out the new value of the house after the 150\% increase in value, we first understand that a 150\% increase means the value is tripled (since 100\% + 150\% = 250\% of the repair cost, or 2.5 times the repair cost). Thus, the increase in value due to repairs is 50,000 * 1.5 = 75,000 dollars. Adding this increase to the original house purchase price of \$80,000, the new value of the house is 80,000 + 75,000 = 155,000 dollars. Josh needs to sell the house to make a profit. The profit is calculated by subtracting the total cost from the new value of the house: 155,000 - 130,000 = 25,000 dollars. Therefore, Josh made a profit of \$25,000. Answer: 25000 & Total spent: $80,000 + 50,000 = 130,000$ dollars. Value increase: $80,000 \times 1.5 = 120,000$ dollars. New value: $80,000 + 120,000 = 200,000$ dollars. Profit: $200,000 - 130,000 = 70,000$ dollars. Answer: 70000 & Initial cost before repairs: \$80,000. Repairs: \$50,000. Total cost: \$130,000. Increase of 150\% means value goes up by $1.5 \times 130,000 = 195,000$ dollars. New value: $130,000 + 195,000 = 325,000$ dollars. Profit: $325,000 - 130,000 = 195,000$ dollars. Answer: 195000 & First, calculate the increased value: \$80,000 * 1.5 = \$120,000. The new value of the house is \$80,000 + \$50,000 = \$130,000. The profit is \$130,000 - (\$80,000 + \$50,000) = \$130,000 - \$130,000 = \$0. Answer: 0 \\

Vacuum cleaners & Let $x$ be total vacuum cleaners. Sold $x/3$ at green house. Remaining: $2x/3$. Sold 2 more at red house: $2x/3 - 2$ left. Sold half at orange house, leaving: $(1/2)(2x/3 - 2) = 5$. Solving: $2x/3 - 2 = 10$, $2x/3 = 12$, $x = 18$. Answer: 18 & Melanie sold some of her vacuum cleaners and has 5 left. Let's break down her sales process step-by-step. Suppose Melanie started with x vacuum cleaners. First, she sold a third of the vacuum cleaners at the green house, which means she sold x/3 vacuum cleaners there. After selling at the green house, she had x - x/3 = 2x/3 vacuum cleaners left. Next, she sold 2 more to the red house, so she then had 2x/3 - 2 vacuum cleaners left. Finally, she sold half of what was left at the orange house, which means she sold (1/2)(2x/3 - 2) vacuum cleaners there. Therefore, the number of vacuum cleaners she has left after the orange house sale is (1/2)(2x/3 - 2). We know that after the sales at the green, red, and orange houses, she has 5 vacuum cleaners left. So, we can set up the equation: (1/2)(2x/3 - 2) = 5. To solve for x, we can multiply both sides by 2 to clear the fraction: 2x/3 - 2 = 10. Next, we add 2 to both sides to isolate the term with x: 2x/3 = 12. Then, we multiply both sides by 3/2 to solve for x: x = 12 * 3/2 = 18. So, Melanie started with 18 vacuum cleaners. We can verify this: if she started with 18, after selling a third at the green house, she sold 18/3 = 6 vacuum cleaners, leaving her with 12. Then, she sold 2 more to the red house, leaving her with 12 - 2 = 10. Lastly, she sold half of the 10 left at the orange house, which is 5, leaving her with 5. This confirms our solution. Therefore, Melanie started with 18 vacuum cleaners. Answer: 18 & Let $x$ be total. Sold $x/3$ at green house. After green house: $2x/3$ left. After red house: $2x/3 - 2$ left. Sold half at orange house: $(1/2)(2x/3 - 2)$. After orange house: $2x/3 - 2 - (1/2)(2x/3 - 2) = 5$. Simplifying: $2x/3 - 2 - (2x/6 - 1) = 5$, $2x/3 - 2 - x/3 + 1 = 5$, $x/3 - 1 = 5$, $x/3 = 6$, $x = 18$. Answer: 18 & Let $x$ be total. Sold $x/3$ at green house. After green: $2x/3$ left. After red: $2x/3 - 2$ left. Sold half at orange: $(1/2)(2x/3 - 2) = x/3 - 1$. After orange: $2x/3 - 2 - (x/3 - 1) = x/3 - 1 = 5$. Solving: $x/3 = 6$, $x = 18$. Answer: 18 & Let's denote the total number of vacuum cleaners by $x$. She sold $\frac{x}{3}$ at the green house, and then 2 more to the red house, leaving $\frac{2x}{3} - 2$. The remaining $\frac{1}{2} \left( \frac{2x}{3} - 2 \right) = 5$, simplifying to $\frac{x}{3} - 1 = 5$. So, $\frac{x}{3} = 6$ and $x = 18$. Answer: 18 \\

\end{longtable}

\end{document}